\definecolor{BeaBlue}{RGB}{71,127,124}
\definecolor{BeaRed}{RGB}{210, 77, 4}
\newcommand{\rsp}[1]{#1}
\newtcolorbox{pinkbox}{
  colback=Dandelion!30, colframe=Dandelion!60!purple,
  before upper={\parskip=0pt \abovedisplayskip=0pt \abovedisplayshortskip=0pt}
}
\newtcolorbox{bluebox}{
  colback=blue!10, colframe=blue!60!cyan,
  before upper={\parskip=0pt \abovedisplayskip=0pt \abovedisplayshortskip=0pt}
}
\newtcolorbox{propbox}{
  colback=Emerald!5!white,
  colframe=Emerald!40!black,
  boxrule=0.5pt,
  arc=3pt,
  left=3pt,
  right=3pt,
  top=0pt,
  bottom=0pt
}
\title{Generative Modeling from Black-box Corruptions via Self-Consistent Stochastic Interpolants\protect\footnotemark[1]}
\author{
Chirag Modi\textsuperscript{1$\dagger$}, Jiequn Han\textsuperscript{2$\dagger$}, Eric Vanden-Eijnden\textsuperscript{3,1}, Joan Bruna\textsuperscript{1,2}\\[4pt]
\textsuperscript{1}New York University\\
\textsuperscript{2}Flatiron Institute\\
\textsuperscript{3}Machine Learning Lab, Capital Fund Management
}
\begin{document}
\maketitle

\begingroup
\renewcommand\thefootnote{\fnsymbol{footnote}}
\footnotetext[1]{Accepted at the International Conference on Learning Representations (ICLR) 2026.}
\footnotetext[2]{Equal contribution.}
\endgroup

\begin{abstract}
    
Transport-based methods have emerged as a leading paradigm for building generative models from large, clean datasets. However, in many scientific and engineering domains, clean data are often unavailable: instead, we only observe measurements corrupted through a noisy, ill-conditioned channel. A generative model for the original data thus requires solving an inverse problem at the level of distributions. In this work, we introduce a novel approach to this task based on Stochastic Interpolants: we iteratively update a transport map between corrupted and clean data samples using only access to the corrupted dataset as well as black box access to the corruption channel. Under appropriate conditions, this iterative procedure converges towards a self-consistent transport map that effectively inverts the corruption channel, thus enabling a generative model for the clean data.
We refer to the resulting method as the self-consistent stochastic interpolant (SCSI). 
It (i) is computationally efficient compared to variational alternatives, (ii) highly flexible, handling arbitrary nonlinear forward models with only black-box access, and (iii) enjoys theoretical guarantees. We demonstrate superior performance on inverse problems in natural image processing and scientific reconstruction, and establish convergence guarantees of the scheme under appropriate assumptions. Our source code is publicly available at {\color{blue}\url{https://github.com/modichirag/SCSI}}.

\end{abstract}

\setcounter{tocdepth}{2}
\tableofcontents

\section{Introduction}
Generative modeling has become a central aspect of high-dimensional learning. Transport-based methods, including diffusion-based models~\citep{ho2020denoising,song2021scorebased} and flow-based models~\citep{albergo2023building,lipman2022flow,liu2023flow}, have emerged as leading frameworks 
with a wide range of applications from natural image synthesis~\citep{rombach2022high} to molecular design~\citep{watson2023novo}. These methods rely on access to clean samples $x \sim \pi$ of the target distribution, which are plentiful in many machine learning tasks.
However, in many scientific and engineering applications, such clean data of interest are unavailable. Instead, we only observe corrupted measurements $y$ through a \emph{forward} map $y = \mathcal{F}(x)$ that is typically noisy and ill-conditioned. Examples include medical imaging (tomographic projections of internal structures), astronomical observation (atmospheric distortion), and other measurement processes subject to noise and information loss~\citep{tarantola2005inverse}. As a result, the target data $x$ is never observed directly, rendering standard generative modeling inapplicable.

\rsp{To further motivate the problem, consider an experiment that produces iid measurements $y_1, \ldots , y_N \in \mathcal{Y}$, each arising from some unobserved signal of interest $x_1, \ldots , x_N \in \mathcal{X}$. Scientists often have an accurate physical model, or \emph{simulator} $\mathcal{F}$,
whose output $y = \mathcal{F}(x)$ can be viewed as a sample from the conditional distribution $P(\dd y|x)$, and the goal is to infer the signal associated with a specific measurement $\bar{y}$, i.e., to sample $P(\dd x|y=\bar{y})$. Computing this posterior distribution requires a \emph{prior} distribution $\pi \in \mathcal{P}(\mathcal{X})$, which is often not explicitly known. However, we can still characterize it implicitly: denoting by $\mu \in \mathcal{P}(\mathcal{Y})$ the law of the observations $y_i$, and by $\mathcal{K}: \mathcal{P}(\mathcal{X}) \to \mathcal{P}(\mathcal{Y})$ the integral operator associated with the simulator $\mathcal{F}$, i.e., $  \mathcal{K}\pi = \int_{\mathcal{X}} P(\cdot| x)\pi(\dd x)$, our desired prior distribution is the solution of the \emph{inverse generative problem} $\mathcal{K}\pi = \mu$.  
}

\rsp{A classical framework to address this scenario is Empirical Bayes, dating back to \cite{robbins1992empirical}. 
The key idea is to consider an iterative procedure, based on the expectation-maximization (EM) algorithm:  
given the current `prior' $\pi^{(k)}$, one considers the posterior $P^{(k)}( \dd x|y) \propto  P( \dd y | x) \pi^{(k)}(\dd x)$, and performs (approximate) posterior sampling $x_i^{(k)} \sim P^{(k)}(\cdot| y_i)$. The samples $(x_i^{(k)})_{i \leq N}$ are then used to estimate the new prior $\pi^{(k+1)}$, for instance using diffusion models, as in \cite{rozet2024learning,bai2024expectation}. 
Despite its conceptual appeal, this Empirical Bayes approach presents two important challenges: i) one must know and be able to evaluate the likelihood term $P(\dd y|x)$ associated with the forward model (having a simulator that merely produces samples from $P( \dd y|x)$ does not guarantee this), and ii) the EM algorithm relies on a posterior sampling which is known to be computationally hard and whose sampling errors are generally difficult to quantify. 
As a result, one often needs to impose further restrictions on the forward model, such as linearity with Gaussian noise to enable Tweedie’s formula for approximate posterior sampling~\citep{daras2023ambient,rozet2024learning}.}

We introduce a novel framework for this inverse generative modeling task, using only a \emph{black-box simulator} of the forward process $\mathcal{F}$. Our approach leverages stochastic interpolants (SI)~\citep{albergo2023building,albergo2023stochastic} with a self-consistent training procedure: we iteratively transport observations $y$ to samples $x$ via a learned velocity field, then enforce consistency by requiring that these generated samples, when passed through $\mathcal{F}$, reproduce the original observation distribution. 
This scheme not only eliminates the need for clean data, but also avoids backpropagation or posterior sampling through $\mathcal{F}$, and enjoys provable convergence guarantees. 
As a result, our framework is applicable to nonlinear forward models (e.g., motion blur), non-differentiable operators (e.g., JPEG compression), and non-Gaussian noise (e.g., Poisson noise), substantially broadening the applicability of inverse generative modeling. Conceptually, this makes our approach akin to \emph{model-free} reinforcement learning, which optimizes policies through interaction with a simulator, whereas most prior methods resemble \emph{model-based} control, relying on explicit knowledge and differentiability of the underlying physics.

\paragraph{Additional related works} 
Several recent works have aimed to generate clean data $x$ using only corrupted observations $y$. Most existing approaches, however, require the forward model to be explicitly specified and differentiable, often with additional structural assumptions. For example, \cite{daras2023ambient, kawar2024gsurebased, chen2025denoising,zhang2025restoration} train diffusion models with corrupted data under explicit linear forward models and additional rank condition. \cite{akyildiz2025efficient} learns a generative prior by directly minimizing the sliced-Wasserstein-2 distance between observed data and model outputs. 
There is a growing interest in generative models trained on mixtures of clean and noisy data which assume access to some clean samples~\citep{daras2024consistent,daras2025ambient,lu2025stochastic, meanti2025unsupervised}. 
While in this work we only work with corrupted observations, our approach can straightforwardly be applied to these mixed settings.
On the theoretical side, \cite{li2024stochastic, li2025inverse} study inverse problems over measure spaces, analyzing stability, variational structures, and gradient flows. Our work complements these by introducing a practical and scalable algorithmic framework while also establishing convergence guarantees under appropriate assumptions.

Finally, concurrently and independently from our work, \cite{hosseintabar2025diffem} proposes an EM-based approach to inverse generative modeling using conditional diffusion models, which overcomes the aforementioned limitations of the EM algorithm. Their method can be viewed as an instance of our proposed scheme, whereby the Stochastic Interpolation is conditional, so that it directly models the posteriors $P^{(k)}(\dd x | y)$.

\section{Preliminaries}

\subsection{Stochastic Interpolants}
\label{sec:SIback}
Let $\pi$ be the clean data distribution we wish to sample from and $\mu$ be the distribution of the corrupted data that are available to us, both supported on $\R^d$. Following \cite{albergo2023stochastic}, a linear stochastic interpolant $I_t$ between $\pi$ and $\mu$ is defined by
\begin{equation}
\label{eq:SI}
    I_t = \alpha_t x_0 + \beta_t x_1 + \gamma_t z, \quad t\in[0, 1],
\end{equation}
where $(x_0, x_1)$ is sampled from a joint distribution (or coupling) $\nu(\dd x_0, \dd x_1)$ that maintains the marginals $\int_{\R^d} \nu(\cdot, \dd x_1) = \pi$, $ \int_{\R^d} \nu(\dd x_0, \cdot) = \mu$, and
$z \sim \gamma_d$ is independent Gaussian noise. The schedules $\alpha_t, \beta_t, \gamma_t$ satisfy boundary conditions $\alpha_0 = \beta_1 = 1$, $\alpha_1 = \beta_0 = 0$, and $\gamma_0 = \gamma_1 = 0$. Define the velocity fields 
$$b(t, x) \coloneqq \E[\dot{I}_t|I_t=x ]\quad ,\quad g(t, x) \coloneqq \E[z|I_t=x ].
$$ 
Then the solutions to the following reverse-time SDE
\begin{equation}
    \label{eq:reverse_SDE}
    \dd {X}^B_t = b(t, X^B_t)\dd t + \epsilon_t\gamma_t^{-1} g(t, X^B_t)\dd t + \sqrt{2\epsilon_t} \dd W^B_t
\end{equation}
have the property that if $X_{t=1}^B \sim \mu$ is independent of $W^B$, then $X_{t=0}^B \sim \pi$. Here $\epsilon_t \in C^0([0, 1])$ with $\epsilon_t\geq 0$ is an arbitrary time-dependent diffusion coefficient, and $W_t^B = -W_{1-t}$. We note~\eqref{eq:reverse_SDE} is closely related to the SDE in score-based diffusion models: specifically, when $\gamma_t > 0$, we have $s(t,x):=\nabla_x \log \rho(t, x) = -\gamma^{-1}_t\, g(t, x)$, where $\rho(t, x)$ denotes the probability density of $I_t$.
From now on, for simplicity, we will assume a fixed (i.e., time-independent) diffusion coefficient $\epsilon$. 
Note that setting $\epsilon=0$ recovers the probability flow ODE which only depends on the field $b$. 
The vector fields $b$ and $g$ can be learned efficiently in practice by solving  least-squares regression problems:
\begin{align}
    &b = \argmin_{\hat{b}} \mathcal{E}_{\pi, \mu}^b( \hat{b}) \qquad \text{with} \quad \mathcal{E}_{\pi, \mu}^b( \hat{b}):= \int_0^1 \E[ |\hat{b}(t, I_t) - \dot{I}_t|^2]\,\dd t \label{eq:standard_drift_loss} ~, \\[-0pt]
        &g = \argmin_{\hat{g}} \mathcal{E}_{\pi, \mu}^g( \hat{g}) \qquad \text{with} \quad \mathcal{E}_{\pi, \mu}^g( \hat{g}) :=\int_0^1 \E[ |\hat{g}(t, I_t) - z|^2]\,\dd t. \label{eq:standard_denoiser_loss}
\end{align}
where $\E$ denotes an expectation over the coupling $(x_0,x_1)\sim \nu$ and $z$.

Learning the velocity field $b$ and the denoiser $g$ separately allows one to generate samples via the reverse-time SDE~\eqref{eq:reverse_SDE} with an arbitrary diffusion coefficient schedule $\epsilon_t$. Alternatively, by fixing a specific $\epsilon_t$, one can instead learn the \emph{combined drift} required in~\eqref{eq:reverse_SDE} directly through a single least-squares regression problem:
\begin{align}
v = \argmin_{\hat{v}} \int_0^1 \E\left[|\hat{v}(t, I_t) - \dot{I}_t - \epsilon_t \gamma_t^{-1} z|^2\right]\dd t. \label{eq:combined_drift_loss}
\end{align}

\paragraph{Notation}
To simplify and unify notation, we use $\Theta$ to denote the required functions for generative modeling: $\Theta = \{b\}$ in the ODE case, and $\Theta = \{b, g\}$ in the SDE case. Let $\Phi_{\Theta}$ denote the backward transport map induced by $\Theta$; that is, $\Phi_{\Theta}(y)=X_0$ under backward probability flow ODE with terminal condition $X_1 = y$ or $\Phi_{\Theta}(y)=X_0^B$ under reverse-time SDE~\eqref{eq:reverse_SDE} with terminal condition $X_1^B = y$. Accordingly, such a transport map induces a pushforward from the observation distribution $\mu$ to the clean data distribution, denoted by $\pi_{\Theta} \coloneqq (\Phi_{\Theta})_\#\mu$. Note that in the SDE case, $\Phi_{\Theta}$ is a random map due to the Brownian motion, and the pushforward should be interpreted as the expected pushforward, i.e., averaging over the randomness of the Brownian motion. Finally, recall that in \eqref{eq:standard_drift_loss}, we use $\mathcal{E}_{\pi, \mu}^b(\hat{b})$ to denote the loss associated with a candidate drift function $\hat{b}$, defined with respect to the SI between $\pi$ and $\mu$. Similarly, the objective $\mathcal{E}_{\pi, \mu}^g(\hat{g})$ for the denoiser is defined in \eqref{eq:standard_denoiser_loss}.

\subsection{Problem Setup}
\label{sec:setup}

We consider a probability distribution of interest $\pi \in \mathcal{P}(\mathcal{X})$ and a forward model $\mathcal{F}: \mathcal{X} \to \mathcal{Y}$, which we allow to be stochastic, i.e., $y = \mathcal{F}(x)$ defines a conditional distribution of $y$ given $x$. \rsp{In many applications,  $\mathcal{Y}$ embeds naturally into $\mathcal{X}$ without information loss, allowing us to identify the two; all numerical experiments in this paper fall into this category. For the general case, we can always construct a common state space via a lifting argument: let $\Omega := \mathcal{X} \times \mathcal{Y}$, and define $\tilde{\mathcal{F}}(x, y) \coloneqq (w, \mathcal{F}(x))$, where $w$ is drawn independently from an arbitrary base measure $\pi_0 \in \mathcal{P}(\mathcal{X})$. If $\tilde{\mathcal{K}}$ denotes the integral operator associated with $\tilde{\mathcal{F}}$, then for any $\nu \in \mathcal{P}(\Omega)$,
we have $ \tilde{\mathcal{K}} \nu = \pi_0 \otimes \mathcal{K} \nu_{\mathcal{X}}$, where $\nu_{\mathcal{X}}$ is the marginal of $\nu$ on $\mathcal{X}$. Thus, if $\tilde{\pi} \in \mathcal{P}(\Omega)$ solves $\tilde{\mathcal{K}} \tilde{\pi} = \tilde{\mu}$ with $\tilde{\mu} = \pi_0 \otimes \mu$, then $\tilde{\pi}_{\mathcal{X}}$ solves the original inverse problem $\mathcal{K} \tilde{\pi}_{\mathcal{X}} = \mu$. In the embedding case, we identify $\mathcal{Y}$ with its image in $\mathcal{X}$ and write $\mathcal{F}: \mathcal{X} \to \mathcal{X}$; in the general case, we replace $\mathcal{X}$ with $\Omega$ and proceed analogously. Either way, we henceforth assume $\mathcal{F}: \mathcal{X} \to \mathcal{X}$.}

\paragraph{\rsp{Injectivity at distribution level}} The forward model $\mathcal{F}$ is often ill-conditioned, non-deterministic (and therefore non-invertible) as a mapping in $\mathcal{X}$, thus justifying the need to regularize the inverse problem of recovering $x$ from the observations $y=\mathcal{F}(x)$. However, the situation is different when viewed at the level of probability measures $\mathcal{P}(\mathcal{X})$: as soon as $\mathcal{K}$ is \emph{injective} in $\mathcal{P}(\mathcal{X})$, i.e., $\mathcal{K} \pi = \mathcal{K} \tilde{\pi}$ imples $\pi = \tilde{\pi}$, one can hope to recover $\pi$ from $\mu$ by inverting the linear relationship $\mu = \mathcal{K} \pi$. To illustrate this point, consider the white gaussian noise (AWGN) channel $y = x + \sigma \xi$, with $\xi \sim \gamma_d \equiv  \mathcal{N}(0,\mathrm{I}_d)$: while the optimum reconstruction at the level of the samples (in the MSE sense) is given by the posterior mean $\hat{x}=\E[ x | \mathcal{F}(x)]$, and generically we always have reconstruction error $\E \| x - \hat{x} \|^2 > 0$, the associated inverse problem at the level of distributions amounts to a deconvolution, i.e., $\mu = \pi \ast \gamma_{\sigma}$, which is invertible for any noise level $\sigma$ --- and thus $\pi$ can in principle be reconstructed \emph{exactly}. Here are a few examples of injective channels:

\begin{example}
\rsp{
The AWGN Channel: $\mathcal{F}(x) = x + \sigma \xi$, $ \xi \sim \gamma_d$ is injective for any $\sigma < \infty$.
}
\end{example}

\begin{example}[Random Projection Channel]
\rsp{
    Let $\mathcal{S}(d, r)$ be the Stiefel manifold of rank $r$ unitary matrices in $\R^d$, and consider a distribution $\rho \in \mathcal{P}( \mathcal{S}(d, r) )$  such that $\text{span} \left\{ A; A \in \text{supp}(\rho) \right\} = \R^d$. 
 Then the random projection channel $\mathcal{F}(x) = (A^\top x, A) \in \R^r \times \mathcal{S}(d, r)$ is injective. This channel captures applications such as tomography or inpainting.
 }
\end{example}
\begin{remark}[Channel composition preserves injectivity]
\rsp{
We immediately verify that if $\mathcal{F}_1$ and $\mathcal{F}_2$ are two injective channels, their composition $\mathcal{F}_1 \circ \mathcal{F}_2$ is also injective. Therefore, channels that combine random projections and additive noise are also injective, such as Cryo-EM.}
\end{remark}

\paragraph{Restoration, Generation and Inference}
\rsp{
Assuming that we have sampling access to $\mu$, our goal is to find a transport map $\hat{\Phi}$ that pushes $\mu$ to a distribution $\hat{\pi} = \hat{\Phi}_\# \mu$ satisfying
\begin{propbox}
\begin{equation}
\label{eq:basic}
\mathcal{K} \hat{\pi} = \mathcal{K} \hat{\Phi}_\#  \mu = \mu,
\end{equation}
\end{propbox}
\noindent  i.e., $\hat{\pi}$ solves the linear inverse problem. 
Note that in practice our samples from $\mu$ come from a dataset of observations $\{ y_i \}_i$, $y_i \sim \mu$, so our method pushes a dataset of `corrupted' observations into another dataset $\{ \hat{\Phi}(y_i)\}_i$ of samples from $\hat{\pi}$. In that sense, our model is \emph{restoring} a dataset. 
In order to obtain a \emph{bona fide} generative model for $\pi$, we thus need the additional ability to generate further independent samples. This can be easily achieved by combining our method with a standard generative model: either train a generative model for $\mu$ using the original corrupted samples $\{ y_i \}_i$ and then restore its outputs via the transport map $\hat{\Phi}$, or train a generative model for $\pi$ directly on the restored dataset.

Further, the ultimate goal is often to perform posterior inference, requiring us to sample from the posterior distribution $P(dx | y)$ rather than its marginal $\pi$. Again, the natural solution is to compose our transport map with a standard conditional generative modeling procedure. Once we have sample access to the marginal $\pi$, we can produce samples from the joint distribution $(x, \mathcal{F}(x))$, which can then be used to train a conditional transport model, as in e.g.,~\cite{dax2023flow,zhou2024denoising,chen2024probabilistic}.

To summarize, our focus will be on the inverse problem \eqref{eq:basic} at the distribution level, since it unlocks key applications when combined with standard downstream transport-based generative procedures.
}

\section{Self-Consistent Stochastic Interpolants}

In the standard generative modeling setting with direct access to clean data samples $x_0 \sim \pi$ and corrupted samples $x_1 \sim \mu$, one may use, for example, the independent coupling $\nu(\dd x_0, \dd x_1) = \pi(\dd x_0)\mu(\dd x_1)$ to construct a Monte Carlo approximation of the expectation in the objective~\eqref{eq:standard_drift_loss}\eqref{eq:standard_denoiser_loss}. \emph{However, in our inverse problem setting, we only observe corrupted data from $\mu$ and lack access to clean samples from $\pi$.} So it is \textit{a~priori} not obvious how to construct the SI~\eqref{eq:SI}. We now describe how to construct and train a self-consistent SI using only black-box access to the forward map $\mathcal{F}$.

\subsection{Iterative Scheme for Self-Consistency}
\label{sec:iteration}
Eq (\ref{eq:SI}) provides a natural transport between the observed and clean distribution, but is actionable only when one has sample access to both $\pi$ and $\mu$. 
Observe first that if we replace the sample access of $\mu$ by oracle access to the simulator $\mathcal{F}$, we could still build a transport from $\pi$ to $\mu$ by leveraging the fact that $\mu = \mathcal{K} \pi$. Indeed, 
\begin{equation}
\label{eq:aux_SI}
    I_t = \alpha_t x + \beta_t \mathcal{F}(x) + \gamma_t z, \quad t \in [0, 1], \quad x \sim \pi, \; z \sim \gamma_d, \; x \perp z,
\end{equation}
defines a valid interpolation between $\pi$ and $\mu$, and can be directly sampled for training the optimal vector functions $\Theta^*$. For a generic measure $\rho$ replacing $\pi$ in (\ref{eq:aux_SI}), we denote by $b_\rho$ and $g_\rho$ their associated velocity and denoiser fields. 

Observe that the associated backward transport map $\Phi^*:=\Phi_{\Theta^*}$ pushes observations from $\mu$ toward clean samples from $\pi$, effectively defining a  \emph{local inverse}, in the sense that $\mathcal{K}\Phi^*_\# \mu = \mu$\footnote{In contrast to a \emph{global inverse}, which would require $\mathcal{K} \Phi_\# \nu = \nu$ for \emph{all} $\nu \in \mathcal{P}(\mathcal{X})$, a much stronger condition.}. In other words, $\Phi^*$ specifies a \emph{self-consistency} condition in the space of observation measures; see Fig. \ref{fig:schematic}. 

However, there is a crucial difference in our setup: rather than accessing $\{ \pi, \mathcal{F} \}$, we instead have acccess to $\{ \mu, \mathcal{F} \}$. 
The key idea is to turn the self-consistency equation $\mathcal{K} \Phi_\#^* \mu = \mu$ into a procedure that adjusts $\Theta$ to push $\mathcal{K} (\Phi_\Theta)_\# \mu$ back to $\mu$.
We use SIs to connect each of these two distributions to a common `empirical prior'\footnote{In the spirit of Empirical Bayes.} $\pi_\Theta \coloneqq (\Phi_{\Theta})_\# \mu$. Consistency is then enforced by bringing the two SIs close to each other, leading to a natural 
bi-level fixed-point iteration scheme; see Alg.~ \ref{alg:alg}. The outer loop updates $\Theta^{(k)}$ to $\Theta^{(k+1)}$ by constructing, at each step $k$, the following SI
\begin{equation}
\label{eq:self_SI}
    I^{(k+1)}_t = \alpha_t \Phi_{\Theta^{(k)}}(y) + \beta_t \mathcal{F}(\Phi_{\Theta^{(k)}}(y)) + \gamma_t z, \quad t\in[0, 1], \, y \sim \mu, \, z\sim \gamma_d, \, y\perp z.
\end{equation}
This SI is directly sampleable given $\Theta^{(k)}$ and samples from $\mu$, and we train it using standard SI loss~\eqref{eq:standard_drift_loss}\eqref{eq:standard_denoiser_loss} via stochastic gradient descent as the inner loop to obtain $\Theta^{(k+1)}$:
\begin{propbox}
\begin{equation}
\label{eq:iteration_Theta}
    \Theta^{(k)} \xrightarrow[\text{`E' step}]{\text{via }\eqref{eq:self_SI}} 
    I_{t}^{(k+1)} \xrightarrow[\text{`M' step}]{\text{minimizers in \eqref{eq:standard_drift_loss}\eqref{eq:standard_denoiser_loss} with }I_{t}^{(k+1)}} \Theta^{(k+1)}.
\end{equation}
\end{propbox}
We remark that this bi-level scheme resembles the EM-type algorithm in \cite{rozet2024learning,bai2024expectation}, where the clean data is updated in the outer loop and the score function is retrained in the inner loop. However, their method requires an explicit linear forward model and relies on uncontrollable approximations to posterior sampling. In contrast, our data-driven backward transport map avoids these assumptions and enables learning the SI in Eq.~\eqref{eq:self_SI} with only black-box access to $\mathcal{F}$.  

We easily verify that if the forward model is injective at the distribution level, $\pi$ is the only admissible fixed point of our iterative scheme (see proof in App.~\ref{app:consistency}), similarly as the consistency guarantees in \cite{daras2024consistent}. In Section \ref{sec:theory} we will show that with additional assumptions beyond injectivity, one can further establish unconditional convergence guarantees for our scheme.
\begin{pinkbox}%
\begin{proposition}[$\pi$ is the only admissible fixed point]
\label{prop:consistency}
    Assume that $\mathcal{K}$ is injective and that the iterative scheme~\eqref{eq:iteration_Theta} converges to a fixed point $\Theta^*$. Then $\pi_{\Theta^*}=\pi$. 
\end{proposition}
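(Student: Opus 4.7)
The plan is to exploit two pushforward identities for $\Phi_{\Theta^*}$ that emerge at the fixed point, derive from them that $\mathcal{K}\pi_{\Theta^*} = \mu$, and then invoke injectivity of $\mathcal{K}$ to conclude $\pi_{\Theta^*} = \pi$.

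First I would translate the fixed-point condition into a statement about $\Phi_{\Theta^*}$ via the SI property. At the fixed point, $\Theta^*$ is by construction the minimizer of the SI losses \eqref{eq:standard_drift_loss}--\eqref{eq:standard_denoiser_loss} associated with the interpolant $I_t^{(k+1)}$ in \eqref{eq:self_SI} evaluated at $\Theta^{(k)}=\Theta^*$. Using the boundary conditions $\alpha_0=\beta_1=1$ and $\alpha_1=\beta_0=\gamma_0=\gamma_1=0$, this interpolant has $t=0$ marginal $\pi_{\Theta^*}$ and $t=1$ marginal $\mathcal{K}\pi_{\Theta^*}$. By the correctness of SI-based generative modeling reviewed in Section~\ref{sec:SIback}, the induced backward transport satisfies $(\Phi_{\Theta^*})_\#(\mathcal{K}\pi_{\Theta^*}) = \pi_{\Theta^*}$. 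On the other hand, the very definition $\pi_{\Theta^*} \coloneqq (\Phi_{\Theta^*})_\# \mu$ gives a second identity for the same map, $(\Phi_{\Theta^*})_\# \mu = \pi_{\Theta^*}$. Thus $\Phi_{\Theta^*}$ pushes both $\mu$ and $\mathcal{K}\pi_{\Theta^*}$ onto the same measure $\pi_{\Theta^*}$.

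Second, I would argue that $\Phi_{\Theta^*}$ is injective at the level of measures, so that $\mu = \mathcal{K}\pi_{\Theta^*}$. In the ODE case this is immediate, since the probability flow is a deterministic diffeomorphism on $\mathcal{X}$ and pushforward under a diffeomorphism is injective on $\mathcal{P}(\mathcal{X})$. In the SDE case, $\Phi_{\Theta^*}$ is a random map defined by the reverse-time SDE \eqref{eq:reverse_SDE}, i.e., a Markov kernel; distribution-level injectivity follows from the well-posedness of the forward--backward SDE pair associated with the SI, provided the diffusion schedule is non-degenerate. Once $\mu = \mathcal{K}\pi_{\Theta^*}$ is established, combining it with the standing identity $\mathcal{K}\pi=\mu$ and the hypothesized injectivity of $\mathcal{K}$ on $\mathcal{P}(\mathcal{X})$ yields $\pi_{\Theta^*} = \pi$.

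The main obstacle is the distribution-level injectivity of $\Phi_{\Theta^*}$ in the SDE regime. While ODE flows are pointwise bijections, so injectivity on $\mathcal{P}(\mathcal{X})$ is automatic, an SDE-based transport is a stochastic kernel, and a priori two distinct input measures could be mapped to the same output under a kernel that is too diffusive or loses information at the endpoints. Articulating clean sufficient conditions on the schedules $\alpha_t,\beta_t,\gamma_t,\epsilon$ that rule this out, and thereby guarantee that $\Phi_{\Theta^*}$ separates measures, is the delicate part of the argument.
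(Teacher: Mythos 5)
Your proof follows the same overall blueprint as the paper's but argues in the opposite direction, and that direction is where your flagged gap lives. You work backward: at the fixed point, $\Phi_{\Theta^*}$ pushes both $\mathcal{K}\pi_{\Theta^*}$ (by SI correctness) and $\mu$ (by definition) to $\pi_{\Theta^*}$, and you then need distribution-level injectivity of the backward transport to deduce $\mu = \mathcal{K}\pi_{\Theta^*}$ before invoking injectivity of $\mathcal{K}$. The paper instead works forward: it observes that the forward SDE with the drift/denoiser pair $\Theta^*$, started from $\pi_{\Theta^*}$, must land on $\mathcal{K}\pi_{\Theta^*}$ (because $\Theta^*$ is the exact SI solution for the interpolant between $\pi_{\Theta^*}$ and $\mathcal{K}\pi_{\Theta^*}$), and that by time-reversal of the fixed-point identity $\pi_{\Theta^*}=(\Phi_{\Theta^*})_\#\mu$ the same forward SDE also lands on $\mu$; the two terminal laws must then agree because a forward SDE with a fixed drift and a fixed initial distribution has a uniquely determined marginal flow. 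That uniqueness of forward solutions is an elementary fact, whereas injectivity of a backward Markov kernel over a fixed finite time window is genuinely nontrivial for $\epsilon>0$, as you correctly note (it is immediate only in the ODE case, where $\Phi_{\Theta^*}$ is a deterministic diffeomorphism). So the practical takeaway is that you can sidestep the delicate step by time-reversing your two pushforward identities into forward statements, matching the paper's proof: instead of asking whether $\Phi_{\Theta^*}$ separates $\mu$ from $\mathcal{K}\pi_{\Theta^*}$, ask what the forward process started at $\pi_{\Theta^*}$ produces. One honest caveat: the time-reversal step itself (turning $(\Phi_{\Theta^*})_\#\mu=\pi_{\Theta^*}$ into ``the forward SDE from $\pi_{\Theta^*}$ reaches $\mu$'') tacitly identifies the marginals of the reverse process run from $\mu$ with the SI marginals, which is a close cousin of the injectivity you invoke; neither argument is fully trivial in the SDE regime, but the paper's framing localizes the burden in a form (forward well-posedness) that is standard and easy to discharge.
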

\end{pinkbox}

\begin{figure}[h!]
\begin{minipage}{0.485\textwidth}
        \centering
    \includegraphics[width=\linewidth]{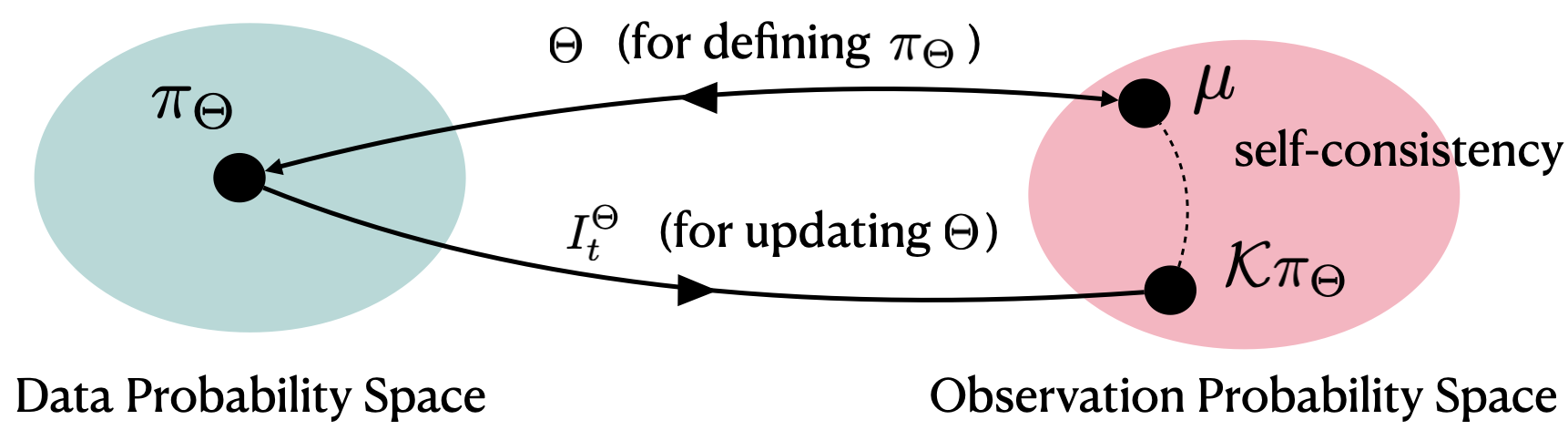}
        \caption{\small Schematic of the SCSI: the fixed point $\Theta^*$ satisfies $\mathcal{K}\pi_{\Theta^*} = \mu$, which in turns implies $\pi_{\Theta^*} = \pi$. No samples from $\pi$ are required---the approach only uses corrupted samples from $\mu$ and the map $\mathcal{F}$.}
        \label{fig:schematic}
\end{minipage}
\hfill
\begin{minipage}{0.475\textwidth}
    \begin{algorithm}[H]
\setlength{\algomargin}{2em}
\LinesNumbered
\SetAlgoNoLine
\DontPrintSemicolon
\SetAlgoNoEnd
    $\Theta \leftarrow \Theta^{(0)} $ \tcp*{Initialize }
    \For{$k$ in $1 \ldots K$}
    {   
        \For{$i$ in $1 \ldots T_{\mathrm{tr}}$}
        {
        Sample $I_t$ in \eqref{eq:self_SI} with $\Theta^{(k)}$ \;
        SGD update of $\Theta$ via losses~\eqref{eq:standard_drift_loss}\eqref{eq:standard_denoiser_loss}
        }
        $\Theta^{(k)} \leftarrow \Theta $ \tcp*{Update transport map}
    }
    \Return{$\Theta^{(K)}$} \;
    \caption{Training of Self-Consistent SI}
    \label{alg:alg}
\end{algorithm}
\end{minipage}
\end{figure}

\subsection{Truncated Inner-Loop Optimization for Efficiency}

For the sake of efficiency, in practice, we do not solve the inner problem to convergence at each step $k$; instead, we initialize the parameters from $\Theta^{(k)}$ and update them for $T_{\text{tr}}$ gradient steps. 
See Alg.~\ref{alg:alg} for a description of the resulting algorithm and Alg.~\ref{alg:detail} in App.~\ref{app:alg} for more details. In the special case $T_{\text{tr}} = 1$, the algorithm is equivalent to treating $I_t$ in \eqref{eq:self_SI} as dependent on $\Theta$ but applying \texttt{stop-gradient} to $I_t$ when computing the gradient of the corresponding loss function. Nevertheless, we retain the two-loop formulation in Alg.\ref{alg:alg} to emphasize the more general form and better match the bi-level scheme introduced earlier.

\section{Theoretical Analysis}
\label{sec:theory}
In this seciton we analyze the iterative scheme for SCSI from Sec.~\ref{sec:iteration}.
We establish convergence in KL divergence to the ground truth in the SDE setting $\epsilon>0$, and convergence in Wasserstein in the general setting. 
We focus on the idealised continuous-time limit, and leave time discretization aspects for future work.
In the following, we will often use a reference $L^2$ metric in the space of flows $C([0,1] \times \mathcal{X}; \mathcal{X})$, given by $ \| b \|_{\pi_{[0,1]}}^2 := \mathbb{E}_{t \sim \mathrm{Unif}[0,1]} \mathbb{E}_{x \sim \pi_t}[\| b(t,x) \|^2]~,$
where $\pi_t$ is the law of the oracle SI defined in (\ref{eq:aux_SI}). We define $\pi^{(k)}\coloneqq (\Phi_{\Theta^{(k)}})_\# \mu$, the estimate of data distribution at (outer loop) iteration $k$. While we initially introduced the denoiser $g$, we will henceforth mainly use the score $s$ in analysis for convenience, noting that the two are equivalent.

\subsection{Contraction in Wasserstein Metric}

\rsp{
We first give a simple contraction bound in the Wasserstein metric, 
that applies both to the SDE setting (where $\epsilon>0$) as well as the ODE setting ($\epsilon=0$).  
With slight abuse of notation, given a measure $\tilde{\pi}$ and its associated $\tilde{\mu}= \mathcal{K}\tilde{\pi}$, we denote by $\Phi_{\tilde{\pi}}$ the backward transport map that pushes $\tilde{\mu}$ to $\tilde{\pi}$, obtained by integrating the SDE/ODE associated with the solution of (\ref{eq:standard_drift_loss}, \ref{eq:standard_denoiser_loss}). 

We consider the following simple assumption:
\begin{assumption}[Lipschitz Stability in $W_2$] 
\label{ass:wass}
    There exists $R>0$ such that for any $\tilde{\pi} \in \mathcal{P}(\mathcal{X})$ we have
    \begin{align}
    \label{eq:wasserstein_bound}
         \E \| \Phi_{\pi} - \Phi_{\tilde{\pi}} \|_\mu^2 & \leq R W_2^2( \pi, \tilde{\pi})~,
    \end{align}
    where the expectation is with respect to the randomness of the reverse transport map. 
\end{assumption}
In words, we are assuming that the map $\tilde{\pi} \mapsto \Phi_{\tilde{\pi}}$ between $(\mathcal{P}(\mathcal{X}), W_2)$ and $L^2(\mathcal{X}; \mathcal{X}; \mu)$
is $\sqrt{R}$-Lipschitz at $\pi$. In the ODE setting, the reverse transport is a determinsitic diffeomorphism, so the expectation disappears. More generally, the Lipschitz constant $R = R(\epsilon, \gamma, \alpha, \beta)$ depends on the design of the SI, as well as the choice of diffusion coefficient during inference. 

Now, since $\pi^{(k+1)} = \Phi^{(k)}_\# \mu$, 
we have that $(\Phi_\pi(y), \Phi_{{\pi}^{(k)}}(y))$, $y\sim \mu$, is a coupling between $\pi$ and $\pi^{(k+1)}$, and therefore 
 by directly applying the Lipscthiz assumption to $\pi^{(k)}$ we immediately obtain 
\begin{align}
    W_2^2( \pi, \pi^{(k+1)}) & \leq \E \| \Phi_\pi(y) - \Phi_{{\pi}^{(k)}}(y) \|^2 \leq R \, W_2^2( \pi, \pi^{(k)})~,
\end{align}
indicating that we have a contracting scheme as soon as $R<1$. To summarize, we have
\begin{pinkbox}
\begin{fact}[Wasserstein Linear Convergence]
    Under Assumption \ref{ass:wass}, we have $W_2^2( \pi, \pi^{(k)}) \leq R^k W_2^2( \pi, \pi^{(0)} )$, and therefore we have exponential convergence as soon as $R<1$. 
\end{fact}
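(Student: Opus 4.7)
My plan is to reduce the statement to a one-step contraction in $W_2$ and then iterate. In the idealised continuous-time limit where the inner loop of Algorithm \ref{alg:alg} is solved exactly, the updated backward transport at iteration $k+1$ is precisely $\Phi_{\pi^{(k)}}$ in the notation of Assumption \ref{ass:wass}: it is the minimiser of the SI losses built from the oracle pair $(\pi^{(k)}, \mathcal{K}\pi^{(k)})$. Thus $\pi^{(k+1)} = (\Phi_{\pi^{(k)}})_\# \mu$. Separately, because $\mathcal{K}\pi = \mu$ by construction, the map $\Phi_\pi$ satisfies $(\Phi_\pi)_\# \mu = \pi$ exactly.

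With these two push-forward identities in hand, I will form a coupling of $\pi$ and $\pi^{(k+1)}$ by sampling $y \sim \mu$ (in the SDE case together with a shared realisation of the Brownian motion driving both reverse dynamics) and taking the pair $(\Phi_\pi(y), \Phi_{\pi^{(k)}}(y))$. Its marginals are, by the previous step, exactly $\pi$ and $\pi^{(k+1)}$, so the definition of $W_2$ yields
\begin{align*}
W_2^2(\pi, \pi^{(k+1)}) \;\leq\; \mathbb{E}\|\Phi_\pi(y) - \Phi_{\pi^{(k)}}(y)\|^2 \;=\; \|\Phi_\pi - \Phi_{\pi^{(k)}}\|_\mu^2,
\end{align*}
with the expectation taken over both $y$ and the shared Brownian motion in the SDE case. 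A direct application of Assumption \ref{ass:wass} with $\tilde{\pi} = \pi^{(k)}$ then gives $\|\Phi_\pi - \Phi_{\pi^{(k)}}\|_\mu^2 \leq R\, W_2^2(\pi, \pi^{(k)})$, producing the one-step contraction $W_2^2(\pi, \pi^{(k+1)}) \leq R\, W_2^2(\pi, \pi^{(k)})$.

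Iterating this inequality from $k = 0$ gives $W_2^2(\pi, \pi^{(k)}) \leq R^k\, W_2^2(\pi, \pi^{(0)})$, which is exactly the claim, and exponential convergence follows whenever $R<1$. There is no deep obstruction in the argument itself: all of the analytic work has been absorbed into Assumption \ref{ass:wass}. The single point I would treat carefully is the SDE setting, where the two reverse maps $\Phi_\pi$ and $\Phi_{\pi^{(k)}}$ must be realised on a common probability space with a shared Brownian driver so that their pointwise difference is well-defined and consistent with the expectation convention in \eqref{eq:wasserstein_bound}; since both maps integrate reverse SDEs of the same structural form, differing only in their drift/score, this synchronous coupling is natural. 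The genuine mathematical content, and what I would regard as the real obstacle in any downstream use of this Fact, lies not in the proof itself but in verifying when Assumption \ref{ass:wass} actually holds and when $R(\epsilon, \gamma, \alpha, \beta) < 1$ can be arranged by appropriate design of the interpolant schedules and diffusion coefficient.
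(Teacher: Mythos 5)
Your argument is correct and follows exactly the paper's own reasoning: form the coupling $(\Phi_\pi(y), \Phi_{\pi^{(k)}}(y))$ with $y \sim \mu$ to bound $W_2^2(\pi, \pi^{(k+1)})$ by $\|\Phi_\pi - \Phi_{\pi^{(k)}}\|_\mu^2$, apply Assumption~\ref{ass:wass} with $\tilde{\pi} = \pi^{(k)}$, and iterate. Your remark about realising the two reverse maps on a common probability space with a shared Brownian driver in the SDE case is a worthwhile clarification that the paper leaves implicit, but it does not change the substance of the argument.
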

\end{pinkbox}

We first note that the injectivity of the channel is a necessary condition to have $R<1$. 
Indeed, if $\tilde{\pi}$ is such that $\mathcal{K} \tilde{\pi} = \mathcal{K} \pi = \mu$, 
then $( \Phi_\pi(y), \Phi_{\tilde{\pi}}(y))$, for $y \sim \mu$, is a valid coupling between $\pi$ and $\tilde{\pi}$, 
and therefore $W_2^2( \pi, \tilde{\pi}) \leq \E \| \Phi_\pi - \Phi_{\tilde{\pi}}\|^2_\mu$. 
In general, it is not immediate how to verify the condition $R<1$, since it has the flavor 
of a `reverse transport inequality', but in next section we verify that it holds in the Gaussian AWGN setting under appropriate SNR regimes. 

}

\subsection{Contraction in KL Divergence}
\label{sec:KLcontract}
Our next result focuses on KL contraction, with the proof deferred to Appendix \ref{sec:proofthm}. %
Let us first introduce our assumptions needed to establish linear convergence rates. %
\paragraph{KL Lipschitz stability}
Recall our definition of ${b}_\pi$, ${s}_\pi$, the velocity and score associated with the SI in (\ref{eq:aux_SI}). 
To control the contraction along iterations, we will assume that the function $\pi \mapsto {f}_\pi:= {b}_\pi + \epsilon {s}_\pi$, that maps a given data model $\pi$ to the drift of a Fokker-Plank equation transporting $\pi$ to $\mathcal{K} \pi$, is Lipschitz with respect to the $\mathrm{KL}$ divergence, with a Lipschitz constant of order $\epsilon$:
\begin{assumption}[Lipscthiz Stability in KL]
    \begin{align}
\label{eq:lip_ass}
  \forall \pi, \tilde{\pi}~,~  \| f_\pi - f_{\tilde{\pi}} \|^2_{\pi_{[0,1]}}  &\leq L \epsilon  \mathrm{KL}(\pi|| \tilde{\pi})~.
\end{align}
\end{assumption}
In words, a SI builds a diffusion bridge between $\pi$ and $\mathcal{K} \pi$, and $L \epsilon$ measures the sensitivity of its drift to initial conditions. To avoid confusion we shall write $L=L_\epsilon$ to emphasize that this Lipscthiz constant does depend on the diffusion coefficient. 
Note that $\| f_\pi - f_{\tilde{\pi}} \|^2 \leq  2\| b_\pi - b_{\tilde{\pi}} \|^2 + 2\epsilon^2  \| s_\pi - s_{\tilde{\pi}} \|^2$, so the assumption balances the Lipschitz smoothness of the drift $b$ (of order $O_\epsilon(1)$) and the score $s$ (of order $O(\epsilon^2)$) via the diffusion coefficient $\epsilon$. 
For general channels, this assumption indirectly imposes a macroscopic diffusion coefficient $\epsilon = \Theta(1)$. Indeed, in the limit of infinitesimally small diffusion $\epsilon \to 0$ (capturing the ODE inference setting), this assumption imposes $\pi \mapsto b_\pi$ to be \emph{constant} (ie, $b_\pi$ is independent of $\pi$), and therefore that the channel $\mathcal{F}$ is a deterministic diffeomorphism $\mathcal{T}: \mathcal{X} \to \mathcal{X}$, whose ODE representation is given by $b$. 
Notice that the Lipschitz constant depends on the SI design, and might be reduced by going beyond linear interpolants and more effectively ‘preconditioning’ to $\mathcal{K}$, which we leave for future work.

\paragraph{Condition number}
An important aspect of the problem is that there are two distinct notions of error, whether it is measured on the `data' side, i.e., $\mathrm{KL}( \pi || \pi^{(k)})$, or on the `observation' side, i.e., $\mathrm{KL}( \mu || \mu^{(k)}) = \mathrm{KL}( \mathcal{K}\pi || \mathcal{K}\pi^{(k)})$. 
Since the learner only has access to data from $\mu$, 
a necessary condition to guarantee that we can recover the original 
data distribution is \emph{injectivity}, i.e., that  
$\mathrm{KL}( \mathcal{K}\pi || \mathcal{K}\hat{\pi})=0$ implies $\pi = \hat{\pi}$. However, this is not sufficient to provide a quantitative estimate of $\mathrm{KL}( \pi || \hat{\pi})$ in terms  of $\mathrm{KL}( \mu || \mathcal{K}\hat{\pi})$. In other words, the inverse problem $\mathcal{K} \pi = \mu$ is generally singular in $\mathcal{P}(\mathcal{X})$, even for the simplest channels, due to the infinite-dimensional nature of the domain. %

To mitigate this issue, we need to regularize this inverse problem by restricting (or penalizing) the domain of possible velocities and scores arising from the SI objectives (\ref{eq:standard_drift_loss})(\ref{eq:standard_denoiser_loss}), so that the resulting constrained optimization returns $\hat{b}_{\pi^{(k)}},\,\hat{s}_{\pi^{(k)}} \in \mathcal{B}_\lambda$, where $\mathcal{B}_\lambda$ is  indexed by a complexity measure $\lambda$, e.g., neural networks with $O(\lambda^{-1})$ parameters. 
In turn, these regularised objectives inject 
regularity in $\pi^{(k)}$, i.e., for all $k$ we have $\pi^{(k)} \in \mathcal{S}_\lambda$, the class of terminal densities obtained by running a Fokker-Plank equation with drifts in $\mathcal{B}_\lambda$. 
We can now consider the \emph{condition number} of $\mathcal{K}$ around $\pi$:
\begin{align}
\label{eq:chibound}
\chi &:= \sup_{\rho \in \mathcal{S}_\lambda} \frac{\mathrm{KL}(\pi || \rho)}{\mathrm{KL}(\mathcal{K} \pi || \mathcal{K} \rho)} ~.    
\end{align}
Note that by the data-processing inequality, we always have $\chi \geq 1$. The (regularised) inverse problem becomes non-singular whenever $\chi < \infty $. The purpose of regularisation, in this abstract context, is to restrict the range $\mathcal{S}_\lambda$ as to make $\chi$ small, while maintaining a small approximation error. 
Observe that, if $\rho \in \mathcal{S}_\lambda$, by Girsanov's theorem we have $\mathrm{KL}(\pi || \rho)  \leq  \epsilon^{-1} \| b^* - \hat{b}_\rho \|^2 + \epsilon \| s^* - \hat{s}_\rho\|^2  < \infty~,$
which shows that $\chi$ is well-defined.

A particularly simple form of regularisation is to consider a continuous parametric model $\{b_\omega, s_\omega\}$ where $\omega \in \mathcal{D}$ is in a \emph{compact} domain, which encompasses most practical setups. Combined with the injectivity of the channel, this allows us to have $\chi<\infty$. 
For technical reasons, we consider the misspecified setting:
\begin{pinkbox}
\begin{proposition}[Finite condition number for compact hypothesis class]
\label{prop:qualitative_condnumber}
    Assume that $\mathcal{K}$ is injective, that $\mathcal{D}$ is a compact parameter space, with continuous parametrization of the drift and score models, and that $\pi$ cannot be exactly represented by the model. Then $\chi < \infty$.
\end{proposition}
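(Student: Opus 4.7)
The plan is to parametrize $\mathcal{S}_\lambda = \{\rho_\omega : \omega \in \mathcal{D}\}$ and rewrite $\chi = \sup_{\omega \in \mathcal{D}} N(\omega)/D(\omega)$, where $N(\omega) := \mathrm{KL}(\pi \| \rho_\omega)$ and $D(\omega) := \mathrm{KL}(\mathcal{K}\pi \| \mathcal{K}\rho_\omega)$. I would establish that $N$ is uniformly bounded above on $\mathcal{D}$ and that $D$ is uniformly bounded below away from zero, so that $\chi \leq \sup N / \inf D < \infty$. Both bounds use compactness of $\mathcal{D}$ crucially, combined with injectivity of $\mathcal{K}$ and the misspecification assumption.

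For the numerator I would invoke the Girsanov bound displayed just before the proposition:
\begin{equation*}
N(\omega) \;\leq\; \epsilon^{-1} \|b^{*} - b_\omega\|^2_{\pi_{[0,1]}} + \epsilon \|s^{*} - s_\omega\|^2_{\pi_{[0,1]}}.
\end{equation*}
The ground-truth pair $(b^{*}, s^{*})$ is fixed (independent of $\omega$) and of finite $\pi_{[0,1]}$-norm. By the continuous parametrization hypothesis, $\omega \mapsto (b_\omega, s_\omega)$ is continuous into $L^2(\pi_{[0,1]})$, so the right-hand side is a continuous function of $\omega$; compactness of $\mathcal{D}$ then yields a finite maximum $M$, i.e.\ $N(\omega) \leq M$ for every $\omega \in \mathcal{D}$.

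For the denominator, I would first verify that $D$ is lower semicontinuous on $\mathcal{D}$: standard well-posedness of the Fokker--Planck equation with continuously varying drift/score gives weak continuity of $\omega \mapsto \rho_\omega$; $\mathcal{K}$ is weakly continuous as the integral operator of a fixed Markov kernel; and $\mathrm{KL}(\mathcal{K}\pi \| \cdot)$ is jointly lower semicontinuous in the weak topology. Next, I would show $D(\omega) > 0$ pointwise on $\mathcal{D}$: if $D(\omega) = 0$, then $\mathcal{K}\rho_\omega = \mathcal{K}\pi$, and injectivity of $\mathcal{K}$ forces $\rho_\omega = \pi$, contradicting the assumption that $\pi$ cannot be exactly represented. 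A strictly positive lower semicontinuous function on a compact set attains a positive infimum, hence $D_{\min} := \inf_{\omega \in \mathcal{D}} D(\omega) > 0$, and $\chi \leq M / D_{\min} < \infty$ follows.

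The main obstacle is the weak continuity of $\omega \mapsto \rho_\omega$ in a topology compatible with lower semicontinuity of KL; this relies on mild but unstated regularity of the SI dynamics (e.g.\ locally Lipschitz drift and nondegenerate diffusion along the reverse-time evolution) that would have to be made explicit in a fully formal proof. A secondary subtlety is interpreting the misspecification correctly as $\pi \notin \mathcal{S}_\lambda$: combined with continuity and compactness of $\mathcal{D}$, the image $\mathcal{S}_\lambda$ is weakly closed, so $\pi$ automatically sits at positive weak distance from the parametrized family, which is what ultimately powers the qualitative lower bound $\inf D > 0$.
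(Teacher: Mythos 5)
Your proposal is correct and uses essentially the same ingredients as the paper's proof: the Girsanov upper bound on $\mathrm{KL}(\pi\|\rho_\omega)$, lower semicontinuity of $\mathrm{KL}(\mathcal{K}\pi\|\cdot)$ in the weak topology (via Donsker–Varadhan in the paper's version), weak continuity of $\omega\mapsto\rho_\omega$ via Girsanov followed by Pinsker, positivity of the denominator from misspecification plus injectivity, and compactness of $\mathcal{D}$. The only cosmetic difference is that you bound $\sup_\omega N(\omega)/\inf_\omega D(\omega)$ separately, whereas the paper shows the pointwise ratio $J(\Theta)/G(\Theta)$ is upper semicontinuous (continuous numerator over positive LSC denominator) and therefore attains a finite maximum on the compact $\mathcal{D}$; both give $\chi<\infty$.
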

\end{pinkbox}
Unsuprisingly, under such general conditions, we are unable to quantify the condition number.

 \paragraph{KL Contraction}
We are now ready to state the main result of this section:
\begin{pinkbox}
\begin{theorem}[Contraction in KL] 
\label{thm:diff_contraction}
Assume that $\epsilon>0$ is such that \eqref{eq:lip_ass} holds for $L_\epsilon$ and let $\chi$ be the condition number 
of the regularized channel. Let $\delta^{(k)} = \max( \| b^{(k)} - \hat{b}^{(k)} \|_{\pi_{[0,1]}}, \| s^{(k)} - \hat{s}^{(k)} \|_{\pi_{[0,1]}})$ be the error incurred at iteration $k$, and assume that $\delta^{(k)} \leq \delta$ for all $k$.  
Then, if $L_\epsilon \chi < 4$, defining $\eta = 1 + \frac{L_\epsilon}{4} - \chi^{-1}$, we have 
    \begin{align}
    \label{eq:KLcontract}
        \mathrm{KL}( \pi || \pi^{(k)}) & \leq 2 \eta^k \mathrm{KL}(\pi || \pi^{(0)}) + O(\epsilon^{-3})\frac{ \delta^2 }{(1 - \sqrt{\eta})^2} ~.%
    \end{align} 
\end{theorem}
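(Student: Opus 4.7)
The plan is to derive a one-step bound of the form $\mathrm{KL}(\pi\|\pi^{(k+1)})\leq \eta\,\mathrm{KL}(\pi\|\pi^{(k)}) + O(\epsilon^{-3})\,\delta^2$ with contraction rate $\eta = 1+L_\epsilon/4-\chi^{-1}$, and then iterate. The starting observation is that $\pi = (\Phi_\pi)_\#\mu$ and $\pi^{(k+1)} = (\hat\Phi^{(k+1)})_\#\mu$ are the endpoint marginals at $t=0$ of reverse SDEs on $[0,1]$ started from the common terminal distribution $\mu$, with drifts $f_\pi$ and $\hat f^{(k+1)} = \hat b^{(k+1)} + \epsilon\,\hat s^{(k+1)}$ respectively. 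Applying Girsanov's theorem to bound the KL between the two path measures, followed by the data-processing inequality at $t=0$, yields
\[
\mathrm{KL}(\pi\|\pi^{(k+1)})\;\leq\;\tfrac{1}{2\epsilon}\,\|f_\pi-\hat f^{(k+1)}\|^2_{\pi_{[0,1]}},
\]
using that the time-$t$ marginal of the reverse SDE driven by $f_\pi$ is exactly $\pi_t$ by construction of the SI.

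Next I would split $f_\pi - \hat f^{(k+1)} = (f_\pi - f_{\pi^{(k)}}) + (f_{\pi^{(k)}} - \hat f^{(k+1)})$ and apply a weighted Young inequality $(a+b)^2\leq (1+\eta_0^{-1})a^2 + (1+\eta_0) b^2$ with a tunable parameter $\eta_0 \in (0,1)$ that will ultimately select the contraction rate. The first term is directly controlled by assumption~\eqref{eq:lip_ass}, contributing at most $L_\epsilon\mathrm{KL}(\pi\|\pi^{(k)})$ up to a factor depending on $\eta_0$. The second term is the training error, equal to $\delta$ in the training norm $\pi^{(k)}_{[0,1]}$; transferring it to the Girsanov norm $\pi_{[0,1]}$ requires a change of measure along the interpolant, and the Radon--Nikodym derivative between $\pi_t$ and $\pi^{(k)}_t$ is controlled by the Gaussian regularization width $\gamma_t \sim \sqrt\epsilon$, which is the source of the $O(\epsilon^{-3})$ prefactor in the stated error term.

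The delicate ingredient is the $-\chi^{-1}$ refinement of the rate, which must come from the condition-number inequality. I would use the chain rule of relative entropy,
\[
\mathrm{KL}(\pi\|\pi^{(k+1)}) \;=\; \mathrm{KL}(\mu\|\mathcal{K}\pi^{(k+1)}) + \mathbb{E}_{y\sim\mu}\,\mathrm{KL}\bigl(P(\cdot|y)\,\big\|\,P^{(k+1)}(\cdot|y)\bigr),
\]
together with the reformulation $\chi^{-1}\mathrm{KL}(\pi\|\pi^{(k+1)}) \leq \mathrm{KL}(\mu\|\mathcal{K}\pi^{(k+1)})$ of~\eqref{eq:chibound}. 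The key is that the self-consistent update explicitly drives $\mathcal{K}\pi^{(k+1)}\to\mu$, so the observation-side KL is not amplified by $L_\epsilon$ but is controlled by the training error. Absorbing this contribution into the left-hand side of the one-step inequality and tuning $\eta_0$ then extracts the $-\chi^{-1}$ saving on the effective Lipschitz amplification of $\mathrm{KL}(\pi\|\pi^{(k)})$, giving $\eta = 1+L_\epsilon/4-\chi^{-1}$.

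Finally, I would take square roots to obtain a linear recursion $\sqrt{\mathrm{KL}(\pi\|\pi^{(k+1)})}\leq \sqrt{\eta}\,\sqrt{\mathrm{KL}(\pi\|\pi^{(k)})} + O(\epsilon^{-3/2})\,\delta$, unroll it via the geometric series $\sum_{j\geq 0}(\sqrt\eta)^j = (1-\sqrt\eta)^{-1}$, and square back (with the factor $2$ coming from $(a+b)^2\leq 2a^2+2b^2$) to recover~\eqref{eq:KLcontract}. The main obstacle is the third step above: isolating the $\chi^{-1}$ contribution from the Lipschitz-amplification term requires careful bookkeeping of how the observation-side KL interacts with the iteration, and the change-of-measure estimate responsible for the $O(\epsilon^{-3})$ prefactor needs tight control of Radon--Nikodym derivatives along the SI path, which effectively restricts the useful regime to macroscopic diffusion $\epsilon=\Theta(1)$, consistent with the remark following assumption~\eqref{eq:lip_ass}.
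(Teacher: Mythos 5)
Your high-level plan (Girsanov for a one-step bound, condition number for the $\chi^{-1}$ saving, square-root linear recursion unrolled geometrically) is the right skeleton, but the core step that extracts the rate $1+L_\epsilon/4-\chi^{-1}$ is missing, and the mechanism you propose would not produce it. A weighted Young split of the Girsanov integrand gives at best a factor $\tfrac{(1+\eta_0^{-1})L_\epsilon}{4}\,\mathrm{KL}(\pi\|\pi^{(k)})$, which has no $1-\chi^{-1}$ base and is therefore not contractive under the hypothesis $L_\epsilon\chi<4$. The paper's argument instead keeps the Girsanov integrand as an exact binomial $\tfrac{1}{4\epsilon}\|(f^*-\hat f^{(k)})-2\epsilon(s^*-\hat s^{(k)})\|^2$ and expands it algebraically into a sum of a quadratic $\tfrac{1}{4\epsilon}\|f^*-\hat f^{(k)}\|^2$, a term $\epsilon\|s^*-\hat s^{(k)}\|^2$, and a cross term. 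The latter two are then identified, via a separate \emph{exact} identity, with the difference $\mathrm{KL}(\pi\|\pi^{(k)}) - \mathrm{KL}(\mu\|\mu^{(k)})$. That identity (eq.~\eqref{eq:general2}) is obtained by applying the time-evolution formula for relative entropy along two diffusions (Lemma~\ref{lem:KLcontrol}) to the pair of \emph{forward} Fokker--Planck equations with drifts $b^*+\epsilon s^*$ and $b^{(k)}+\epsilon s^{(k)}$:
\begin{equation*}
\mathrm{KL}(\pi\|\pi^{(k)}) \;=\; \mathrm{KL}(\mu\|\mu^{(k)}) \;+\; \epsilon\!\int_0^1\!\|s^*_t-s^{(k)}_t\|^2\,dt \;-\;\int_0^1\!\langle f^*_t-f^{(k)}_t,\,s^*_t-s^{(k)}_t\rangle\,dt~.
\end{equation*}
Plugging this in collapses the Girsanov bound, up to $\delta$-errors, to $\mathrm{KL}(\pi\|\pi^{(k+1)}) \leq \mathrm{KL}(\pi\|\pi^{(k)}) - \mathrm{KL}(\mu\|\mu^{(k)}) + \tfrac{1}{4\epsilon}\|f^*-f^{(k)}\|^2$, after which the condition number and Lipschitz assumptions directly yield the rate. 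This KL-evolution lemma along the forward bridge is the key ingredient absent from your proposal; a Young split cannot substitute for it because the cancellation of the cross term with the observation-side KL is exact, not an inequality.

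Your alternative route via the chain rule $\mathrm{KL}(\pi\|\pi^{(k+1)}) = \mathrm{KL}(\mu\|\mathcal{K}\pi^{(k+1)}) + \E_\mu\,\mathrm{KL}(\hat P(\cdot|y)\|\hat P^{(k+1)}(\cdot|y))$ does not close the recursion either. The condition number yields a \emph{lower} bound $\mathrm{KL}(\mu\|\mathcal{K}\pi^{(k+1)}) \geq \chi^{-1}\mathrm{KL}(\pi\|\pi^{(k+1)})$, and feeding it into your chain rule only controls the posterior term by $(1-\chi^{-1})\mathrm{KL}(\pi\|\pi^{(k+1)})$ --- a statement about iteration $k+1$ alone, with no coupling to iteration $k$. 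The paper applies the condition number at index $k$ ($\mathrm{KL}(\mu\|\mu^{(k)}) \geq \chi^{-1}\mathrm{KL}(\pi\|\pi^{(k)})$), which bites precisely because $\mathrm{KL}(\mu\|\mu^{(k)})$ appears with a negative sign in the collapsed one-step bound. Also note that the $O(\epsilon^{-3})$ prefactor does not arise from a Radon--Nikodym change of measure between $\pi_t$ and $\pi^{(k)}_t$ --- the theorem already takes $\delta$ measured in the $\pi_{[0,1]}$ norm, so no transfer between training and evaluation measures is needed; the factor is simply an accumulation of the explicit $\epsilon$-dependent constants $C_1, C_2, C_3$ in the error bookkeeping, culminating in $\tilde\delta = O(\epsilon^{-3/2})\delta$ in the square-root recursion.
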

\end{pinkbox}

Instrumental to the linear convergence is the ability to relate errors in measurement space back to data space --- precisely what is enabled by the condition number. An interesting interpretation of Theorem \ref{thm:diff_contraction} is that it provides \emph{global convergence} guarantees for a seemingly complex non-convex objective function, given in (\ref{eq:loss}), by replacing the ubiquitous gradient descent strategy with a tailored `Picard-type' iterative scheme. In that sense, our guarantees go beyond the qualitative results of the self-consistency loss in \cite{daras2024consistent}.
The upper bound (\ref{eq:KLcontract}) captures the typical tradeoff between approximation and estimation errors: a `small' function class has a smaller condition number, which improves the contraction rate, but in turn causes the error $\delta$ to increase (the proof provides explicit error dependencies in $\delta$). That said, a quantitative analysis of this tradeoff in specific function classes is beyond the scope of this work, but an interesting question deserving further attention. 

\begin{remark}[Stability]
    Theorem \ref{thm:diff_contraction} shows that the scheme is stable to estimation errors of the drift and score. However, notice that the error is measured on a path distribution $(\pi_t)_{t\in[0,1]}$ different from the training distribution $(\pi_t^{(k)})_{t\in[0,1]}$, and we rely on a uniform guarantee across all iterations. In that sense, the quantity $\delta$ captures an out-of-distribution error which is more stringent than the typical Fisher stability bounds in generative diffusion literature \citep{chen2022sampling, benton2023nearly}. Finally, we remark that if one has access to an estimate $\hat{\mu}$ rather than $\mu$, the scheme pays an additional $O(\mathrm{KL}(\mu || \hat{\mu}))$ additive term, following a standard data-processing inequality argument. 
\end{remark}

\begin{remark}[Role of Diffusion Coefficient $\epsilon$]
    A key parameter controlling the strength of Theorem \ref{thm:diff_contraction} is the diffusion coefficient $\epsilon$. On one hand, the second term in the RHS capturing learning errors is of order $O(\epsilon^{-3})$. On the other hand, the diffusion coefficients also affect the contraction rate through the Lipschitz constant $L_\epsilon$. %
    In any case, the current KL contraction results do not cover the limiting regime of $\epsilon \to 0$ where the transport is performed with the probability flow ODE. 
\end{remark}

\vspace{-1em}
\paragraph{Contraction in Fokker-Plank channels} 
The Lipschitz assumption (\ref{eq:lip_ass}) is admittedly difficult to verify in general.  However, there is a class of channels where one can explicitly control this Lipschitz property, given by 
\emph{Fokker-Plank Channels}~\citep{wibisono2017information}. These are channels where the forward map $\mathcal{F}$ can be expressed as a diffusion process itself; in other words, the law of $\mathcal{F}(x)$ given $x$ agrees with the law of $X_1$,
where $X_t$ solves 
\begin{align}
\label{eq:FPchannel}
    \mathrm{d}X_t &= f( t, X_t) \mathrm{d}t + \sqrt{2\epsilon} \mathrm{d}W_t~, \quad 
    X_0 =x~,
\end{align}
for some well-posed drift $f$; \rsp{see also \cite{sobieski2025system} for SDEs with matrix-valued diffusion coefficients for more general inverse problems.} In this case, if the Fokker-Planck representation of the channel is known, 
we can replace the linear SI in (\ref{eq:aux_SI}) by (\ref{eq:FPchannel}). A prominent example of a Fokker-Plank channel is the AWGN channel, where $f\equiv 0$. 
Now, observe that in this case the drift of the forward process does not depend on the initial distribution, thus $L= 0$. We thus obtain:
\begin{pinkbox}
\begin{corollary}
[Contraction for FP channels]
\label{prop:diff_contraction}
Under the same hypothesis as in Theorem \ref{thm:diff_contraction}, using the Fokker-Plank interpolant (\ref{eq:FPchannel}) yields a KL exponential contraction with rate $1- \chi^{-1}$ for any $\epsilon>0$. 
\end{corollary}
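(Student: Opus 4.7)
The plan is to obtain this as a direct corollary of Theorem \ref{thm:diff_contraction} by showing that the Lipschitz constant $L_\epsilon$ in Assumption \eqref{eq:lip_ass} vanishes identically whenever the Fokker--Planck interpolant \eqref{eq:FPchannel} is used in place of the linear SI \eqref{eq:aux_SI}. The structural observation driving the proof is that for an FP channel the forward dynamics $\mathrm{d}X_t = f(t,X_t)\,\mathrm{d}t + \sqrt{2\epsilon}\,\mathrm{d}W_t$ transports $\pi$ to $\mathcal{K}\pi$ via a drift $f$ that is intrinsic to the channel itself, not to the initial measure.

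First I would make precise the replacement: using \eqref{eq:FPchannel} as the bridge between $\pi$ and $\mathcal{K}\pi$ (initialized at $X_0 = x \sim \pi$), the time-marginal density $\rho_t^{\pi}$ solves the Fokker--Planck equation $\partial_t \rho_t^\pi + \nabla\cdot(f\,\rho_t^\pi) = \epsilon\,\Delta \rho_t^\pi$. The associated SI velocity and score $b_\pi, s_\pi$ are then characterized so that the combined drift in the Fokker--Planck representation at diffusion level $\epsilon$ satisfies $f_\pi := b_\pi + \epsilon s_\pi = f$. This is a bookkeeping identification, obtained by matching the Fokker--Planck equation induced by the SI construction to the one generated by \eqref{eq:FPchannel} with the same diffusion coefficient $\sqrt{2\epsilon}$; since the FP equation determines the drift uniquely at each $(t,x)$ where $\rho_t^\pi>0$, the identity follows.

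Once this identification is in place, for any two initial measures $\pi,\tilde{\pi}$ one has $f_\pi = f_{\tilde{\pi}} = f$, hence $\|f_\pi - f_{\tilde{\pi}}\|_{\pi_{[0,1]}}^2 = 0$. Thus \eqref{eq:lip_ass} holds with $L_\epsilon = 0$ for every $\epsilon > 0$. In particular the hypothesis $L_\epsilon \chi < 4$ of Theorem \ref{thm:diff_contraction} is automatically satisfied, provided $\chi < \infty$ (which follows e.g.\ from Proposition \ref{prop:qualitative_condnumber} under the compact parametric regularization).

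Substituting $L_\epsilon = 0$ into \eqref{eq:KLcontract} yields
\begin{equation*}
\mathrm{KL}(\pi \,\|\, \pi^{(k)}) \;\leq\; 2\bigl(1 - \chi^{-1}\bigr)^k \mathrm{KL}(\pi \,\|\, \pi^{(0)}) + O(\epsilon^{-3})\,\frac{\delta^2}{(1-\sqrt{\eta})^2},
\end{equation*}
which is precisely the claimed exponential contraction at rate $1-\chi^{-1}$. No genuine obstacle arises: the only subtlety is the first identification step, where one must verify that running the SI framework on the FP-channel bridge (rather than on the linear interpolant) preserves the hypotheses of Theorem \ref{thm:diff_contraction} — in particular, that the score and velocity approximation classes used to define $\chi$ remain meaningful, and that the training objectives \eqref{eq:standard_drift_loss}--\eqref{eq:standard_denoiser_loss} correspond to regressing onto the same channel drift $f$.
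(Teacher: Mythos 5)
Your proposal is correct and follows essentially the same route as the paper: the key observation is that for a Fokker--Planck channel the combined forward drift $f_\pi = b_\pi + \epsilon s_\pi$ coincides with the channel's intrinsic drift $f$ in \eqref{eq:FPchannel} and is therefore independent of $\pi$, so Assumption \eqref{eq:lip_ass} holds with $L_\epsilon = 0$ and Theorem~\ref{thm:diff_contraction} gives the rate $1-\chi^{-1}$. The only difference is that you make the identification $f_\pi = f$ explicit via matching Fokker--Planck equations, whereas the paper states the same fact more tersely.
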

\end{pinkbox}

\section{Case Study: AWGN with Gaussian Prior}
\label{sec:casestudy}

We illustrate our algorithm and the theoretical guarantees using arguably the 
simplest setting of a Gaussian prior $\pi = \mathcal{N}(b, \Sigma)$ and the AWGN channel.  
Since the only relevant quantity is the signal-to-noise ratio, we can assume without loss of generality 
that $\mathcal{F}(x) = x + w$, with $w \sim \mathcal{N}(0, I)$ (i.e., we renormalize the signal so that the noise variance is $1$).

\paragraph{Linear Convergence and ODE advantage}
We suppose that $\Sigma \succ 0$, and we consider a Gaussian initialization $\pi_{0} = \mathcal{N}( b_0, \Sigma_0)$ with $\Sigma_0 \succ 0$, and use the simple linear interpolant 
$$I_t = (1-\sqrt{t}) x + \sqrt{t} \,\mathcal{F}(x) = x + \sqrt{t}\, w~.$$
In the AWGN channel, the transport equation between $\pi$ and $\mathcal{K} \pi$ is straightforward, given by the heat equation $\partial_t \pi_t = \frac12 \Delta \pi_t$ with $\pi_0=\pi$: that is, for any Gaussian measure $\tilde{\pi} = \mathcal{N}( \tilde{b}, \tilde{\Sigma})$, the law $\tilde{\pi}_t$ of $I_t$ is explicitly given by $\tilde{\pi}_t = \mathcal{N}( \tilde{b}, \tilde{\Sigma} + t  I)$.

For any diffusion coefficient $\epsilon \geq 0$, the reverse transport equation is then 
 \begin{align}
     \partial_t \rho_t &= \frac12 \nabla \cdot ( (-(1+\epsilon)\nabla \log \rho_t ) \rho_t) - \frac12 \epsilon \Delta \rho_t ~, 
 \end{align}
 running backwards from the terminal condition $\rho_1 = \mathcal{N}(\tilde{b}, \tilde{\Sigma}+I)$. 
 The corresponding reverse SDE is 
 \begin{align}
 \label{eq:sderev_gaussian}
     dY_{\tilde{t}} &= \frac{1+\epsilon}{2} \nabla \log \tilde{\pi}_{1-\tilde{t}}(Y_{\tilde{t}}) d\tilde{t} + \sqrt{\epsilon}\dd W_{\tilde{t}}~,
 \end{align}
 where now $\tilde{t}=0$ corresponds to measurements and $\tilde{t}=1$ to data. 
For a Gaussian measure $\tilde{\pi}$, the score $\nabla \log \tilde{\pi}_t $ is an affine field given by 
\begin{align}
\label{eq:scoregaussian}
\nabla \log \tilde{\pi}_t (x) &= -( \tilde{\Sigma} + t I)^{-1}( x - \tilde{b}) ~.    
\end{align}
As a result, the solution of the SDE (\ref{eq:sderev_gaussian}) is an Ornstein-Ulhenbeck process that 
can be explicitly integrated. 
Denoting by 
\begin{align}
\label{eq:solutionmap}
\Phi(t, s) &= (\tilde{\Sigma} + (1-t) I)^{(1+\epsilon)/2}(\tilde{\Sigma} + (1-t) I)^{-(1+\epsilon)/2}    
\end{align}
the solution map from time $s$ to time $t$, we have 
\begin{align}
\label{eq:scoregaussian1}
    Y_1 &= \tilde{b} + \Phi(1, 0)( Y_0 - \tilde{b}) + \sqrt{\epsilon}\int_0^1 \Phi(1, s) \dd W_s ~,
\end{align}
and therefore that the Gaussian structure is preserved throughout the iterations. Recalling that 
$\pi_{k+1}= \text{Law}(Y_1)$, with $\tilde{b} = b_k$, $\tilde{\Sigma} = \Sigma_k$ above, 
given current parameters $b_{k}$, $\Sigma_{k}$, we verify from (\ref{eq:scoregaussian1}) that the new parameters become
\begin{propbox}
\vspace{-0.5cm}
\begin{align}
\label{eq:SIupdate}
    b_{k+1}& = b_k + B_k (b - b_k) \quad,\quad \Sigma_{k+1} = \Sigma_{k} + B_{k} ( \Sigma - \Sigma_{k} ) B_{k} ~, \, \text{with} \\
     B_{k} &= ( \Sigma_{k} (\Sigma_k + I)^{-1} )^{(1+\epsilon)/2}~. \nonumber
\end{align}
\end{propbox}

We now establish that the means $b_k$ and covariances $\Sigma_{k}$ converge respectively to $b$ and $\Sigma$ at a linear rate, with the proof given in Appendix~\ref{app:awgn_convergence}.
\begin{pinkbox}
\begin{proposition}[Gaussian AWGN Linear Convergence]
\label{prop:gaussianconv}
    Let $\eta = \min\left( \lambda_{\min}(\Sigma), \lambda_{\min}(\Sigma_0) \right)$ and assume $\eta>0$. Then 
    \begin{align}
\label{eq:finalrate}
    \| \Sigma - \Sigma_k \| & \leq (1-\eta^{1+\epsilon}(1+\eta)^{-1-\epsilon} )^k \| \Sigma - \Sigma_0\|~, \\
    \| b - b_k \|^2 & \leq (1-\eta^{1+\epsilon}(1+\eta)^{-1-\epsilon} )^k \| b - b_0\|^2.
\end{align}
\end{proposition}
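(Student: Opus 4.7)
The plan is to exploit the fact that $B_k$ is a spectral function of $\Sigma_k$ (hence commutes with it and has all eigenvalues in $(0,1)$), establish a spectral lower-bound invariant for the iteration, and then contract the mean and covariance errors separately.

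The first step is to prove the inductive claim that $\lambda_{\min}(\Sigma_k) \geq \eta$ for every $k\geq 0$. Using the commutation $\Sigma_k B_k = B_k \Sigma_k$, I would rewrite the covariance update as
\begin{equation*}
\Sigma_{k+1} \;=\; \Sigma_k(I - B_k^2) + B_k \Sigma B_k.
\end{equation*}
Since $\Sigma - \lambda_{\min}(\Sigma) I \succeq 0$ and $B_k \succeq 0$, sandwiching yields $B_k\Sigma B_k \succeq \lambda_{\min}(\Sigma)\, B_k^2 \succeq \eta B_k^2$. Combined with $\Sigma_k(I - B_k^2) \succeq \eta (I - B_k^2)$ (using $B_k^2 \prec I$, and that $\Sigma_k$ and $I-B_k^2$ are simultaneously diagonalizable), this gives $\Sigma_{k+1} \succeq \eta (I - B_k^2) + \eta B_k^2 = \eta I$. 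This invariance then feeds a uniform lower bound $\lambda_{\min}(B_k)^2 \geq (\eta/(\eta+1))^{1+\epsilon} = \eta^{1+\epsilon}(1+\eta)^{-1-\epsilon}$ by monotonicity of $\lambda\mapsto \lambda/(\lambda+1)$.

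With the invariant in hand, the mean contraction is immediate: the update rearranges to $b-b_{k+1} = (I-B_k)(b-b_k)$, so
\begin{equation*}
\|b-b_{k+1}\|^2 \leq \|I-B_k\|^2 \|b-b_k\|^2 \leq \bigl(1-\lambda_{\min}(B_k)\bigr)^2 \|b-b_k\|^2 \leq \bigl(1-\lambda_{\min}(B_k)^2\bigr)\|b-b_k\|^2,
\end{equation*}
where the last step uses the elementary inequality $(1-x)^2 \leq 1-x^2$ for $x\in[0,1]$. Iterating and substituting the uniform bound on $\lambda_{\min}(B_k)^2$ yields the stated geometric rate.

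For the covariance, set $F_k := \Sigma - \Sigma_k$, so that $F_{k+1} = F_k - B_k F_k B_k$. Vectorizing and using $\mathrm{vec}(B_k F_k B_k) = (B_k \otimes B_k)\mathrm{vec}(F_k)$ (recall $B_k$ is symmetric), one gets $\mathrm{vec}(F_{k+1}) = (I - B_k\otimes B_k)\mathrm{vec}(F_k)$. The eigenvalues of $I - B_k\otimes B_k$ are $1-\lambda_i(B_k)\lambda_j(B_k) \in (0,1)$, so its $\ell_2$ operator norm equals $1 - \lambda_{\min}(B_k)^2$. Consequently $\|F_{k+1}\|_F \leq (1-\lambda_{\min}(B_k)^2)\|F_k\|_F$, and iterating with the uniform lower bound from Step~1 gives the claimed linear rate (in Frobenius norm, which also controls operator norm up to equivalence).

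The main obstacle is Step 1, the Loewner lower bound $\Sigma_{k+1}\succeq \eta I$. Writing $\Sigma_{k+1}$ as the sum $\Sigma_k(I-B_k^2) + B_k\Sigma B_k$ is the key algebraic move that decouples the non-commuting piece $B_k\Sigma B_k$ (handled by sandwich inequality) from the commuting piece $\Sigma_k(I-B_k^2)$ (handled by simultaneous diagonalization), and it is essential that the same $\eta$ lower-bounds both $\Sigma$ and $\Sigma_k$ so that the two pieces combine back to $\eta I$. Once this invariance is secured, both contractions are straightforward linear-algebra calculations.
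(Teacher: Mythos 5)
Your proof is correct, and the key invariant $\lambda_{\min}(\Sigma_k)\geq\eta$ is established exactly as in the paper (same split $\Sigma_{k+1}=\Sigma_k(I-B_k^2)+B_k\Sigma B_k$, same use of commutativity); the mean contraction is also essentially the paper's. Where you genuinely diverge is the covariance step. The paper works in the operator norm and argues, in a footnote, via the Loewner bounds $0\preceq X_p-BX_pB\preceq(1-\lambda_{\min}(B)^2)X_p$ on the positive part $X_p$ of a symmetric $X$, concluding $\|X-BXB\|\leq(1-\lambda_{\min}(B)^2)\|X\|$. You instead vectorize the recursion $R_{k+1}=R_k-B_kR_kB_k$ and bound the spectral norm of $I-B_k\otimes B_k$, which gives the Frobenius contraction $\|R_{k+1}\|_F\leq(1-\lambda_{\min}(B_k)^2)\|R_k\|_F$. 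This is not merely an equivalent route: the paper's Loewner step $X_p-BX_pB\succeq0$ is actually false once $B$ does not commute with $X_p$ (take $X_p$ the all-ones $2\times2$ matrix and $B=\mathrm{diag}(0.5,0.9)$; then $\det(X_p-BX_pB)=-0.16<0$), and the operator-norm estimate itself can fail (take $X$ the symmetric $2\times2$ matrix with diagonal $(1,-1)$ and off-diagonal $1$, and $B=\mathrm{diag}(0.1,0.99)$; then $\|X-BXB\|\approx1.52>0.99\sqrt{2}\approx1.40$). Your Kronecker-product argument is therefore the one that actually closes the one-step contraction. The small caveat, which you flag yourself, is that it gives the rate in Frobenius norm, and passing to the spectral norm via norm equivalence introduces a $\sqrt{d}$ factor that the proposition's bound does not display; since the statement does not pin down which matrix norm is meant, and Frobenius is the natural one for the underlying $L^2$ interpolant losses, your reading is defensible and your proof is sound.
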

\end{pinkbox}

We thus verify that the iterative scheme converges at linear rate, with a factor $1 - \eta^{1+\epsilon}(1+\eta)^{-1-\epsilon}$. 
The setting that optimizes this upper bound is given by $\epsilon=0$, 
which corresponds to the ODE setting.  Interestingly, this relative advantage 
of the ODE is also observed in our numerical experiments of next section. 
In particular, in the low SNR regime where $\eta \ll 1$, the convergence rate is of order $(1- \eta)^k$. 
As may be expected, in the high SNR regime the convergence rate is much faster, since the effect of the channel is 
relatively minor. 
Additionally, and since in the finite-dimensional representation of Gaussian measures all metrics are 
equivalent, from (\ref{eq:finalrate}) one obtains linear convergence both in Wasserstein 
and in KL divergence. 

\paragraph{Comparison with EM}
Another interesting aspect of the Gaussian AWGN example is that we can compare the performance of our scheme to the actual EM algorithm of \cite{rozet2024learning}. 
As in our scheme, the Gaussian structure is preserved throughout the iterations, and moreover it corresponds to a mirror descent algorithm \cite{kunstner2021homeomorphic}. 

Focusing on the centered Gaussian setting for simplicity, 
the EM algorithm updates a current Gaussian prior $\pi_k$ with covariance  $\tilde{\Sigma}_k$ by first considering the posterior 
\begin{align}
p_k( x | y) &= \frac{\gamma_{0,\tilde{\Sigma}_k}(x) \gamma_{x,I}(y)}{\gamma_{0, \tilde{\Sigma}_k + I}(y) }~,
\end{align}
where $\gamma_{b, \Sigma}$ denotes the Gaussian density with mean $b$ and covariance $\Sigma$. This posterior is then used to update the prior via the `M' step, defining the mixture
\begin{align}
    \dd \pi_{k+1}(x) &= \E_{y \sim \mu} [ p_k( \dd x | y)] = \gamma_{0,\tilde{\Sigma}_k}(x) \E_{y \sim \gamma_{0, \Sigma+I}} \left[\frac{\gamma_{x,I}(y)}{\gamma_{0, \tilde{\Sigma}_k + I}(y)} \right] \dd x~.
\end{align}
We can explicitly compute the RHS, since it is a Gaussian integral, resulting in $\pi_{k+1} = \gamma_{0, \tilde{\Sigma}_{k+1}}$, with precision matrix $\tilde\Sigma_{k+1}^{-1} = \tilde{\Sigma}_k^{-1} + I - (( \Sigma + I)^{-1} - (\tilde{\Sigma}_k + I)^{-1} + I )^{-1} $.
As a result, the covariances $\tilde{\Sigma}_k$ evolve under EM according to 
\begin{propbox}
\vspace{-0.5cm}
\begin{align}
\label{eq:EMupdate}
    \tilde{\Sigma}_{k+1} &= \tilde{\Sigma}_k + M_k ( \Sigma - \tilde{\Sigma}_k) M_k  ~,~\text{with } 
    M_k = \tilde{\Sigma}_k ( \tilde{\Sigma}_k + I)^{-1}~.
\end{align}
\end{propbox}
Notice that this EM update is identical to the update (\ref{eq:SIupdate}) for the choice $\epsilon=1$. Crucially, the asymptotic rate of this choice is \emph{suboptimal} in our class, as it is dominated by the choices $\epsilon < 1$; in particular, in the low SNR regime $\eta \ll 1$, EM has a rate of $(1-\eta^2)^k$, in contrast with the $(1-\eta)^k$ of the ODE ($\epsilon=0$), indicating that the ODE marginal transport provides some form of acceleration.
Beyond the asymptotic rates, the EM algorithm enjoys a non-asymptotic rate of $O(1/k)$ thanks to its mirror-descent structure. By analogy with the acceleration phenomenon in convex optimization, one could expect an accelerated non-asymptotic rate of $O(1/k^2)$ for the ODE scheme: we verify that this indeed the case empirically; see Figure \ref{fig:EMode}, but this non-asymptotic analysis is out of the present scope. 

\begin{figure}
    \centering
    \includegraphics[width=0.4\linewidth]{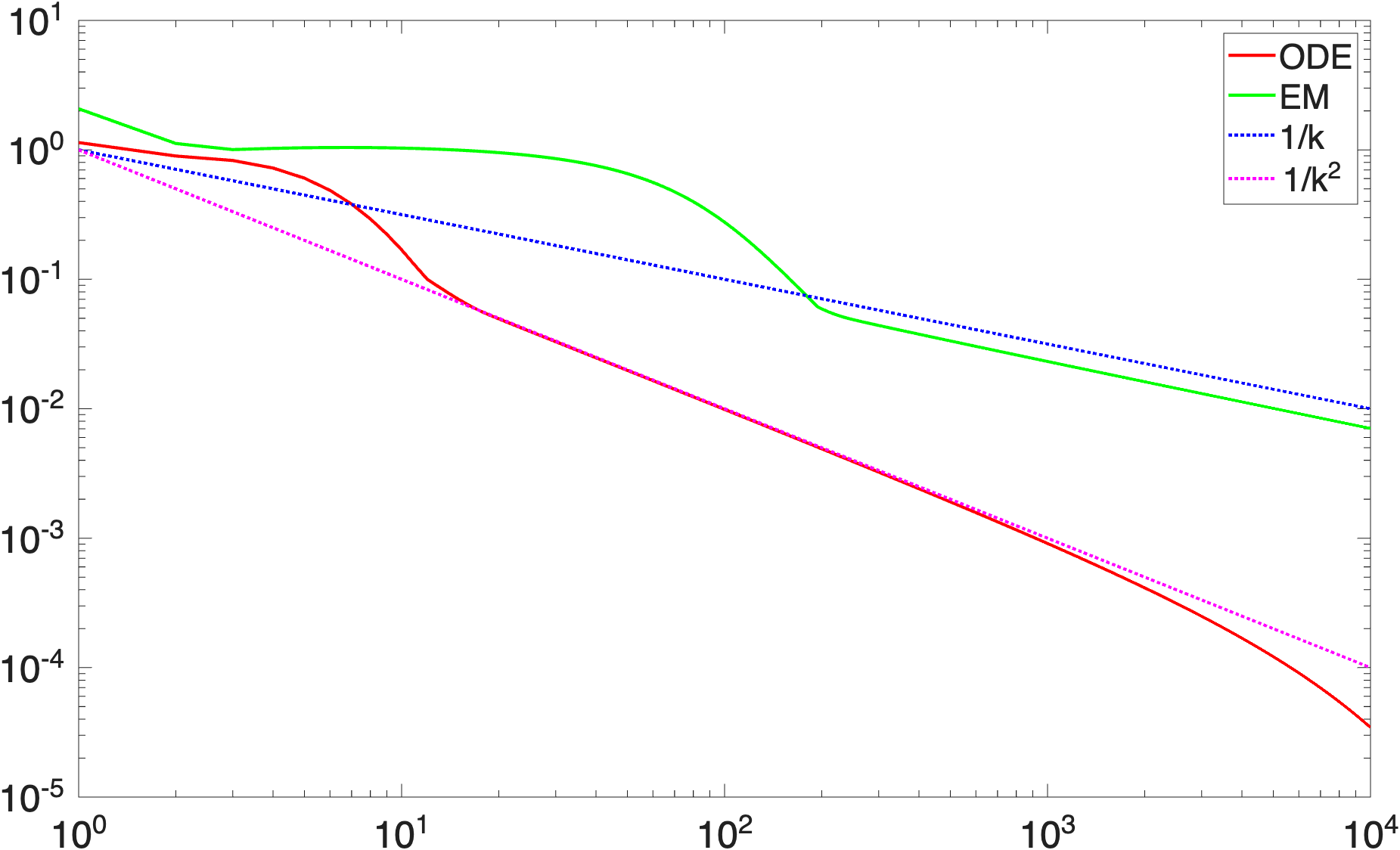}
    \caption{Convergence of $\| \Sigma - \Sigma_k \|^2$ using either ODE ($\epsilon=0$, in red) or the EM algorithm ($\epsilon=1$, green curve), for $\Sigma$ drawn from the Wishart distribution. In dashed we plot the rates $1/k^2$ and $1/k$ respectively.}
    \label{fig:EMode}
\end{figure}

\paragraph{Verifying convergence assumptions}
Finally, the Gaussian AWGN example allows us to verify our assumptions from Section \ref{sec:theory}.
Let us focus on the ODE setting, since it corresponds to the fastest regime. 
For a psd matrix $A$, we denote by $T_A = (A (A+I)^{-1})^{1/2}$ its associated solution map from (\ref{eq:solutionmap}). Recall that the Lipschitz Wasserstein assumption \ref{ass:wass} compares, for a pair of distributions (here both assumed to be Gaussian) $\mathcal{N}(0, A)$ and $\mathcal{N}(0, B)$, the transport cost 
$$\E_{y \sim \mathcal{N}(0, A+ I)}[ \| T_A(y) - T_B(y) \|^2] = \text{Tr}( (T_A - T_B) (A + I) ( T_A - T_B) ):= \mathcal{T}( A; B)$$
to the squared-Wassrestein distance
$$W_2^2( \mathcal{N}(0, A), \mathcal{N}(0, B) )  = \text{Tr} ( A + B - 2( A^{1/2} B A^{1/2} )^{1/2} )~.$$
First, we notice that in the limit of the SNR going to zero, i.e., $\|A \|, \|B\| \to 0$, we have 
$\mathcal{T}( A ; B) = \text{Tr}(A) + \text{Tr}(B) -2 \text{Tr}(A^{1/2} B^{1/2}) + O(\|A\|^2+\|B\|^2)$, while $W_2^2 = \text{Tr}(A) + \text{Tr}(B) -2 \| A^{1/2} B^{1/2} \|_*$. 
From the variational characterization $\| Y\|_* = \sup_{\|X\| \leq 1} \langle Y, X \rangle \geq \langle Y, I \rangle = \text{Tr}(Y)$, we thus verify that $R$ cannot be uniformly upper bounded by $1$ in the PSD cone. 
However, for macroscopic SNR, we simulate pairs $(A,B)$ from the Wishart ensemble and verify numerically that  $\mathcal{T}( A ; B) < W_2^2(A, B)$ with overwhelming probability, even for moderately small SNRs; see Figure \ref{fig:w2t}.  
\begin{figure}
    \centering
    \includegraphics[width=0.3\linewidth]{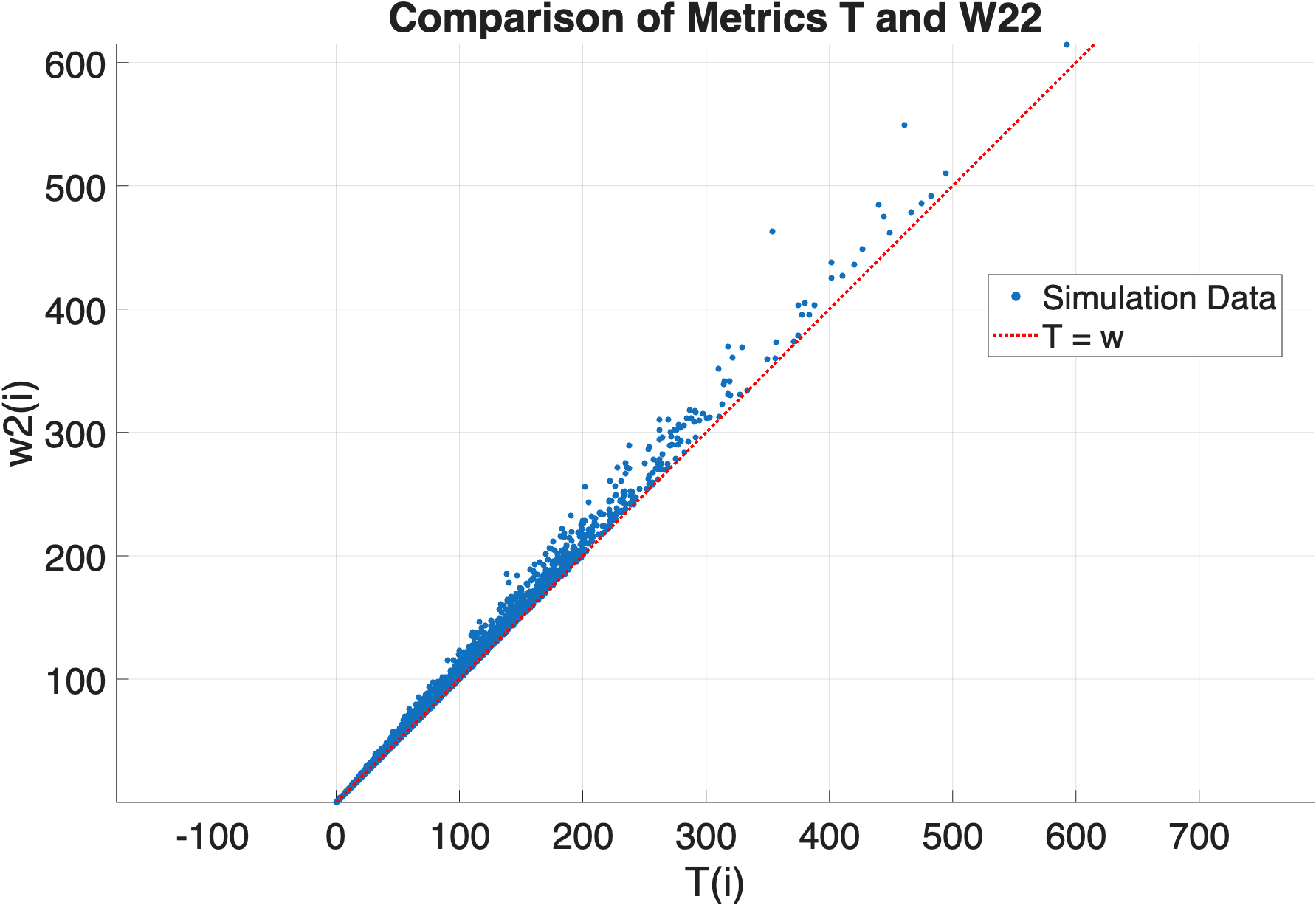}
    \includegraphics[width=0.3\linewidth]{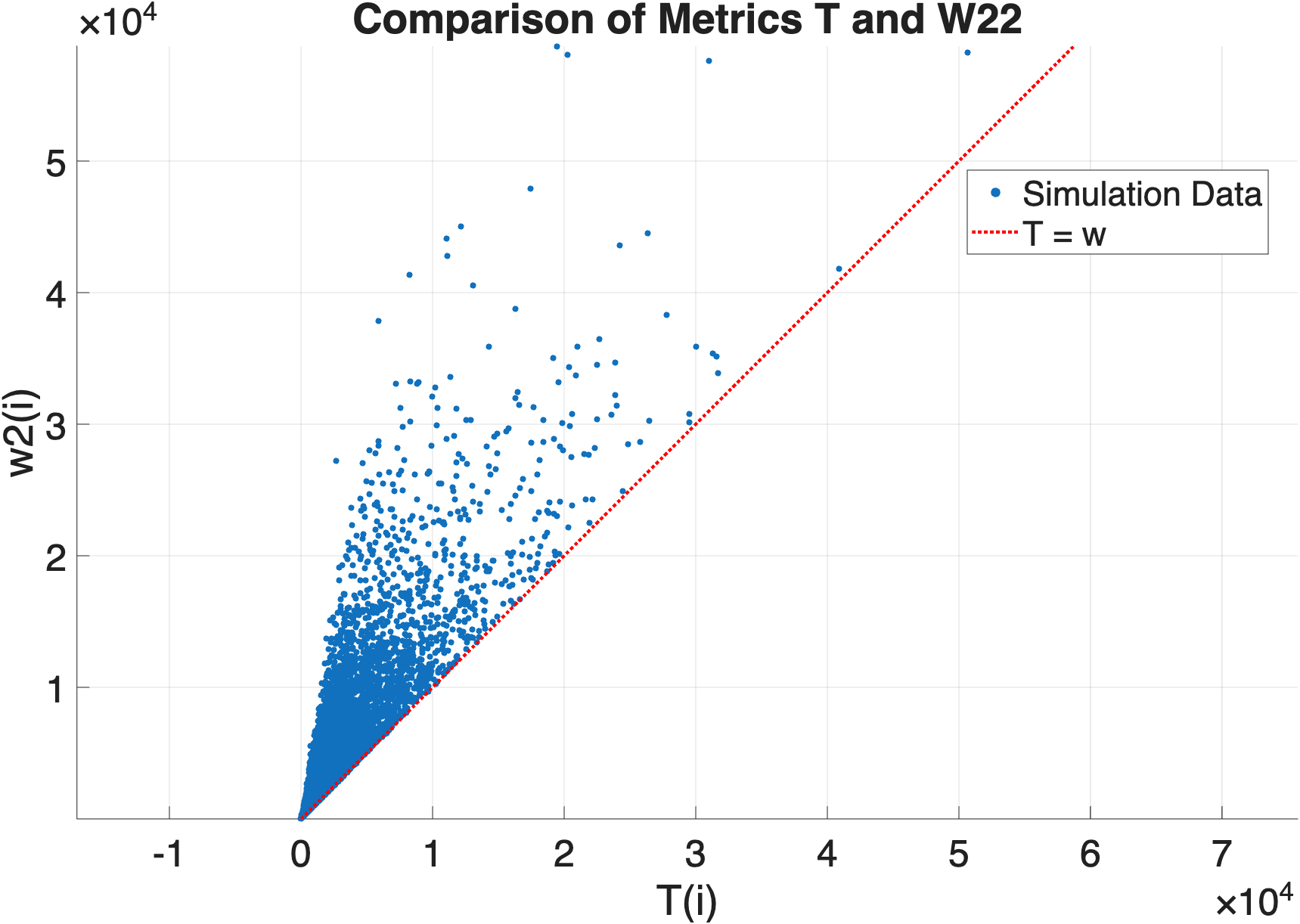}
        \includegraphics[width=0.3\linewidth]{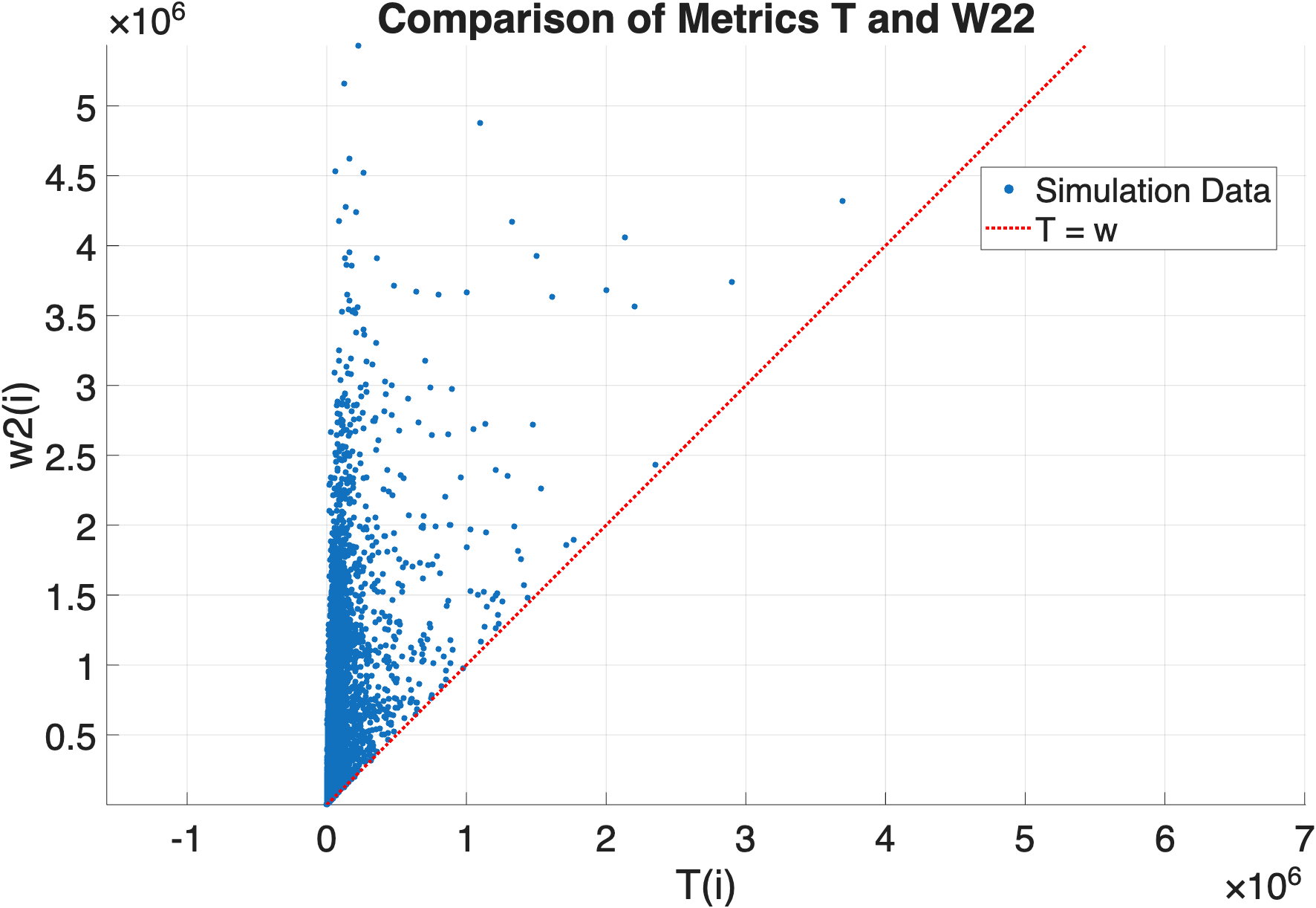}
    \caption{Scatterplot of $W_2^2(A, B)$ (y-axis) versus $\mathcal{T}(A; B)$ (x-axis), for pairs of independent Wishart matrices $A$ and $B$. Red line corresponds to $R=1$. Left: Low SNR, Center: Moderate SNR, Right: High SNR. We observe that $R<1$ with high probability.}
    \label{fig:w2t}
\end{figure}

Similarly, the condition number (\ref{eq:chibound}) needed in the KL guarantees can be explicitly bounded in this setting. Indeed, for $\pi = \gamma_{\Sigma}$ we have 
\begin{align}
    \chi = \sup_{\gamma_{\tilde{\Sigma}}} \frac{\mathrm{KL}( \pi || \gamma_{\tilde{\Sigma}})}{\mathrm{KL}( \mathcal{K}\pi || \gamma_{I+\tilde{\Sigma}})} & \leq (1 + \lambda_{\min}(\Sigma)^{-1})^2 \simeq \eta^{-2}~.
\end{align}
By plugging the rate $(1- \chi^{-1})^k$ from Corollary \ref{prop:diff_contraction}, corresponding to a Fokker-Plank channel with $\epsilon=1$ \footnote{note that in the Fokker-Plank channels, the diffusion coefficient $\epsilon$ is part of the channel, and cannot therefore be tuned for inference as in the general SI setting.}, we thus recover the previous rate.

\section{Experiments}
We apply self-consistent SI to a variety of forward models across three settings: (i) synthetic low-dimensional datasets, (ii) imaging tasks, and (iii) a scientific application in quasar spectral recovery. In settings (ii) and (iii), we report restored samples in comparison with the clean samples as a byproduct of the learned transport map, \emph{though our primary goal is to learn a generative model of the clean data distribution rather than an inverse solver operating on individual corrupted samples.} 

In some tasks, a latent variable $M$ associated with $\mathcal{F}$ is observed, such as the random mask accompanying each observation in the masking task. In such cases, we additionally condition the vector fields on $M$. This procedure is fully compatible with our framework: it is equivalent to appending $M$ to the observation, redefining the forward map as $\mathcal{F}(x) = (y, M)$, and keeping the corresponding channels of the SI constant with value $M$. More implementation details are provided in App.~\ref{app:implementation}.

\subsection{Low Dimensional Synthetic Models}
\label{sec:synthetic}
We use this setting to compare between the ODE and SDE formalism.
We take the two-moon dataset for the true data distribution and consider the AWGN channel, i.e.,   $\mathcal{F}$ as the corruption with Gaussian noise of fixed variance $\sigma_n$.
In Fig. \ref{fig:twomoon}, we show the results for a high noise ($\sigma_n=1.0$) and low noise ($\sigma_n=0.5$) setup. 
For low or intermediate noise, both formalisms give similar results. However, for the high noise case, ODE restoration collapses into artificially thin arms for the two moons while SDE results remain stable. 
\begin{wrapfigure}{r}{0.55\textwidth}
\vspace{-5pt}
\begin{center}
\centerline{\includegraphics[width=1.1\linewidth]{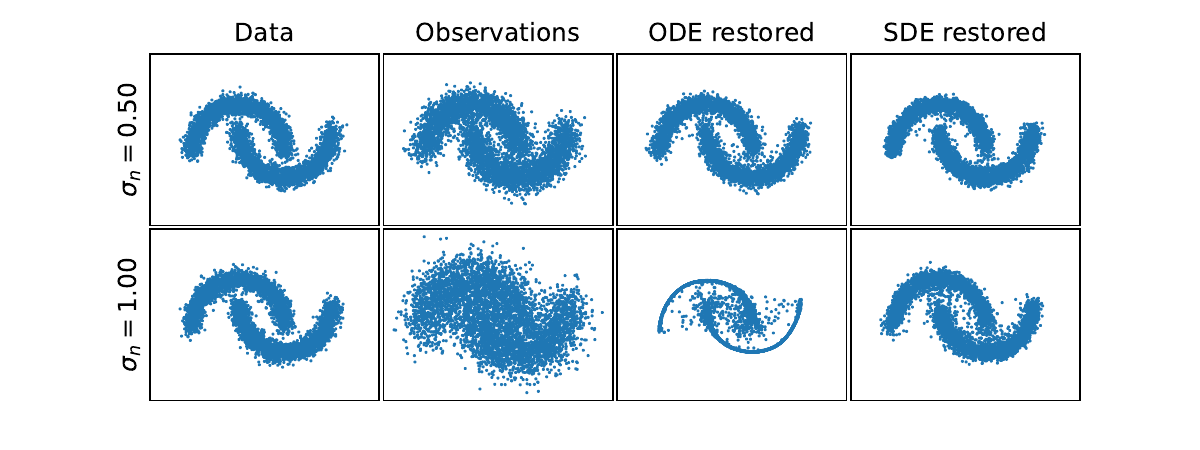}}
\end{center}
\vspace{-22pt}
\caption{
AWGN channel: Comparing ODE and SDE restoration for different noise levels ($\sigma_n$).}
\label{fig:twomoon}
\vspace{-8pt}
\end{wrapfigure}
\rsp{While the SDE formalism can be more robust for \textit{highly} corrupting forward models in synthetic examples, we find that for high dimensional experiments, it is also more sensitive to hyperparameters like noise schedule and number of transport steps. On the other hand, ODE approach works well for moderate corruptions, is largely robust to choices of hyperparameters and is less computationally expensive than SDE. Hence we only present ODE results for other experiments in the main-text and defer discussion on SDE results to Appendix \ref{app:sde}.
}

\vspace{-.9em}
\subsection{Imaging Tasks}
\paragraph{Setup} We use CIFAR-10 dataset (and CelebA dataset for JPEG compression) as the clean data distribution $\pi$ and generate one observation $y$ per image with the forward model. 
We model the velocity field $b$ in our SI with the U-net from \cite{dhariwal2021diffusion}, but using only 64 channels, resulting in $\sim$32 million parameters\footnote{Specifically, we use the implementation \href{https://github.com/NVlabs/edm/blob/main/training/networks.py}{here}.}.
We train all networks for 50,000 iterations, which required $\sim$54 GPU hours on A100. \rsp{In Appendix~\ref{app:networksize}, we provide additional ablations on how network size affects final performance.}

\vspace{-0.6em}
\paragraph{Comparison methods for sample restoration}
We assess restoration quality by evaluating the LPIPS metric~\citep{zhang2018perceptual} and compare our SI approach with two reference methods. The first is DPS~\citep{chung2022diffusion}, a popular and strong inverse solution based on diffusion models.
DPS requires a pre-trained diffusion model on the original dataset to solve the inverse problem. 
Hence, we train a large diffusion model with a similar U-net architecture but 96 channels instead of 64 ($\sim$ 70 million parameters), which achieved an FID of 5.16.
For every inverse problem, we also did a grid search to select the best guidance strength hyperparameter as we found the good values to be very different from the recommendations in the original paper. Hence, this baseline has four advantages over our approach: i) most importantly, it \emph{uses the clean data to train a generative model}, ii) it requires gradients of the forward map unlike our black-box only access, iii) our implementation uses a 2x larger neural network, and iv) benefits from a task-specific hyperparamter search.

\rsp{The second reference is an \emph{oracle restoration model}, obtained by training an SI model with paired clean–corrupted data as in \cite{liu20232}. It is not a competing baseline, but an upper bound on the performance achievable by interpolant-based methods with comparable architecture and compute.}

\paragraph{i) Random masking} 
Following \cite{daras2023ambient, rozet2024learning}, this map generates an observation $y$ by masking each pixel of an image $x$ independently with probability $\rho$, and adding isotropic Gaussian noise ($\sigma_n$).
As in their setting, we assume access to the mask $M$ for each $y$ and use it to condition our SI. We also pre-process the observations by adding independent standard Gaussian noise to masked pixels as it improves the final results. 
We show the restored images in Fig. \ref{fig:randommask} and LPIPS metric for masking probability $\rho=0.5$ and two noise levels ($\sigma_n$) in Table \ref{tab:lpips}. \rsp{Our restored samples are comparable to DPS in the low-noise case and outperform it in higher-noise regimes, and they remain reasonably close to the oracle SI model across most settings. Additional quantitative metrics are provided in Appendix~\ref{app:restoration}.}

\begin{table}[ht]
\centering
\begin{minipage}[t]{0.55\textwidth}
    \centering
    \caption{
        \rsp{LPIPS~$\downarrow$ for restoration quality of our SCSI, DPS, and the SI-Oracle. DPS requires clean pre-training data and forward-map gradients for restoration, and the oracle is trained with paired clean–corrupted data.}
    }
    \label{tab:lpips}
    \renewcommand{\arraystretch}{1.34}
    \footnotesize
    \setlength{\tabcolsep}{5pt}
    \begin{tabular}{@{}lccc@{}} %
    \toprule
    \multicolumn{1}{c}{\textbf{Forward Model}} & \textbf{SCSI} & \textbf{DPS} & \textbf{SI-Oracle} \\
    \midrule
    Random Mask ($\sigma_n=10^{-6}$) & 0.0051 & 0.0049 & 0.0044\\
    Random Mask ($\sigma_n=0.1$) & 0.0064 & 0.0072 & 0.0055\\
    Gaussian Blur ($\sigma_n=0.1$) & 0.005 & 0.009 & 0.0051\\
    Gaussian Blur ($\sigma_n=0.25$) & 0.015 & 0.025 & 0.0011\\
    Motion Blur ($\sigma_n=10^{-6}$) & 0.0069 & 0.0026 & 0.003\\
    Motion Blur ($\sigma_n=0.1$) & 0.011 & 0.012 & 0.003 \\
    \bottomrule
    \end{tabular}
\end{minipage}
\hfill
\begin{minipage}[t]{0.43\textwidth}
    \centering
    \caption{
        FIDs for random masking with different masking probabilities $\rho$. To account for differing architectures, 'baseline' is FID for our model on the clean CIFAR-10 data.
    }
    \label{tab:fid}
    \footnotesize
    \begin{tabular}{@{}lcc@{}} %
    \toprule
    \multicolumn{1}{c}{\textbf{Method}} & \textbf{$\rho$} & \textbf{FID} $\downarrow$ \\
    \midrule
    \multirow{2}{*}{Ambient Diffusion} & 0.20 & 11.70 \\
                                       & 0.40 & 18.85 \\
    \cmidrule(l){2-3} %
    \multirow{2}{*}{EM Posterior}      & 0.25 & 5.88 \\
                                       & 0.50 & 6.76 \\
    \cmidrule(l){2-3}
    \multirow{2}{*}{\textbf{SCSI} (generated)} & 0.25 & \textbf{5.38} \\
                                       & 0.50 & \textbf{6.74} \\
    \cmidrule(l){2-3}
    \multirow{1}{*}{{Baseline}}        & 0.00 & 5.16  \\
    \bottomrule
    \end{tabular}
\end{minipage}
\vskip -10pt
\end{table}

To compare with inverse generative models from prior work, we use the trained SI to restore the observations; that is, we transport all observations $y$ to the data space via $\Phi_{\Theta}(y)$, and use these samples to train a generative diffusion model. We use the same architecture as above, but with 96 channels. Table~\ref{tab:fid} shows the FID scores for observations with two different masking probabilities and negligible added Gaussian noise ($\Sigma = 10^{-6}$). SCSI vastly outperforms Ambient Diffusion~\citep{daras2023ambient}. It is comparable with EM Posterior method \citep{rozet2024learning}, but \textit{more computationally efficient}: we required a combined $\sim$ 86 GPU hours (54 and 32 GPU hours to train SI and diffusion model respectively) compared to their 512 GPU hours.

\begin{figure}[ht]
\begin{subfigure}[h]{0.33\linewidth}
\centering
\includegraphics[trim={0pt 20pt 0pt 20pt}, clip, width=\linewidth]{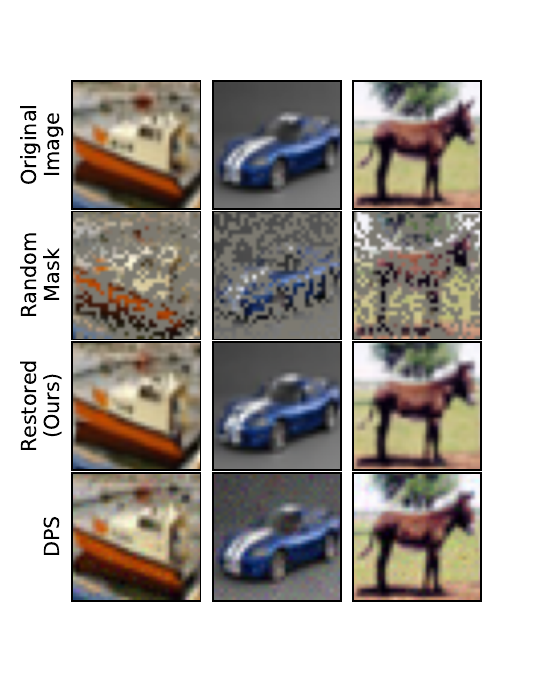}
\vspace{-20px}
\caption{\centering
$\mathcal{F}$: Random mask \\($\rho=0.5$, $\sigma_n=10^{-6}$).}
\label{fig:randommask}
\end{subfigure}
\hfill
\begin{subfigure}[h]{0.33\linewidth}
\centering
\includegraphics[trim={0pt 20pt 0pt 20pt}, clip, width=\linewidth]{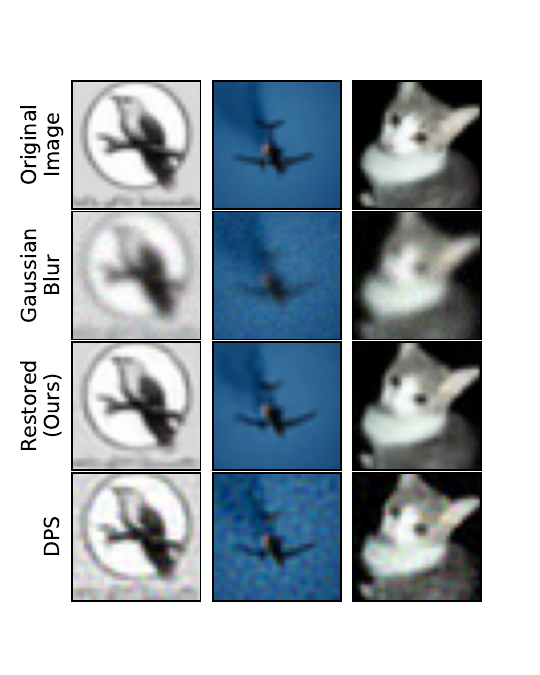}
\vspace{-20px}
\caption{\centering $\mathcal{F}$: Gaussian blur + noise \\($\sigma_R=1, \sigma_n=0.1$).}
\label{fig:gaussianblur}
\end{subfigure}%
\hfill
\begin{subfigure}[h]{0.33\linewidth}
\centering
\includegraphics[trim={0pt 20pt 0pt 20pt}, clip, width=\linewidth]{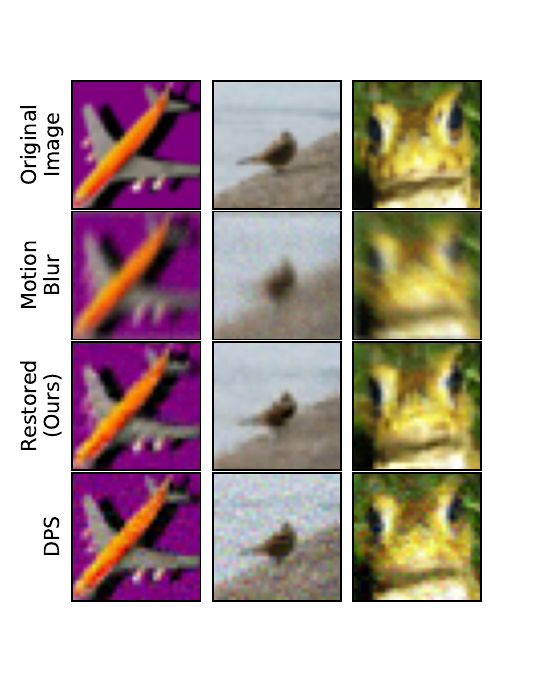}
\vspace{-20px}
\caption{\centering
$\mathcal{F}$: Motion blur + noise\\ ($\text{kernel size}=5$, $\sigma_n=0.1$)}
\label{fig:motionblur}
\end{subfigure}
\vskip -5pt
\caption{Restored samples for different forward maps from our interpolants and DPS.}
\vskip -10pt
\end{figure}

\paragraph{ii) Gaussian blurring with noise}
The forward map is blurring with a Gaussian kernel with $\sigma_R=1$ and adding noise. Here we add Gaussian noises with different levels ($\sigma_n=0.10, 0.25$) and show the results for Poisson noise case in Appendix~\ref{app:blurpoisson}. 
This demonstrates that, unlike previous works, e.g., \cite{daras2023ambient}, our approach can handle non-negligible and non-Gaussian noise. 

\paragraph{iii) Motion blurring} The previous two examples involve linear forward maps. We now consider a nonlinear one: motion blur.
Fig.~\ref{fig:motionblur} shows restored samples for observations with a 5-pixel motion kernel and small Gaussian noise ($\Sigma=10^{-6}$).
The blur direction is randomly assigned per image and assumed known for conditioning the SI.
While \cite{daras2023ambient, rozet2024learning} are limited to linear operators, SCSI handles nonlinear maps with only black-box access.

\paragraph{iv) JPEG compression}
This is another common non-linear corruption operator with real-world applications.
The forward map is JPEG compression with quality factor ($q$) and added Gaussian noise $(\sigma_n=0.01)$.
For training, we corrupt every image randomly with a different factor $q \sim \mathcal{U}[0.1, 1]$ (where $q=1$ implies no compression) and assume this latent parameter $q$ is known for each observation to condition the SI.
Fig.~\ref{fig:jpeg100} shows the restored image with our trained SI for different strengths of compression. The restored image gets closer to the original with lower compressions, and we restore a physically plausible image even for $q=0.1$. In Appendix~\ref{app:jpeg}, we show that our trained SI is stable to extrapolations of $q$ outside the training regime. 

\begin{figure}[ht]
\centering
\includegraphics[width=0.62\linewidth]{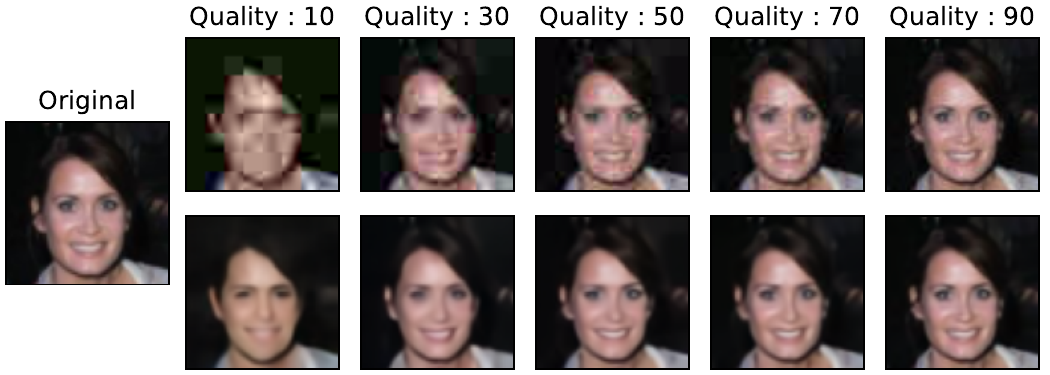}
\caption{JPEG + noise: results for different compression level (Top: Corrupted; Bottom: Restored).}
\label{fig:jpeg100}
\vskip -10pt
\end{figure}

\subsection{Quasar Spectra}
Quasars are observed through telescopes and hence the observed spectra differ from the underlying true spectra due to noise, finite spectral resolution and finite observation time.
Recovering the true spectra from these observations is of  interest to both study individual objects and to understand the evolution of quasars as a whole.
For the true data distribution, we take the quasar spectra from Sloan Digital Sky Survey data release~\citep{SDSS_quasar}. We isolate 30,000 quasars in redshift $z\in[2.75, 3.25]$ and consider $\lambda \in [400\mathrm{nm}, 650\mathrm{nm}]$ resulting in spectra of length $D=1024$.
We approximate the forward model with a combination of flux calibration error (offset), a Gaussian smoothing, and added Gaussian noise. Unlike imaging examples, for every observation, we randomly vary the size of the smoothing kernel within 5\% and add noise with a different magnitude depending on a randomly chosen SNR. We assume we do not have access to these latent parameters for any observation. 
In Fig.~\ref{fig:qsos}, we show the restored spectra for observations in the two extreme regimes that different telescopes operate in: i)~observations with high spectral resolution and low SNR (high noise), and ii)~those with low spectral resolution and high SNR. \rsp{For comparison, we also show Wiener filter (WF) restoration as a baseline. SCSI is best able to recover important features like peak heights and locations which are used to determine the distance and metallic composition of quasars.}

\begin{figure}[ht]
\vskip -15pt
\begin{subfigure}[h]{0.52\linewidth}
\centering
\includegraphics[width=\linewidth]{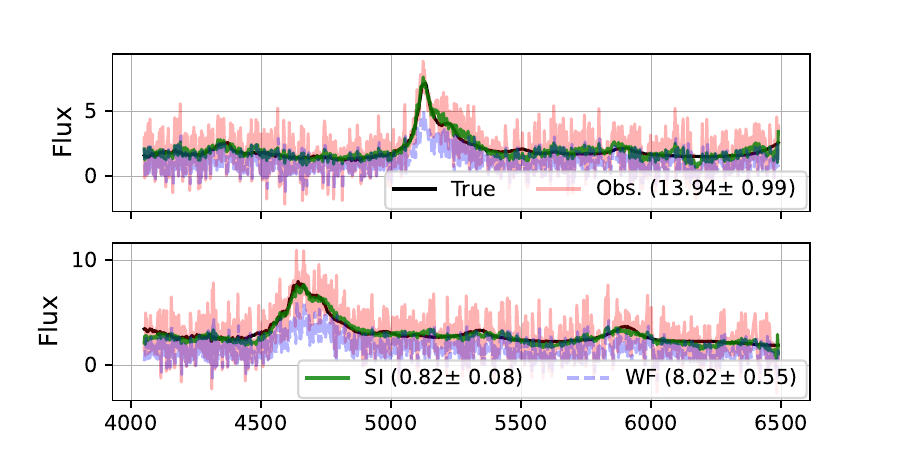}
\vspace{-10pt}
\caption{\centering
$\mathcal{F}$: High spectral resolution, low SNR
$\Delta \log \lambda  = \frac{1}{5000}\pm 5\%$, SNR $\in [0.5, 2]$}
\label{fig:qsolowsnr}
\end{subfigure}
\hfill
\begin{subfigure}[h]{0.52\linewidth}
\centering
\includegraphics[width=\linewidth]{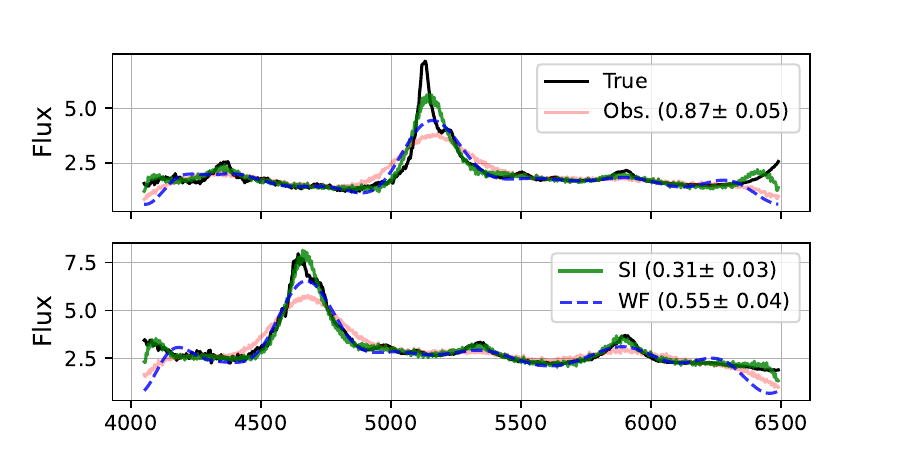}
\vspace{-10px}
\caption{\centering
$\mathcal{F}$: Low spectral resolution, high SNR
$\Delta \log \lambda  = \frac{1}{50}\pm 5\%$, SNR $\in [30, 50]$}
\label{fig:qsohighsnr}
\end{subfigure}%
\caption{Restored quasar spectra for different scenarios. \rsp{Legends show MSE over 1000 samples.}}
\label{fig:qsos}
\vskip -10pt
\end{figure}

\section{Conclusion}
We presented a self-consistent SI framework for reconstructing the underlying data distribution using only corrupted observations and a black-box forward model. The proposed bi-level iterative scheme is computationally practical and enjoys provable convergence under suitable assumptions. Compared to existing approaches, the proposed SCSI accommodates a much broader class of nonlinear \rsp{and non-Gaussian} forward models. Experimentally, we demonstrated its effectiveness across a range of inverse problems, achieving competitive performance even against methods that rely on additional access to the forward model (e.g., Ambient Diffusion) or even clean data (e.g., DPS). Interestingly, our theoretical analysis as well as our experiments point towards an advantage of the ODE-based transport over the SDE counterpart. 

Looking ahead, with recent extensions of stochastic interpolants to discrete domains~\cite{gat2024discrete,kim2025any}, our approach can also be adapted to corrupted discrete data. 
In addition, an important aspect to be further investigated is the comparison between 
marginal and posterior transport, as in the EM algorithm and its diffusion-based implementations \cite{rozet2024learning, hosseintabar2025diffem} -- both of which captured in our framework by adapting the SI; see Appendix \ref{sec:conditional_sec}. While the EM scheme is convergent under weaker assumptions than the marginal transport we analyzed in Section \ref{sec:theory}, the analysis in Section \ref{sec:casestudy} demonstrates (on simple settings such as Gaussian AWGN) an inherent advantage of marginal transport in terms of convergence rates, especially in the setting of the probability flow ODE.

\bibliographystyle{alpha}
\bibliography{references}

\pagebreak
\appendix
\section{Proofs}
For notational simplicity, we use $\mathcal{K}$ to denote $\mathcal{K}_{\mathcal{F}}$, the forward integral operator that pushes $\pi$ to $\mu$, and we adopt this shorthand throughout this section without risk of ambiguity.

\subsection{Proof of Proposition \ref{prop:consistency}}
\label{app:consistency}
\begin{proof}
Recall our iterative scheme as
\begin{equation}
    \Theta^{(k)} \xrightarrow[\text{`E' step}]{\text{via }\eqref{eq:self_SI}} 
    I_{t}^{(k+1)} \xrightarrow[\text{`M' step}]{\text{minimizers in \eqref{eq:standard_drift_loss}\eqref{eq:standard_denoiser_loss} with }I_{t}^{(k+1)}} \Theta^{(k+1)}.
\end{equation}
If the above iteration converges to a fixed point $\Theta^*$, and the channel is injective at the level of distributions, i.e., $\mathcal{K} \tilde{\pi} = \mathcal{K}\pi$ implies $\tilde{\pi} = \pi$, then the corresponding transport map $\Phi_{\Theta^*}$ transports corrupted samples from $\mu$ into clean samples from $\pi$. To see this, consider $\pi_{\Theta^*} \coloneqq (\Phi_{\Theta^*})_{\#}\mu$. We prove only the SDE case, as the ODE case corresponds to the special case when $\epsilon=0$. By definition of $\pi_{\Theta^*}$ and the property of time-reversal SDE, $\Theta^*$ transports $\pi_{\Theta^*}$ to $\mu$ under the forward SDE~\citep{anderson1982reverse,song2021scorebased}
\begin{equation*}
    \dd {X}^F_t = b(t, X^F_t)\dd t - \epsilon_t g(t, X^F_t)\dd t + \sqrt{2\epsilon_t} \dd W_t.
\end{equation*}
On the other hand, since $\Theta^*$ is the optimal solution trained from the SI between $\pi_{\Theta^*}$ and $\mathcal{K}\pi_{\Theta^*}$, the above forward SDE also transport samples from $\pi_{\Theta^*}$ to $\mathcal{K}\pi_{\Theta^*}$~\citep{albergo2023stochastic}. As a result we must have $\mathcal{K}_{\mathcal{F}}\pi_{\Theta^*} = \mu$, which means that $\pi_{\Theta^*} = \pi$ thanks to injectivity. 
\end{proof}

\paragraph{Loss function perspective}
Our iterative scheme can be viewed as a specific procedure to find a fixed point $\Theta^*$ satisfying self-consistency. Alternatively, such a fixed point can be characterized as a minimizer of a loss function that penalizes discrepancies between two transport descriptions. Given a generic pair $\Theta=\{b, g\}$ of drift and denoiser models, the corresponding backward transport defines a distribution $\pi_{\Theta}\coloneqq (\Phi_{\Theta})_{\#}\mu$, and then the objectives associated with the SI between $\pi_\Theta$ and  $\mathcal{K}\pi_\Theta$ defines minimizers $b_{\pi_{\Theta}}$ and $g_{\pi_{\Theta}}$. We seek to align them with the original pair via the loss
 \begin{align}
 \label{eq:loss}
     \mathcal{L}(b,g) &=  \| b - b_{\pi_{\{b,g\}}} \|^2 + \| g - g_{\pi_{\{b,g\}}} \|^2~,
 \end{align}
where $\| \cdot \|$ here denotes an $L^2$ with respect to an arbitrary base measure. The main challenge when analyzing gradient-based optimization of this loss is the highly non-linear dependencies arising from the transport map.

\subsection{Proof of Proposition \ref{prop:qualitative_condnumber}}
\label{sec:proofprop}
We restate the result for convenience:
\begin{proposition}[Finite Condition number for Compact Hypothesis Class]
\label{prop:qualitative_condnumberbis}
    Assume that $\mathcal{K}$ is injective, that $\mathcal{D}$ is a compact parameter space, with continuous parametrization of the drift and score models, and that $\pi$ cannot be exactly represented by the model. Then $\chi < \infty$.
\end{proposition}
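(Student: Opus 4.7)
The plan is to exploit compactness of $\mathcal{D}$ and continuity of the relevant KL functionals. Define $F(\omega) := \mathrm{KL}(\mathcal{K}\pi \,\|\, \mathcal{K}\rho_\omega)$ and $G(\omega) := \mathrm{KL}(\pi \,\|\, \rho_\omega)$, where $\rho_\omega \in \mathcal{S}_\lambda$ is the terminal density produced by running the reverse Fokker--Planck equation with drift and score $(b_\omega, s_\omega)$ from terminal distribution $\mu$. Then $\chi = \sup_{\omega \in \mathcal{D}} G(\omega)/F(\omega)$, and the goal is to bound this ratio using (i) a uniform lower bound on $F$ and (ii) a uniform upper bound on $G$.

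For the lower bound on $F$, first note that the continuous parametrization hypothesis means $\omega \mapsto (b_\omega, s_\omega)$ is continuous into an appropriate function space (e.g. $L^2(\pi_{[0,1]})$), and standard stability of Fokker--Planck flows with respect to their drift (via Girsanov) then makes $\omega \mapsto \rho_\omega$ continuous into $(\mathcal{P}(\mathcal{X}), \mathrm{KL})$. Consequently $F$ is continuous on $\mathcal{D}$. By the non-representability assumption, $\rho_\omega \neq \pi$ for every $\omega \in \mathcal{D}$; combined with injectivity of $\mathcal{K}$, this yields $\mathcal{K}\rho_\omega \neq \mathcal{K}\pi$, hence $F(\omega) > 0$ for every $\omega$. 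A continuous strictly positive function on a compact set attains its (positive) minimum, so $F_{\min} := \min_{\omega \in \mathcal{D}} F(\omega) > 0$.

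For the upper bound on $G$, I would invoke the Girsanov bound already cited in the excerpt, $G(\omega) \leq \epsilon^{-1}\|b^* - b_\omega\|_{\pi_{[0,1]}}^2 + \epsilon \|s^* - s_\omega\|_{\pi_{[0,1]}}^2$, where $(b^*, s^*)$ are the true SI drift and score for $\pi$. The right-hand side is a continuous function of $\omega$ on compact $\mathcal{D}$ by the continuity of the parametrization, and is finite at every $\omega$ because the parametric drift and score live in $L^2$. Hence it attains a finite maximum $G_{\max} < \infty$, and we conclude $\chi \leq G_{\max}/F_{\min} < \infty$.

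The main obstacle is really the continuity of $F$: $\mathrm{KL}$ is notoriously discontinuous under weak convergence of measures, so one has to be careful about the topology in which $\omega \mapsto \rho_\omega$ is continuous. The cleanest route is to upgrade to a Girsanov-type representation so that both $F$ and $G$ are dominated by $L^2$ distances of drift/score along the flow, in which compact continuous parametrization transparently gives continuity. A minor subtlety is that $F$ compares $\mathcal{K}\pi$ to $\mathcal{K}\rho_\omega$ rather than $\pi$ to $\rho_\omega$; since $\mathcal{K}$ is a Markov kernel, the data-processing inequality gives $F(\omega) \leq G(\omega)$ for free (consistent with $\chi \geq 1$), while continuity of $F$ follows from continuity of $\omega \mapsto \rho_\omega$ composed with the (continuous) forward operator on the relevant space of densities.
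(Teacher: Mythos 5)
Your high-level route matches the paper's: bound the numerator $\mathrm{KL}(\pi\,\|\,\rho_\omega)$ by a continuous function of $\omega$ via Girsanov, show the denominator $F(\omega)=\mathrm{KL}(\mathcal{K}\pi\,\|\,\mathcal{K}\rho_\omega)$ is strictly positive, and use compactness of $\mathcal{D}$ to extract a uniform bound. The only cosmetic difference is that you split the supremum into $G_{\max}/F_{\min}$ rather than bounding the ratio directly; that is a looser but equally finite bound.

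There is, however, a genuine gap in your treatment of the denominator. You assert that $F$ is \emph{continuous} on $\mathcal{D}$, and then propose to secure this via a Girsanov-type domination of $F$ by $L^2$ distances of drift and score. This does not work: Girsanov gives an \emph{upper} bound on the KL between path measures, hence (by data processing) on the KL between terminal laws, which is exactly why it controls the numerator $G$. It gives no lower bound on $F$, and hence no upper semi-continuity. What Girsanov \emph{does} give is that $\omega_n\to\omega$ implies $\mathrm{KL}(\rho_\omega\,\|\,\rho_{\omega_n})\to 0$, hence TV convergence $\rho_{\omega_n}\to\rho_\omega$, hence (since $\mathcal{K}$ is a Markov kernel, contracting TV) weak convergence $\mathcal{K}\rho_{\omega_n}\to\mathcal{K}\rho_\omega$. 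From this, one only obtains \emph{lower} semi-continuity of $F$, e.g., via the Donsker--Varadhan variational formula $\mathrm{KL}(\mu\,\|\,\nu)=\sup_{f\in\mathcal{C}_b}\{\langle f,\mu\rangle-\log\langle e^f,\nu\rangle\}$, since a supremum of weakly continuous functionals of $\nu$ is LSC in $\nu$. Full continuity of KL under weak (or even TV) convergence of the second argument is false in general, so your claim overreaches. Fortunately, LSC is all you need: a strictly positive LSC function on a compact set attains a strictly positive minimum, so $F_{\min}>0$ still follows and your conclusion $\chi\leq G_{\max}/F_{\min}<\infty$ is salvaged. The paper phrases the final step slightly differently (showing the ratio $J/G$ is USC and attains its sup on the compact set), but this is the same idea.
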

\begin{proof}
Let $F: \mathcal{D} \to \mathcal{P}(\mathcal{X})$ be the function 
that maps a model $\{ b_\Theta, s_\Theta\}$ to $F(\Theta)=\pi_1$, where $(\pi_t)_t$ is the marginal law of $(X_t)_t$, which solves the SDE
\begin{align}
    dX_t &= (b_{\theta}(t, X_t) + 2 \epsilon s_{\theta}(t, X_t))dt + \sqrt{2 \epsilon} dW_t ~,\\
    X_0 & \sim \mu~.
\end{align}
 Define $G(\Theta) := \mathrm{KL}( \mathcal{K} \pi || \mathcal{K} F(\Theta) )$. 
We claim that $G$ is positive for all $\Theta \in \mathcal{D}$ and that $G$ is lower semi-continuous. Indeed, since we are assuming a misspecified model, we have 
$\mathrm{KL}( \pi || F(\Theta)) >0 $ for all $\Theta \in \mathcal{D}$, which implies $G(\Theta) >0$ for all $\Theta \in \mathcal{D}$ thanks to the injectivity of $\mathcal{K}$. 

Moreover, the mapping $\nu \mapsto \mathrm{KL}( \mu || \nu)$ is lower semi-continuous in the weak topology. This follows from the Donsker-Varadhan variational representation of the KL divergence:
$$\mathrm{KL}( \mu || \nu) = \sup_{f \in \mathcal{C}_b} \left\{ \langle f, \mu \rangle - \log \langle e^f, \nu \rangle \right\}~.$$ 
The map $\nu \mapsto -\log \langle e^f, \nu \rangle$ is weakly continuous for all $f \in \mathcal{C}_b$, and the supremum of continuous functions is lower semicontinuous. 
Now, consider any sequence $(\Theta_n)_n$ such that $\| \Theta_n - \Theta\| \to 0$ as $n \to \infty$. By Girsanov's theorem, observe that 
\begin{align}
\mathrm{KL}( F(\Theta) || F(\Theta_n) ) &\leq \epsilon^{-1}\| b_\Theta - b_{\Theta_n} \|^2 + \epsilon\| s_\Theta - s_{\Theta_n} \|^2 ~,    
\end{align}
which shows that $\mathrm{KL}( F(\Theta) || F(\Theta_n) ) \to 0$ as $n \to \infty$ thanks to the continuity of the mappings $\Theta \mapsto \{b, g\}_\Theta$. 
By Pinsker's inequality, we also have that $\| F(\Theta) - F(\Theta_n) \|_{\mathrm{TV}} \to 0$, which shows that $F(\Theta_n)$ converges weakly to $F(\Theta)$, and therefore 
\begin{align}
    \liminf_{n \to \infty}  G(\Theta_n) &\geq G(\Theta)~,
\end{align}
showing that $G$ is LSC as claimed.

Now, observe that 
$$ \mathrm{KL}( \pi || F(\Theta) ) \leq \epsilon^{-1} \| b_{\Theta} - b^*\|^2 + \epsilon \| s_{\Theta} - s^*\|^2 := J(\Theta)~,$$
and 
\begin{align}
    \frac{\mathrm{KL}( \pi || F(\Theta))}{\mathrm{KL}(\mu || \mathcal{K} F(\Theta) )} &\leq \frac{J(\Theta)}{G(\Theta)} := r(\Theta)~.
\end{align}
The function $r$ is the ratio between a continuous function and a positive, lower semicontinuous function. It follows that $r$ is upper semi-continuous, and therefore 
$$\chi \leq \sup_{\Theta \in \mathcal{D}} r(\Theta) < \infty~,$$ since USC functions attain a maximum over compact sets. 
\end{proof}

\subsection{Proof of Theorem \ref{thm:diff_contraction} }
\label{sec:proofthm}

\begin{proof}
The strategy of the proof is to establish a comparison between 
$\mathrm{KL}( \pi || \pi^{(k)})$ and $\mathrm{KL}( \pi || \pi^{(k+1)})$ 
by exploiting the relationship between the diffusion bridges that relate them.

For that purpose, let $I_t^*$ be the \emph{oracle} SI, given by 
\begin{align}
    I_t^* & = \alpha_t X + \beta_t \mathcal{F}(X) + \gamma_t z~,~X\sim \pi^*.
\end{align}
Let $\pi_t$ be the law of $I_t^*$. 
It solves the Fokker-Planck equation 
\begin{align}
\label{eq:FP2bis}
    \partial_t {\pi}_t &= \nabla \cdot ( (-b^* - \epsilon s^*) \pi_t) + \epsilon \Delta \pi_t~, \\
    \pi_0 &= \pi ~,~{\pi}_1=\mathcal{K}{\pi}=\mu~~,\nonumber
\end{align}
where 
\begin{align}
    b^{*}(t, x) &:= \E[\dot{I}^{*}_t \, | \,I_t^{*}=x] ~, \\
    s^{*}(t, x) &:= - \E[\gamma_t^{-1} z \, | \,I_t^{*}=x] ~, \nonumber
\end{align}
as well as the reverse Fokker-Planck equation
\begin{align}
\label{eq:FP2bis2}
    \partial_t {\pi}_t &= \nabla \cdot ( (-b^* + \epsilon s^* ) \pi_t) - \epsilon \Delta \pi_t~, \\
    {\pi}_1&=\mathcal{K}{\pi}=\mu~~,~~\pi_0 = \pi ~.\nonumber
\end{align}

Consider also the SI at iteration $k$ of our algorithm. Given $\pi^{(k)}$, we consider the interpolant 
\begin{align}
    I_t^{(k)} & = \alpha_t X + \beta_t \mathcal{F}(X) + \gamma_t z~,~X\sim \pi^{(k)}~,
\end{align}
its associated (exact) drift and scores 
\begin{align}
\label{eq:SIfields11}
    b^{(k)}(t, x) &:= \E[\dot{I}^{(k)}_t \, | \,I_t^{(k)}=x] ~, \\
    s^{(k)}(t, x) &:= - \E[\gamma_t^{-1} z \, | \,I_t^{(k)}=x] ~, \nonumber
\end{align}
as well as the estimated drifts and scores, that we recall are given by 
\begin{align}
   \hat{b}^{(k)}= \argmin_{\hat{b}} \mathcal{E}_{\pi^{(k)}, \mathcal{K} \pi^{(k)}}^b( \hat{b}) + \lambda \mathcal{R}(\hat{b})~,~\hat{s}^{(k)} \argmin_{\hat{s}} \mathcal{E}_{\pi^{(k)}, \mathcal{K} \pi^{(k)}}^s( \hat{s}) + \lambda \mathcal{R}(\hat{s})~~ \label{eq:estim_FPE}~.
\end{align}
They define respectively a forward Fokker-Planck equation
\begin{align}
\label{eq:FP2k}
    \partial_t {\pi}_t &= \nabla \cdot ( (-b^{(k)} - \epsilon s^{(k)}) \pi_t) + \epsilon \Delta \pi_t~, \\
    \pi_0 &= \pi^{(k)} ~,~{\pi}_1=\mathcal{K}{\pi^{(k)}}=\mu^{(k)}~~,\nonumber
\end{align}
and a reverse Fokker-Planck equation (notice its initial condition is $\mu$):
\begin{align}
\label{eq:FP2k2}
    \partial_t {\pi}_t &= \nabla \cdot ( (-\hat{b}^{(k)} + \epsilon \hat{s}^{(k)} ) \pi_t) - \epsilon \Delta \pi_t~, \\
    {\pi}_1&=\mu~~,~~\pi_0 := \pi^{(k+1)} ~.\nonumber
\end{align}

It is also useful to define $f:= b + \epsilon s$ to be the total drift of the forward (i.e., from data to measurements) diffusion; with the corresponding oracle $f^*$, iterate $f^{(k)}$ and estimated $\hat{f}^{(k)}$ versions defined analogously. 
From (\ref{eq:FP2bis}), (\ref{eq:FP2bis2}), (\ref{eq:FP2k}) and (\ref{eq:FP2k2}) we immediately verify that the reverse drift becomes $ -f + 2 \epsilon s$.

The following lemma relates the rate of KL along two SDEs. We reproduce the proof later for completeness, but it is a known result, e.g., \cite[Proposition 1]{boffi2023probability} or \cite[Lemma 2.22]{albergo2023stochastic}:
\begin{lemma}[KL divergence along two diffusion processes]
\label{lem:KLcontrol}
    Let $dX_t = b(t, X_t) dt + \sqrt{2\sigma}dW_t$ and $dY_t = a(t, Y_t) dt + \sqrt{2\sigma} dW_t$ be two diffusions, and $\mu_t$, $\nu_t$ denote the marginal law of $X_t$ and $Y_t$ respectively. Then 
    \begin{align}
        \frac{d}{dt} \mathrm{KL}(\mu_t || \nu_t) &= - \sigma \mathrm{I}(\mu_t || \nu_t) + \E_{\mu_t}\left\langle b-a, \nabla \log \mu_t - \nabla \log \nu_t \right \rangle~,
    \end{align}
    where $\mathrm{I}(\mu || \nu)=\E_{\mu}[\| \nabla \log \mu - \nabla \log \nu \|^2]$ is the Fisher divergence. 
\end{lemma}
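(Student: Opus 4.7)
\textbf{Proof plan for Lemma \ref{lem:KLcontrol}.} The plan is a direct computation, starting from the Fokker-Planck equations and then carrying out two successive integrations by parts, with a compensating $\log \nu_t$ term folded into the score to recover both the Fisher information and the cross-term.

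First I would write the Fokker-Planck equations for the two diffusions, which share the same diffusion coefficient:
\begin{align*}
\partial_t \mu_t &= -\nabla\cdot (b\,\mu_t) + \sigma \Delta \mu_t~, \\
\partial_t \nu_t &= -\nabla\cdot (a\,\nu_t) + \sigma \Delta \nu_t~.
\end{align*}
Then I would differentiate the KL in time (assuming enough regularity to swap $\partial_t$ and the integral, which is standard under mild moment/smoothness conditions on the drifts; these can be relegated to a side remark). Using $\int \partial_t \mu_t\,dx = 0$, one gets
\begin{equation*}
\frac{d}{dt}\mathrm{KL}(\mu_t\|\nu_t) = \int (\partial_t \mu_t)\,\log\tfrac{\mu_t}{\nu_t}\,dx - \int \frac{\mu_t}{\nu_t}\,\partial_t \nu_t\,dx~.
\end{equation*}

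Next I would substitute the Fokker-Planck equations into each term and integrate by parts (assuming sufficient decay at infinity so boundary terms vanish). The drift pieces become $\int \mu_t \langle b, \nabla \log(\mu_t/\nu_t)\rangle\,dx$ and $-\int \mu_t \langle a, \nabla \log(\mu_t/\nu_t)\rangle\,dx$, using $\nabla(\mu_t/\nu_t)/(\mu_t/\nu_t) = \nabla \log(\mu_t/\nu_t)$ appropriately. Combining these gives the cross-term $\E_{\mu_t}\langle b-a, \nabla\log\mu_t - \nabla\log\nu_t\rangle$ of the lemma. The diffusion pieces produce $\sigma \int \Delta\mu_t\,\log(\mu_t/\nu_t)\,dx - \sigma \int (\mu_t/\nu_t)\,\Delta\nu_t\,dx$, and after integration by parts each becomes a gradient pairing that reassembles into $-\sigma \int \mu_t\,\|\nabla \log(\mu_t/\nu_t)\|^2\,dx = -\sigma\,\mathrm{I}(\mu_t\|\nu_t)$.

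The main bookkeeping challenge is the diffusion term: one has to carefully combine $\sigma\int \Delta\mu_t\,\log(\mu_t/\nu_t)\,dx$ and $-\sigma\int (\mu_t/\nu_t)\,\Delta\nu_t\,dx$ via an integration by parts on each, yielding $-\sigma\int \nabla\mu_t\cdot\nabla\log(\mu_t/\nu_t)\,dx + \sigma\int \nabla(\mu_t/\nu_t)\cdot\nabla\nu_t\,dx$, and then rewriting the second integrand as $(\mu_t/\nu_t)\nabla\nu_t\cdot\nabla\log(\nu_t/\mu_t) + \nabla\mu_t\cdot\nabla\log(\nu_t/\mu_t)$-type identities so that everything collapses into $-\sigma\,\E_{\mu_t}\|\nabla\log\mu_t - \nabla\log\nu_t\|^2$. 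This identity $\sigma\Delta \rho = \sigma\nabla\cdot(\rho\,\nabla\log\rho)$, applied to both $\mu_t$ and $\nu_t$, is the cleanest way to carry it out: it lets me rewrite each Fokker-Planck equation as $\partial_t\rho_t = -\nabla\cdot((\text{drift} - \sigma\nabla\log\rho_t)\rho_t)$, so the score field appears explicitly and the Fisher term emerges naturally as the part proportional to $\|\nabla\log\mu_t-\nabla\log\nu_t\|^2$, while the drift differences give the remaining pairing. Once this algebraic regrouping is done, the identity follows immediately; the rest is routine. I would then briefly remark on the sufficient regularity (smooth, positive densities with enough decay) needed to justify the differentiation under the integral and the integration by parts.
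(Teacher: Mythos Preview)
Your proposal is correct and follows essentially the same route as the paper. In fact, the paper adopts from the outset what you identify at the end as ``the cleanest way'': it writes both Fokker--Planck equations directly in score form, $\partial_t \rho_t = \nabla\cdot\big((-\text{drift} + \sigma\nabla\log\rho_t)\rho_t\big)$, so that a single integration by parts on each of the two terms in $\tfrac{d}{dt}\mathrm{KL}$ immediately yields $\int \langle \nabla\log(\mu_t/\nu_t),\, -a+b-\sigma\nabla\log(\mu_t/\nu_t)\rangle\,\mu_t\,dx$, bypassing the separate bookkeeping for drift and Laplacian pieces that you describe first.
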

In the particular setting where $b=a$, one obtains a de Bruijn identity:
\begin{lemma}[de Bruijn Identity]
\label{lem:debruijn}
\begin{align}
    \frac{d}{dt} \mathrm{KL}(\pi_t || \pi_t^{(k)}) = - \sigma \mathrm{I}(\pi_t || \pi_t^{(k)})~.
\end{align}
\end{lemma}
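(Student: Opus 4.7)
The plan is to view this lemma as an immediate degenerate case of Lemma \ref{lem:KLcontrol} (KL divergence along two diffusion processes), specialized to the situation where the two comparison processes share a common drift. In the notation of the prior lemma, one takes $\mu_t = \pi_t$ and $\nu_t = \pi_t^{(k)}$, both viewed as solutions of a common Fokker--Planck equation with drift $b$ and diffusion coefficient $\sqrt{2\sigma}$, differing only in their initial distributions.

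Applying Lemma \ref{lem:KLcontrol} with $a = b$, the cross term $\E_{\mu_t}\langle b - a, \nabla \log \mu_t - \nabla \log \nu_t \rangle$ vanishes identically, leaving only the Fisher-information contribution
\begin{equation*}
\frac{d}{dt}\, \mathrm{KL}(\pi_t || \pi_t^{(k)}) = -\sigma\, \mathrm{I}(\pi_t || \pi_t^{(k)})~,
\end{equation*}
which is exactly the claimed identity. This recovers the classical de Bruijn formulation in which two densities evolve under a common diffusion semigroup and their KL divergence decays at a rate equal to the (scaled) Fisher divergence between them. If one preferred a self-contained derivation without appealing to Lemma \ref{lem:KLcontrol}, the computation is standard: differentiate $\int \mu_t \log(\mu_t/\nu_t)\, dx$ in time, substitute $\partial_t \mu_t = -\nabla\cdot(b\mu_t) + \sigma\Delta \mu_t$ and the analogous equation for $\nu_t$, and integrate by parts twice (once for the transport term, once for the Laplacian); the transport contribution vanishes because both densities are transported by the same vector field $b$, while the diffusion term collapses to $-\sigma \E_{\mu_t}\|\nabla \log(\mu_t/\nu_t)\|^2$.

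There is no serious obstacle in the argument---the whole content is already encoded in Lemma \ref{lem:KLcontrol}. The only thing that deserves care is the interpretive point that, within the KL-contraction proof, the pair $(\pi_t, \pi_t^{(k)})$ must be understood as evolving under a \emph{common} drift, rather than under their respective oracle vs.\ iterate Fokker--Planck drifts $b^* + \epsilon s^*$ and $b^{(k)} + \epsilon s^{(k)}$. Consistent with this, the lemma is meant to be invoked on an auxiliary pair of processes---typically obtained by propagating $\pi$ and $\pi^{(k)}$ forward under the same (e.g.\ oracle) dynamics---so that the de Bruijn identity isolates the purely ``entropic'' decay, and the drift mismatch is then reintroduced separately through the cross term of Lemma \ref{lem:KLcontrol}.
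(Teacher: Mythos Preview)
Your proposal is correct and matches the paper's approach exactly: the paper introduces Lemma~\ref{lem:debruijn} with the sentence ``In the particular setting where $b=a$, one obtains a de Bruijn identity,'' i.e., it is presented precisely as the $a=b$ specialization of Lemma~\ref{lem:KLcontrol}, so the cross term vanishes and only the Fisher-divergence term remains. Your additional interpretive remark about needing a common drift is a fair clarification of what is left implicit in the paper's notation.
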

Besides a control of the marginal KL, we will also use Girsanov's theorem to obtain control of the KL divergence between path measures of $(X_t)_t$ and $(Y_t)_t$: %
\begin{lemma}[Girsanov Theorem]
\label{lem:girsanov}
Let $dX_t = b(t, X_t) dt + \sqrt{2\sigma}dW_t$ and $dY_t = a(t, Y_t) dt + \sqrt{2\sigma} dW_t$ be two diffusions, and let ${{\mu}}_{[0,T]}$ and ${\bf \nu}_{[0,T]}$ be the path measures of $X_t$ and $Y_t$, respectively. Assume the Novikov integrability condition. Then
\begin{align}
    \mathrm{KL}({\bf \mu}_{[0,T]} || {\bf \nu}_{[0,T]}) &= \mathrm{KL}(\mu_0 || \nu_0) + \frac{1}{4\sigma} \E_{{\bf \mu}_{[0,T]}}\int_0^T \| a(t, x) - b(t, x) \|^2 dt~.
\end{align}
\end{lemma}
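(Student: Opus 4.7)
The plan is to combine the chain rule for relative entropy with Girsanov's change-of-measure formula. Since both $X_t$ and $Y_t$ have the same diffusion coefficient $\sqrt{2\sigma}$ and differ only in drift, the two path laws $\mu_{[0,T]}$ and $\nu_{[0,T]}$ are mutually absolutely continuous as soon as $\mu_0 \ll \nu_0$, and the Radon--Nikodym derivative decomposes cleanly into a ``starting-point'' factor and a ``path'' factor.

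First I would disintegrate both path measures with respect to the initial coordinate: writing $\mu_{[0,T]} = \mu_0(dx_0)\,\mathbb{P}^b_{x_0}$ and $\nu_{[0,T]} = \nu_0(dx_0)\,\mathbb{P}^a_{x_0}$, where $\mathbb{P}^b_{x_0}$ and $\mathbb{P}^a_{x_0}$ denote the laws on $C([0,T];\mathbb{R}^d)$ of the two SDEs started at the deterministic initial condition $x_0$. Then the Radon--Nikodym derivative factorizes as
\begin{equation*}
\frac{d\mu_{[0,T]}}{d\nu_{[0,T]}}(x_\cdot) \;=\; \frac{d\mu_0}{d\nu_0}(x_0)\cdot \frac{d\mathbb{P}^b_{x_0}}{d\mathbb{P}^a_{x_0}}(x_\cdot),
\end{equation*}
so that taking $\log$ and integrating against $\mu_{[0,T]}$ immediately gives $\mathrm{KL}(\mu_{[0,T]}\|\nu_{[0,T]}) = \mathrm{KL}(\mu_0\|\nu_0) + \mathbb{E}_{\mu_0}[\mathrm{KL}(\mathbb{P}^b_{x_0}\|\mathbb{P}^a_{x_0})]$.

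Next I would apply Girsanov with drift $\theta_s := (b(s,X_s)-a(s,X_s))/\sqrt{2\sigma}$ to identify the path factor. Concretely, if $X$ is driven under the reference measure by $a$-drift, then the exponential martingale $Z_T = \exp\bigl(\int_0^T \theta_s\,dW_s - \tfrac12\int_0^T |\theta_s|^2\,ds\bigr)$ reweights it into a process with $b$-drift, so $d\mathbb{P}^b_{x_0}/d\mathbb{P}^a_{x_0} = Z_T$. The Novikov integrability condition is precisely what guarantees that $Z_T$ is a true martingale (not merely a local one) and that the stochastic integral in $\log Z_T$ is a genuine martingale under the new measure. Rewriting $\log Z_T$ in terms of the Brownian motion $\tilde W$ driving $\mathbb{P}^b_{x_0}$ via $dW_s = d\tilde W_s + \theta_s\,ds$ yields $\log Z_T = \int_0^T \theta_s\,d\tilde W_s + \tfrac12\int_0^T |\theta_s|^2\,ds$, after which the martingale term has zero $\mathbb{P}^b_{x_0}$-expectation and only the quadratic variation $\tfrac12\mathbb{E}[\int_0^T|\theta_s|^2\,ds] = \tfrac{1}{4\sigma}\mathbb{E}[\int_0^T \|b-a\|^2\,ds]$ survives.

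Assembling the two contributions and averaging the conditional identity over $\mu_0$ turns the conditional path expectation into the unconditional one under $\mu_{[0,T]}$, yielding the claimed formula. The main subtlety, and the only nontrivial step, is the Novikov/integrability bookkeeping needed to justify both that $Z_T$ is a martingale (so Girsanov applies) and that the stochastic integral term is centered under $\mathbb{P}^b$; everything else is disintegration and algebra. A secondary technical point is handling the case $\mu_0 \not\ll \nu_0$, in which both sides are $+\infty$ and the identity holds trivially, so one may assume absolute continuity without loss.
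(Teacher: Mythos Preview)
Your proposal is correct and follows the standard route to this identity: disintegrate with respect to the initial condition to split off the $\mathrm{KL}(\mu_0\|\nu_0)$ term, apply classical Girsanov to identify the conditional Radon--Nikodym derivative as an exponential martingale, then take expectations under $\mathbb{P}^b$ so that the stochastic-integral term vanishes and only the quadratic contribution $\tfrac{1}{4\sigma}\int_0^T\|b-a\|^2\,ds$ survives. The bookkeeping you flag (Novikov to ensure $Z_T$ is a true martingale, and the trivial $+\infty$ case when $\mu_0\not\ll\nu_0$) is exactly right.

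Regarding comparison with the paper: the paper does not actually give a proof of this lemma. It is stated as a standard consequence of Girsanov's theorem and invoked directly (together with the data-processing inequality, as Corollary~\ref{coro:girsa}) in the proof of Theorem~\ref{thm:diff_contraction}. So there is nothing to compare against; your argument supplies the missing justification in the canonical way.
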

By the data processing inequality, a direct consequence of Lemma \ref{lem:girsanov} is 
\begin{corollary}
    \label{coro:girsa}
    \begin{align}
        \mathrm{KL}(\mu_T || \nu_T) & \leq \mathrm{KL}(\mu_0 || \nu_0) + \frac{1}{4\sigma} \E_{{\bf \mu}_{[0,T]}}\int_0^T \| a(t, x) - b(t, x) \|^2 dt~.
    \end{align}
\end{corollary}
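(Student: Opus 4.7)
The plan is to derive Corollary \ref{coro:girsa} as a direct consequence of Lemma \ref{lem:girsanov} (Girsanov) combined with the data processing inequality for KL divergence. Lemma \ref{lem:girsanov} already supplies an \emph{equality} for the KL between full path measures $\mu_{[0,T]}$ and $\nu_{[0,T]}$, so the work to be done is essentially to pass from path measures to terminal marginals without gaining any extra error.

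First I would recall the data processing inequality for KL divergence: for any measurable map $T: \Omega_1 \to \Omega_2$ between measurable spaces and any pair of probability measures $P, Q$ on $\Omega_1$, one has $\mathrm{KL}(T_\# P \,\|\, T_\# Q) \leq \mathrm{KL}(P\,\|\,Q)$. I would apply this with $\Omega_1 = C([0,T];\R^d)$, $\Omega_2 = \R^d$, and $T = \mathrm{ev}_T$, the evaluation map at time $T$, i.e., $\mathrm{ev}_T(\omega) := \omega(T)$. This map is measurable with respect to the canonical $\sigma$-algebra on path space, and by definition $(\mathrm{ev}_T)_\# \mu_{[0,T]} = \mu_T$ and $(\mathrm{ev}_T)_\# \nu_{[0,T]} = \nu_T$. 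Therefore
\begin{equation*}
\mathrm{KL}(\mu_T \,\|\, \nu_T) \;\leq\; \mathrm{KL}(\mu_{[0,T]} \,\|\, \nu_{[0,T]})~.
\end{equation*}

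Next I would invoke Lemma \ref{lem:girsanov} under the Novikov condition assumed there, which gives the exact identity
\begin{equation*}
\mathrm{KL}(\mu_{[0,T]} \,\|\, \nu_{[0,T]}) \;=\; \mathrm{KL}(\mu_0 \,\|\, \nu_0) + \frac{1}{4\sigma} \, \E_{\mu_{[0,T]}} \!\int_0^T \|a(t,x) - b(t,x)\|^2 \, dt~.
\end{equation*}
Chaining the two displays yields the claimed inequality. The direction of the inequality is the natural one: Girsanov measures the cost of changing drifts at the level of trajectories, and coarse-graining to the terminal marginal can only lose information, hence decrease KL.

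There is no serious obstacle here; the only point to be careful about is the choice of reference measure in the Girsanov change of drift. Lemma \ref{lem:girsanov} is stated with the expectation taken under $\mu_{[0,T]}$ (the path measure of the $X_t$ process), and this choice is preserved through the data processing step since the latter is just a pushforward inequality that does not touch the drift difference. If $\mu_0 = \nu_0$ (a common specialization in the applications later in the paper), the first term vanishes and the bound reduces to the familiar $\frac{1}{4\sigma}\int_0^T \E_{\mu_t}\|a-b\|^2 dt$ control used throughout generative diffusion analyses.
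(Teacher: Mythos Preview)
Your proposal is correct and follows exactly the paper's approach: apply the data processing inequality (via the time-$T$ evaluation map) to the path-measure identity of Lemma~\ref{lem:girsanov}.
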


We first apply Corollary \ref{coro:girsa} from $t=1$ to $t=0$ to the two reverse Fokker-Planck equations (\ref{eq:FP2bis2}) and (\ref{eq:FP2k2}), respectively sending $\mu$ back to $\pi$, and the current model sending $\mu$ back to ${\pi}^{(k+1)}$.
Since they share the same initial condition, we have
\begin{align}
\label{eq:general1}
    \mathrm{KL}(\pi || {\pi}^{(k+1)}) & \leq \frac{1}{4\epsilon}  \int_0^1 \E_{\pi_{t}}\| f^*(t,x) - \hat{f}^{(k)}(t,x) - 2\epsilon (s^*(t,x) - \hat{s}^{(k)}(t,x)) \|^2 dt~.
\end{align}
 We now apply Lemma \ref{lem:KLcontrol} to the pair of forward Fokker-Planck equations (\ref{eq:FP2bis}) and (\ref{eq:FP2k}) , to obtain 
\begin{align}
\label{eq:general2}
    \mathrm{KL}(\pi || {\pi}^{(k)} ) &= \mathrm{KL}(\mu || {\mu}^{(k)} ) + \epsilon \E \int_0^1 \| \nabla \log \pi_t - \nabla \log {\pi}_t^{(k)} \|^2  dt  \\
    & - \E_{\pi} \int_0^1 \langle f^* - f^{(k)} , \nabla \log \pi_t - \nabla \log {\pi}_t^{(k)} \rangle dt~ \nonumber \\
    &= \mathrm{KL}(\mu || {\mu}^{(k)} ) + \epsilon \E \int_0^1 \| s^*_t - s_t^{(k)} \|^2  dt  \nonumber \\
    & - \E_{\pi} \int_0^1 \langle f^* - f^{(k)} , s_t^* - s_t^{(k)} \rangle dt~ \nonumber~.     
\end{align}
From (\ref{eq:general1}) we have
\begin{align}
  \mathrm{KL}(\pi || \pi^{(k+1)}) & \leq \frac{1}{4\epsilon} \| f^* - \hat{f}^{(k)} \|^2_{\pi} + \epsilon \| s^* - \hat{s}^{(k)} \|^2_\pi - \langle f^* - \hat{f}^{(k)} , s^* - \hat{s}^{(k)} \rangle_\pi~. 
\end{align}
Assuming a drift and score approximation error uniformly bounded by $\delta$, plugging (\ref{eq:general2}) we have 
\begin{align}
  \mathrm{KL}(\pi || \pi^{(k+1)}) & \leq \frac{1}{4\epsilon} \mathbb{E} \| f^* - {f}^{(k)} \|^2 + \epsilon \| s^* - {s}^{(k)} \|^2 - \mathbb{E} \langle f^* - {f}^{(k)} , s^* - {s}^{(k)} \rangle \\
  & + \delta^2\left( \frac{1}{4\epsilon} + \epsilon + 1\right) + \delta \left(\frac{1+2\epsilon}{2\epsilon}\| f^* - f^{(k)} \| + (1+\epsilon) \| s^* - s^{(k)} \| \right) \\
  & \leq  \mathrm{KL}(\pi || {\pi}^{(k)} ) - \mathrm{KL}(\mu || {\mu}^{(k)} ) \\
  & + \frac{1}{4\epsilon} \mathbb{E} \| f^* - {f}^{(k)} \|^2 + C_1(\epsilon) \delta^2 + \delta (C_2(\epsilon) \| b^* - b^{(k)} \| + C_3(\epsilon) \|  s^* - s^{(k)} \|  )~.
\end{align}
Now, using the condition number and SI Lipschitz assumptions, denoting $\eta = 1 + \frac{L}{4} - \chi^{-1}$, we obtain %
\begin{align}
     \mathrm{KL}(\pi || \pi^{(k+1)}) & \leq \eta \mathrm{KL}(\pi || \pi^{(k)}) +  C_1(\epsilon) {\delta}^2 + 2 {\delta} \tilde{C}_2(\epsilon) ( \| b^* - b^{(k)} \| + \|  s^* - s^{(k)} \| )~,
\end{align}
with $C_1(\epsilon) = O(\epsilon + \epsilon^{-1})$ and 
$\tilde{C}_2(\epsilon) = O( 1+ \epsilon^{-1})$

Observe that from (\ref{eq:general2}) and using Cauchy-Schwartz, we  have 
\begin{align}
    \epsilon \|  s^* - s^{(k)} \|^2 & \leq (1- \chi^{-1}) \mathrm{KL}(\pi || \pi^{(k)}) + |\langle f^* - f^{(k)} , s^* - s^{(k)} \rangle | \\
    & \leq (1- \chi^{-1}) \mathrm{KL}(\pi || \pi^{(k)}) +  \sqrt{L \epsilon \mathrm{KL}(\pi || \pi^{(k)})} \| s^* - s^{(k)} \| \\
    & \leq \eta \mathrm{KL}(\pi || \pi^{(k)}) + 2\sqrt{\eta \epsilon \mathrm{KL}(\pi || \pi^{(k)})} \| s^* - s^{(k)} \| ~,
\end{align}
which implies $\| s^* - s^{(k)} \|  \leq 4 \epsilon^{-1/2} \sqrt{\eta \mathrm{KL}(\pi || \pi^{(k)})}$, and 
therefore 
\begin{align}
     \| b^* - b^{(k)} \| + \|  s^* - s^{(k)} \| & \leq C_3(\epsilon)  \sqrt{\eta \mathrm{KL}(\pi || \pi^{(k)})}~
\end{align}
with $C_3(\epsilon)= O(\epsilon^{1/2} + \epsilon^{-1/2})$.
Thus, by redefining $\tilde{\delta} = \bar{C}_\epsilon \delta$ for $\bar{C}_\epsilon=O(\epsilon^{-3/2})$ we obtain 
\begin{align}
     \mathrm{KL}(\pi || \pi^{(k+1)}) & \leq \eta \mathrm{KL}(\pi || \pi^{(k)}) +  \tilde{\delta}^2 + 2 \tilde{\delta} \sqrt{\eta \mathrm{KL}(\pi || \pi^{(k)})} \\
     &= \left( \sqrt{\eta \mathrm{KL}(\pi || \pi^{(k)})} + \tilde{\delta} \right)^2~.
\end{align}
Setting $\alpha_k = \mathrm{KL}( \pi || \pi^{(k)})^{1/2}$, we arrive at the linear recurrence 
\begin{align}
    \alpha_{k+1} & \leq \sqrt{\eta} \alpha_k + \tilde{\delta}~.%
\end{align}
Solving this linear recurrence yields
\begin{align}
    \alpha_k & \leq \eta^{k/2} \alpha_0 + \frac{\tilde{\delta}}{1-\sqrt{\eta}}~,
\end{align}
hence
\begin{align}
    \mathrm{KL}( \pi || \pi^{(k)}) & \leq \left( \eta^{k/2} \alpha_0 + \frac{\tilde{\delta}}{1-\sqrt{\eta}} \right)^2 \\
    & \leq 2 \eta^k \mathrm{KL}( \pi || \pi^{(0)}) + \frac{2\tilde{\delta}^2 }{(1 - \sqrt{\eta})^2} \\
    & = 2 \eta^k \mathrm{KL}( \pi || \pi^{(0)}) + O(\epsilon^{-3})\frac{ \delta^2 }{(1 - \sqrt{\eta})^2} ~,
\end{align}
as claimed. 
\end{proof}

\begin{proof}[Proof of Lemma \ref{lem:KLcontrol}]
    Let $K_t = \mathrm{KL}(\mu_t || \nu_t) = \int \mu_t(x) \log\left(\frac{\mu_t(x)}{\nu_t(x)} \right) dx$. 
    By definition, the laws $\mu_t$ and $\nu_t$ solve the Fokker-Planck equations
    \begin{align}
        \partial_t \mu_t &= \nabla \cdot ( (-b + \sigma \nabla \log \mu_t) \mu_t)~,\\
        \partial_t \nu_t &= \nabla \cdot ( (-a + \sigma \nabla \log \nu_t) \nu_t)~.
    \end{align}
    We compute
    \begin{align}
        \frac{d}{dt}K_t &= - \int \frac{\mu_t(x)}{\nu_t(x)} \partial_t \nu_t(x) dx + \int \log \left( \frac{\mu_t(x)}{\nu_t(x)} \right) \partial_t \mu_t(x) dx \\
        &= - \int \frac{\mu_t}{\nu_t} \nabla \cdot( (-a + \sigma \nabla \log \nu_t) \nu_t) dx + \int \log \left( \frac{\mu_t}{\nu_t} \right) \nabla \cdot ( (-b + \sigma \nabla \log \mu_t) \mu_t) dx \\
        &= \int \langle \nabla\left(\frac{\mu_t}{\nu_t}\right) , -a + \sigma \nabla \log \nu_t \rangle \nu_t dx - \int \left\langle \nabla\log \left( \frac{\mu_t}{\nu_t} \right), (-b + \sigma \nabla \log \mu_t) \right \rangle \mu_t dx \\
        &= \int \left\langle \nabla \log\left( \frac{\mu_t}{\nu_t} \right),  -a +b -\sigma \nabla \log \left(\frac{\mu_t}{\nu_t}\right) \right\rangle  \mu_t \\
        &= - \sigma \mathrm{I}(\mu_t || \nu_t) + \E_{\mu_t}\left\langle b-a, \nabla \log \mu_t - \nabla \log \nu_t \right \rangle~.
    \end{align}
\end{proof}

\subsection{Proof of Proposition \ref{prop:gaussianconv}}
\label{app:awgn_convergence}
\begin{proof}
     Denote by $r_k = b - b_k$ and $R_k = \Sigma - \Sigma_k$. We have $ b_{k+1} = (I - B_k) b_k$ and 
$R_{k+1} = R_k - B_k R_k B_k$. We verify \footnote{Indeed, split $X$ into $X=X_p + X_n$ where $X_p \succeq 0$ and $X_n \preceq 0$. Then we verify that $0 \preceq \bar{X}_p:= X_p - B X_p B \preceq (1-\lambda_{\min}(B)^2)X_p$  and $0 \succeq \bar{X}_n:= X_n - B X_n B \succeq (1-\lambda_{\min}(B)^2) X_n$. It follows that $X - B X B = \bar{X}_p + \bar{X}_n$ with $(1-\lambda_{\min}(B)^2) \lambda_{\min}(X) \leq \lambda_{\min}(\bar{X}_n) \leq \lambda_{\min}(X - B X B) \leq \lambda_{\max}(X - BXB) \leq \lambda_{\max}(\bar{X}_p) \leq (1- \lambda_{\min}(B)^2) \lambda_{\max}(X)$.} that for any symmetric matrix $X$ (not necessarily psd) and symmetrc matrix $B$ such that $0 \prec B \prec I$ we have $\| X - B X B \| \leq (1 - \lambda_{\min}(B)^2) \|X\|$ .
Therefore
\begin{align}
    \| R_{k+1} \| & \leq (1-\lambda_{\min}(B_k)^2)  \|R_k\|~. 
\end{align}
We claim that, for all $k$, 
\begin{align}
\label{eq:Bfact}
\lambda_{\min}(\Sigma_k) & \geq \min\left( \lambda_{\min}(\Sigma), \lambda_{\min}(\Sigma_0) \right)=  \eta ~.    
\end{align}
Indeed, this follows from a simple induction argument: since $B_k$ and $\Sigma_k$ commute, we have that 
\begin{align}
\Sigma_{k+1} &= (I- B_k^2) \Sigma_k + B_k \Sigma B_k \\
& \succeq \lambda_{\min}(\Sigma_k) (I- B_k^2) + \lambda_{\min}(\Sigma) B_k^2 \\
& \succeq \min( \lambda_{\min}(\Sigma_k), \lambda_{\min}(\Sigma)) I ~,  
\end{align}
As a result, we immediately obtain 
\begin{align}
\label{eq:Bfact}
\lambda_{\min}(B_k) & = \left(\frac{\lambda_{\min}(\Sigma_k)}{1 + \lambda_{\min}(\Sigma_k)} \right)^{(1+\epsilon)/2} \geq \left( \frac{\eta}{1+\eta}\right)^{(1+\epsilon)/2}~, %
\end{align}
 establishing (\ref{eq:finalrate}).
\end{proof}

\section{Detailed Algorithm Description}
\label{app:alg}
\subsection{Algorithm Pseudocode}
The following pseudocode summarizes the full training procedure for our SCSI model in settings where the observation space $\mathcal{Y}$ can be embedded into $\mathcal{X}$ without loss of information.

\begin{algorithm}[H]
\SetAlgoNoLine
\DontPrintSemicolon
\SetAlgoNoEnd
    \SetKwInOut{KwIn}{Input}
    \SetKwInOut{KwOut}{Output}
    \SetKwComment{Comment}{$\triangleright$\ }{}
    \SetKwFunction{Transport}{Transport}
    \KwIn{Observation distribution $\mu$,
    Forward mapping $\mathcal{F}$,
    Interpolant schedule $(\alpha, \beta, \gamma)$,
    Initialization of drift and denoiser $\Theta^{0}=\{b^{(0)}, g^{(0)}\}$,
    Total number of iterations $K$,
    Number of transport steps $T_{\mathrm{tr}}$
    }
    \KwOut{Optimized networks $\Theta^{(K)}=\{b^{(K)}, g^{(K)}\}$}
    $\Theta \leftarrow \Theta^{(0)} $ \tcp*{Initialize transport map}
    \For{$k$ in $1 \ldots K$}
    {
        \For{$i$ in $1 \ldots T_{\mathrm{tr}}$}
        {
        $y \sim \mu$ \;
        $x = \Phi_{\Theta^{(k-1)}}(y)$ \tcp*{Backward transport to get a data sample} 
        $\tilde{y} = \mathcal{F}(x)$  \tcp*{Map back to observations}
        $z\sim \mathcal{N}(0, 1); \,\, t\sim \mathcal{U}(0, 1)$ \;            
        $I_t = \alpha_t x + \beta_t \tilde{y} + \gamma_t z$ \;
        SGD update of $\Theta$ via losses~\eqref{eq:standard_drift_loss}\eqref{eq:standard_denoiser_loss}
        }
        $\Theta^{(k)} \leftarrow \Theta $ \tcp*{Update transport map}
    }
    \Return{$\Theta^{(K)}$} \;
    \caption{Training of Self-Consistent Stochastic Interpolant (SCSI)}
    \label{alg:detail}
\end{algorithm}

\subsection{Conditional Generation via Lifting}
\label{sec:conditional_sec}
In this subsection, we clarify how to proceed when embedding $\mathcal{Y}$ into $\mathcal{X}$ is not straightforward. As noted at the beginning of Section~\ref{sec:setup}, one can always work in a common state space through a lifting construction, which also makes explicit how the method naturally yields conditional generation. Throughout this subsection, we therefore do not assume $\mathcal{X}=\mathcal{Y}$.

We lift the problem to a shared product space by defining $\Omega := \mathcal{X} \times \mathcal{Y}$ and introducing the map $\tilde{\mathcal{F}}(x, y) \coloneqq (w, \mathcal{F}(x))$, where $w$ is drawn independently from an arbitrary base measure $\pi_0 \in \mathcal{P}(\mathcal{X})$. Let $\tilde{\pi}$ denote the joint distribution of $(x, y)$, so that its $\mathcal{X}$-marginal $\pi$ is the target distribution we want to generate. Under the lifted observation channel $\tilde{\mathcal{F}}$, the observed distribution becomes $\tilde{\mu} = \pi_0 \otimes \mu$.
To illustrate the resulting algorithm, consider the analogue of \eqref{eq:aux_SI}, in which we do have access to samples $\tilde{x} = (x, y) \sim \tilde{\pi}$. In this setting, we introduce a lifted interpolant $\tilde{I}_t = (I_t^{(1)}, I_t^{(2)})$ by
\begin{align}
    &I^{(1)}_t = \alpha_t x + \beta_t w, \quad t \in [0, 1],\\
    &I^{(2)}_t \equiv y, \hspace{50pt} t \in [0, 1].
\end{align}
In the ODE setting, the velocity field satisfies $\tilde{b}(t, \tilde{x}) = \E[\dot{\tilde{I}}_t | \tilde{I}_t=\tilde{x}] = ( b(t, x, y), 0)$. Operationally, this amounts to appending the observation $y$ as an additional input to the transport model with state space $\mathcal{X}$, as commonly done in conditional generation in score-based diffusion models~\cite{song2021scorebased} and stochastic interpolants~\cite{albergo2023coupling}.  When $\tilde{\pi}$ is not directly accessible, as in our inverse generative setting, the same iterative self-consistency scheme introduced in Section~\ref{sec:iteration} can be applied to the lifted formulation without modification.

\section{Implementation Details}
\label{app:implementation}
\subsection{Architecture of Models}
We give the architecture details of our SI and diffusion model here. Both architectures are the U-net from \cite{dhariwal2021diffusion}, specifically following the implementation \href{https://github.com/NVlabs/edm/blob/main/training/networks.py}{here}.
The main difference is that we reduce the number of model channels in the first layer from default 192 to 96 for the diffusion model and 64 for the stochastic interpolant. This is primarily done for computational reasons. As a result, the small model (64 channels) has $\sim$32 million parameters while the large model has $\sim$70 million parameters.
Maximum positional embedding for the diffusion model and SI is taken to be 10,000 and 2 respectively. 

For 2-D latent parameters as used in random masking, we process them with a small U-net consisting of 2 convolution blocks sandwiched between two mode convolution layers and the number of channels given by channel multiplier. We concatenate this with the image along channel dimension. For 1-D latents as used in motion blur and JPEG compression, we process them with a three layer perceptron and then add them to the time embedding.

\begin{table}[ht]
\centering
\caption{Model configuration parameters}
\label{tab:model_config}
\renewcommand{\arraystretch}{1.2}
\begin{tabular}{@{}lc@{}} 
\toprule
\multicolumn{1}{c}{\textbf{Parameter}} & \textbf{Value} \\
\midrule
Model channels         & 96 (64) \\
Channel multiplier           & [1, 2, 3, 4] \\
Channel multiplier for embeddings      & 4 \\
Number of blocks             & 3 \\
Attention on resolutions       & [32, 16, 8] \\
Dropout Fraction             & 0.10 \\
Max positional embedding      & 10000 (2) \\
Number of channels in latent U-Net        & 8 \\
\bottomrule
\end{tabular}
\end{table}

\subsection{Training Parameters}
We use the same hyperparameters for all the experiments. The backward transport map via ODE or SDE is performed in 64 steps. For experiments with ODE, we choose the schedule of SI as $\alpha_t=1-t, \beta_t=t, \gamma_t=0$. For experiments with SDE, we keep the same schedule for $\alpha_t, \beta_t$, set $\gamma_t = t(1-t)$, and use $\epsilon=0.1$.

We experimented with different choices of $T_{\text{tr}}$, the number of backward transport steps in Alg.~\ref{alg:alg}, and observed only minor differences across values, with $T_{\text{tr}} = 1$ already sufficient for all current experiments. \rsp{To support this observation, we revisited the experiment in Section~\ref{sec:synthetic} (Fig.~\ref{fig:twomoon}) and varied $T_{\text{tr}}$ while keeping the product $T_{\text{tr}} \cdot K$ fixed so that the overall computational budget remained unchanged. The Wasserstein distance between the final solved distribution and the true distribution is summarized in the Table~\ref{tab:Ttr_ablation} below (three independent runs). The results show that the performance is quite insensitive to $T_{\text{tr}}$ unless it becomes so large that it forces too few outer iterations. We also did not observe notable differences in convergence speed. Given this robustness and to avoid unnecessary hyperparameter tuning, we use $T_{\text{tr}} = 1$ throughout this work.}

\begin{table}[ht]
\centering
\caption{\rsp{Effect of $T_{\text{tr}}$ on final Wasserstein distance (three independent runs) for the synthetic example in Section~\ref{sec:synthetic} (Fig.~\ref{fig:twomoon})}}
\label{tab:Ttr_ablation}
\renewcommand{\arraystretch}{1.2}
\begin{tabular}{@{}cc@{}}
\toprule
\textbf{$T_{\text{tr}}$} & \textbf{Wasserstein Distance (Mean $\pm$ Std)} \\
\midrule
1     & $0.0491 \pm 0.0038$ \\
10    & $0.0460 \pm 0.0038$ \\
100   & $0.0476 \pm 0.0047$ \\
1000  & $0.0593 \pm 0.0021$ \\
\bottomrule
\end{tabular}
\end{table}

When $\Theta^{(k)}$ is far from the optimal at the early stages of the outer iteration, the distribution of self-generated observations $\mathcal{K}_\mathcal{F} (\Phi_{\Theta^{(k)}})_\# \mu$ may differ significantly from $\mu$, and consequently slow down the convergence in practice. 
To mitigate this effect, we modify the interpolant~\eqref{eq:self_SI} by replacing $\mathcal{F}(\Phi_{\Theta^{(k)}}(y))$ with a mixture: with probability $p$ (set to 0.9 in our experiments), we use the generated observation, and with probability $1-p$, we use the original $y$. As long as $p > 0$, following the same argument in Prop.~\ref{prop:consistency}, we know the fixed point still gives us the desired optimal parameters $\Theta^{*}$.

Furthermore, to enhance computational efficiency, for every data mapped back with ODE integration, we (re)-sample the observations twice to generate two interpolated points. This amortizes the cost of ODE integration, which is the most expensive step in the training process. 
We fix the learning rate to be 0.0005 and use cosine schedule with warmup. Random masking, motion blur and JPEG experiments are trained for 50,000 iterations while other experiments are trained for 20,000 iterations.

\section{Additional Results}
\label{app:res}

\subsection{Impact of Network Size}
\label{app:networksize}
\rsp{We first show the results for training a generative diffusion model using our network architecture and choices of hyperparameters like learning rate, number of iterations etc. This is to provide a simple intuition for the modeling capacity of our overall configuration.}
We train a big model (with 96 channels) and a small diffusion model (with 64 channels) on clean CIFAR-10 data. \rsp{The big model corresponds to the model used for training a generative model on the restored samples, as well as for DPS experiments. Small model corresponds to the SI used in our experiments.} The FID for samples generated with these models after training is 5.16 and 6.64 respectively. \rsp{Note that these numbers are close to the FID reported in Table \ref{tab:fid} for generative model trained on restored samples of random masking.}
In Fig. \ref{fig:diffmodel} and \ref{fig:diffmodel2}, we show some randomly drawn samples from these models. 

\begin{figure}[ht]
\centering
\includegraphics[width=1.\linewidth]{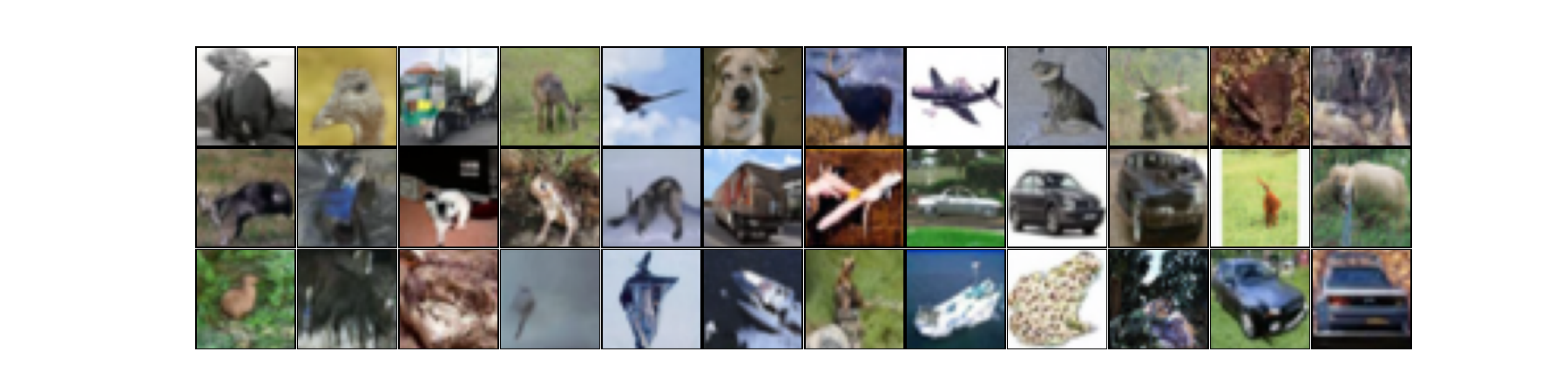}
\vspace{-20px}
\caption{Randomly drawn images from the large diffusion model trained on cleaned images.}
\label{fig:diffmodel}
\end{figure}

\begin{figure}[ht]
\centering
\includegraphics[width=1.\linewidth]{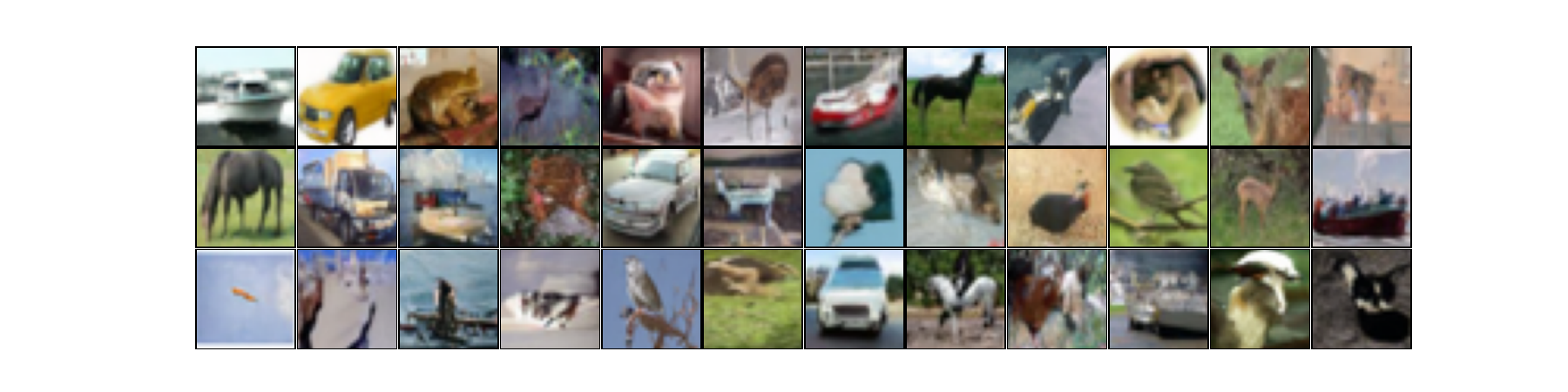}
\vspace{-20px}
\caption{Randomly drawn images from the smaller diffusion model trained on cleaned images.}
\label{fig:diffmodel2}
\end{figure}

\rsp{
To see the impact of network capacity on our results, we repeat the random masking exercise with $\rho=0.5$ (i.e., 50\% pixels masked) with networks of different sizes. We vary the number of channels in the SI model from 48 to 96 which varies the number of network parameters in a 3x range from 17 million to 69 million. We train each network for 25k iterations and use ODE sampling. 

Table \ref{tab:netsizes} shows the FID between the restored samples and original CIFAR-10 results and we find that using a larger network improves the quality of restoration. Figure \ref{fig:netsizes} shows the training loss curves for these and even though all the networks converge to similar loss, the loss for larger networks begins to drop earlier than for smaller networks. However note that here we focus on the number of iterations, and in terms of wallclock time, smaller networks do train much faster. Given these, in principle it is possible to design an iterative scheme which optimizes the wallclock time by training a smaller network and using its restored samples to warm-start a larger network. }

\begin{table}[ht]
\centering
\begin{minipage}[b]{0.35\textwidth}
    \centering
    \renewcommand{\arraystretch}{1.2}
    \footnotesize
    \begin{tabular}[c]{cc}
    \toprule
    \multicolumn{1}{c}{\textbf{Channels}} & \textbf{FID} \\
    \midrule
    48 & 1.69 \\
    64 & 1.40 \\
    80 & 1.26 \\
    96 & 1.17 \\
    \bottomrule
    \end{tabular}
    \captionof{table}{FID for restored samples.}
    \label{tab:netsizes}
\end{minipage}
\begin{minipage}[b]{0.4\textwidth}
    \centering
    \includegraphics[width=\textwidth]{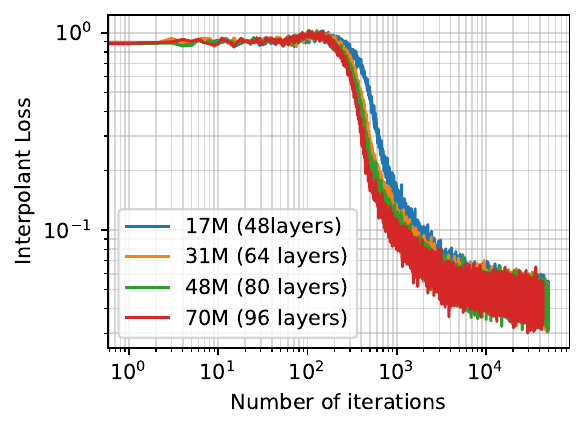}
    \captionof{figure}{
        Training loss curves.
    }
    \label{fig:netsizes}
\end{minipage}
\end{table}

\subsection{SDE Results}
\label{app:sde}
\rsp{We present results for restoration using SDE in this section. We experiment with two different strategies: i) learning a combined drift for the velocity drift and score terms using the objective~\eqref{eq:combined_drift_loss}, and ii) learning different networks for drift and score separately using losses~\eqref{eq:standard_drift_loss}\eqref{eq:standard_denoiser_loss}. While the latter offers more flexibility in varying the noise schedule during inference, training two networks is at least twice as expensive, and we found no improvement over learning a single network for the combined drift. Hence we adopt the combined-drift formulation as our default for SDE.

In general, we find that, compared to ODE, SDE is more sensitive to the tuning of hyperparameters, specifically the noise schedule and number of transport steps. We have not thoroughly explored these choices for each experiment, and hence the results presented here are generally comparable or inferior to the ODE results presented in the main text. This merits more investigation in the future.

Results for CIFAR-10 and quasar spectra are shown in Figures \ref{fig:cifar_sde} and \ref{fig:qsos_sde}, respectively.
All results use the default choices of noise schedule $\gamma_t = \gamma_0 t(1-t)$ with $\gamma_0=0.05$, $\epsilon_t=\gamma_t$, and $T_{\text{tr}}=1$. 
For random mask and Gaussian blur, our approach is able to restore the images but the quantitative performance is worse than ODE (e.g., for Gaussian blur, SDE samples have FID of 16.8 as compared to 6.17 for ODE). 
However, for random motion, the restored images have visible artifacts. For quasar spectra, SDE-restored samples have MSE very close to ODE in the low-resolution/high-SNR setting, but noticeably higher MSE in the high-resolution/low-SNR setting. 
}

\begin{figure}[ht]
\begin{subfigure}[h]{0.33\linewidth}
\centering
\includegraphics[trim={0pt 20pt 0pt 20pt}, clip, width=\linewidth]{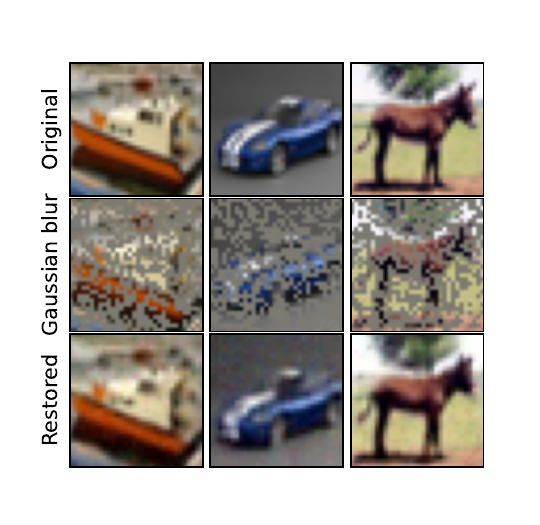}
\vspace{-20px}
\caption{\centering
$\mathcal{F}$: Random mask \\($\rho=0.5$, $\sigma_n=10^{-6}$).}
\label{fig:randommask_sde}
\end{subfigure}
\hfill
\begin{subfigure}[h]{0.33\linewidth}
\centering
\includegraphics[trim={0pt 20pt 0pt 20pt}, clip, width=\linewidth]{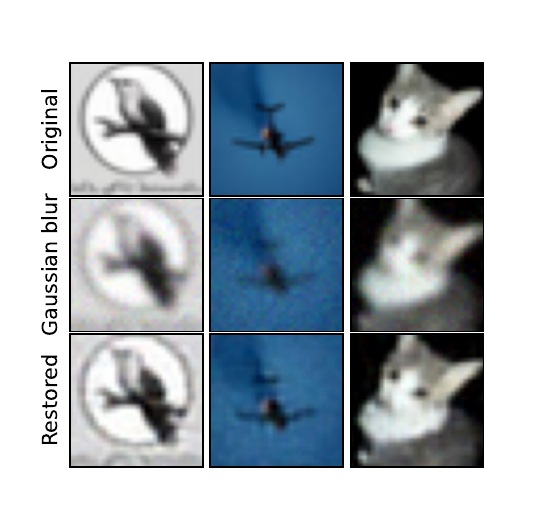}
\vspace{-20px}
\caption{\centering $\mathcal{F}$: Gaussian blur + noise \\($\sigma_R=1, \sigma_n=0.1$).}
\label{fig:gaussianblur_sde}
\end{subfigure}%
\hfill
\begin{subfigure}[h]{0.33\linewidth}
\centering
\includegraphics[trim={0pt 20pt 0pt 20pt}, clip, width=\linewidth]{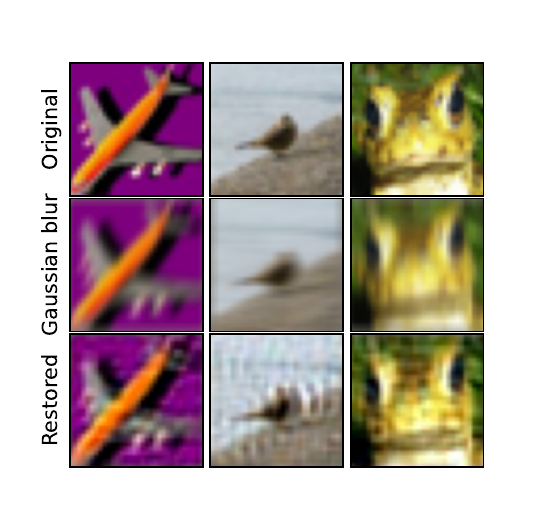}
\vspace{-20px}
\caption{\centering
$\mathcal{F}$: Motion blur + noise\\ ($\text{kernel size}=5$, $\sigma_n=0.1$)}
\label{fig:motionblur_sde}
\end{subfigure}
\vskip -5pt
\caption{Restored samples for different forward maps with SDE based SI.}
\vskip -10pt
\label{fig:cifar_sde}
\end{figure}

\begin{figure}[ht]
\begin{subfigure}[h]{0.52\linewidth}
\centering
\includegraphics[width=\linewidth]{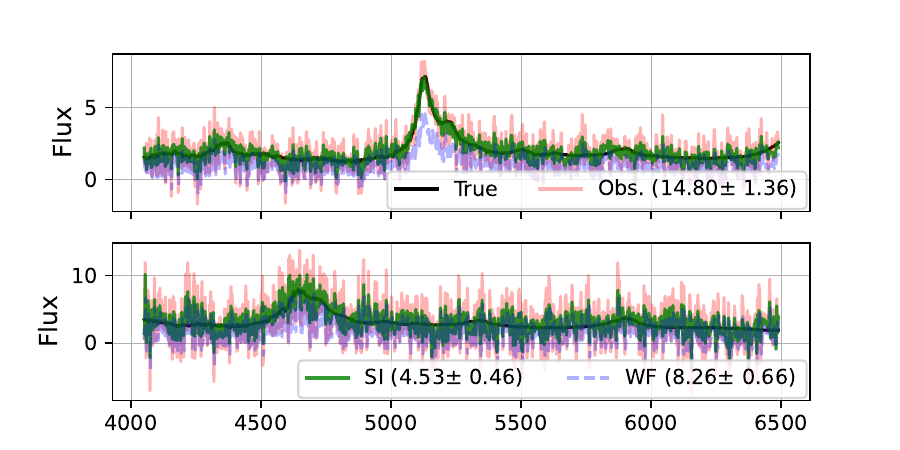}
\caption{\centering
$\mathcal{F}$: High spectral resolution, low SNR
$\Delta \log \lambda  = \frac{1}{5000}\pm 5\%$, SNR $\in [0.5, 2]$}
\label{fig:qsolowsnr_sde}
\end{subfigure}
\hfill
\begin{subfigure}[h]{0.52\linewidth}
\centering
\includegraphics[width=\linewidth]{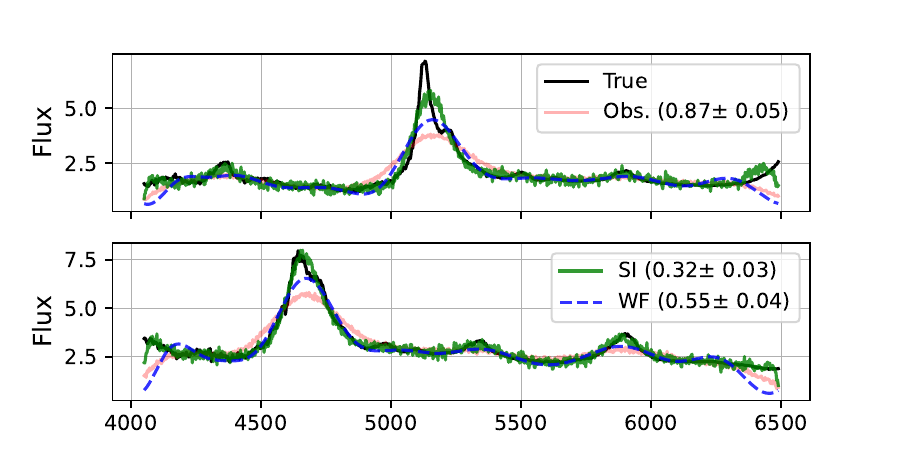}
\caption{\centering
$\mathcal{F}$: Low spectral resolution, high SNR
$\Delta \log \lambda  = \frac{1}{50}\pm 5\%$, SNR $\in [30, 50]$}
\label{fig:qsohighsnr_sde}
\end{subfigure}%
\caption{\rsp{Restored quasar spectra using SDE. The numbers in the legend represent the mean squared error (MSE) averaged over 1000 true spectra.}}
\label{fig:qsos_sde}
\vskip -10pt
\end{figure}

\subsection{Additional Metrics for Restored Performance Comparison}
\label{app:restoration}
\rsp{Table~\ref{tab:restoration_details} reports additional quantitative metrics of imaging sample restoration for all corruption types, including comparisons with DPS and the SI-Oracle model. The SI-Oracle is trained using paired clean–corrupted data with the same SI architecture and serves as a performance upper bound for our method. We find that our method approaches this oracle for Gaussian blurring and random masking, indicating that the lack of clean supervision incurs only a modest penalty in these settings. The gap is more pronounced for random motion blur, suggesting this corruption poses greater challenges for unsupervised recovery.
}

\begin{table}[h!]
\centering
\caption{\rsp{Quantitative comparison of individual-sample restoration quality across corruption types and noise levels with DPS and an oracle SI baseline.}}
\label{tab:restoration_details}
\begin{tabular}{lcccc}
\toprule
\textbf{Forward Model} & \textbf{Metric} & \textbf{Ours} & \textbf{DPS} & \textbf{SI-Oracle} \\
\midrule
\multirow{3}{*}{Random masking ($\rho=0.5, \sigma_n=10^{-6}$)} 
& \makebox[1.1cm][l]{LPIPS}\,$\downarrow$ & 0.00503 & 0.00497 & 0.00443 \\
& \makebox[1.1cm][l]{PSNR}\,$\uparrow$ & 28.97 & 29.4 & 29.4 \\
& \makebox[1.1cm][l]{SSIM}\,$\uparrow$  & 0.931 & 0.935 & 0.935 \\
& \makebox[1.1cm][l]{FID}\,$\downarrow$  & 1.57 & 1.47 & 0.85 \\
\midrule
\multirow{3}{*}{Random masking ($\rho=0.5, \sigma_n=0.1$)} 
& \makebox[1.1cm][l]{LPIPS}\,$\downarrow$ & 0.00650 & 0.00776 & 0.00551 \\
& \makebox[1.1cm][l]{PSNR}\,$\uparrow$ & 27.8 & 26.7 & 28.4 \\
& \makebox[1.1cm][l]{SSIM}\,$\uparrow$ & 0.902 & 0.854 & 0.912 \\
& \makebox[1.1cm][l]{FID}\,$\downarrow$  & 2.78 & 17.54 & 1.07 \\
\midrule
\multirow{3}{*}{Gaussian blur ($\sigma_R=1.0, \sigma_n=0.1$)} 
& \makebox[1.1cm][l]{LPIPS}\,$\downarrow$ & 0.00537 & 0.00949 & 0.00510 \\
& \makebox[1.1cm][l]{PSNR}\,$\uparrow$ & 29.2 & 26.8 & 28.8 \\
& \makebox[1.1cm][l]{SSIM}\,$\uparrow$ & 0.914 & 0.849 & 0.906 \\
& \makebox[1.1cm][l]{FID}\,$\downarrow$  & 6.74 & 29.9 & 1.33 \\
\midrule
\multirow{3}{*}{Gaussian blur ($\sigma_R=1.0, \sigma_n=0.25$)} 
& \makebox[1.1cm][l]{LPIPS}\,$\downarrow$ & 0.0149 & 0.0249 & 0.0107 \\
& \makebox[1.1cm][l]{PSNR}\,$\uparrow$ & 26.5 & 24.1 & 26.1 \\
& \makebox[1.1cm][l]{SSIM}\,$\uparrow$ & 0.852 & 0.744 & 0.841 \\
& \makebox[1.1cm][l]{FID}\,$\downarrow$  & 14.5 & 60.52 & 1.57 \\
\midrule
\multirow{3}{*}{Motion blur ($k=5, \sigma_n=10^{-6}$)} 
& \makebox[1.1cm][l]{LPIPS}\,$\downarrow$ & 0.00692 & 0.00275 & 0.00297 \\
& \makebox[1.1cm][l]{PSNR}\,$\uparrow$ & 28.9 & 33.9 & 32.1 \\
& \makebox[1.1cm][l]{SSIM}\,$\uparrow$ & 0.904 & 0.970 & 0.947 \\
& \makebox[1.1cm][l]{FID}\,$\downarrow$  & 7.7 & 0.98 & 2.47 \\
\midrule
\multirow{3}{*}{Motion blur ($k=5, \sigma_n=0.1$)} 
& \makebox[1.1cm][l]{LPIPS}\,$\downarrow$ & 0.0108 & 0.0120 & 0.00297 \\
& \makebox[1.1cm][l]{PSNR}\,$\uparrow$ & 24.9 & 25.3 & 32.1 \\
& \makebox[1.1cm][l]{SSIM}\,$\uparrow$ & 0.804 & 0.796 & 0.947 \\
& \makebox[1.1cm][l]{FID}\,$\downarrow$  & 21.9 & 50.1 & 2.70 \\
\bottomrule
\end{tabular}
\end{table}

\subsection{Varying Levels of Corruption}
\rsp{In this section, we present additional results for varying the levels of corruption for different forward models used in the main-text.}

\subsubsection{Random Masking}
In Fig. \ref{fig:randommaskvary}, we show additional results for random masking experiment with 25\%, 50\% and 75\% pixels randomly masked. The quality of restored images declines with increasing corruption, but the restored samples are close to original image even for 75\% corruption.

\begin{figure}[ht]
\centering
\begin{subfigure}[b]{\columnwidth}
\includegraphics[width=\linewidth]{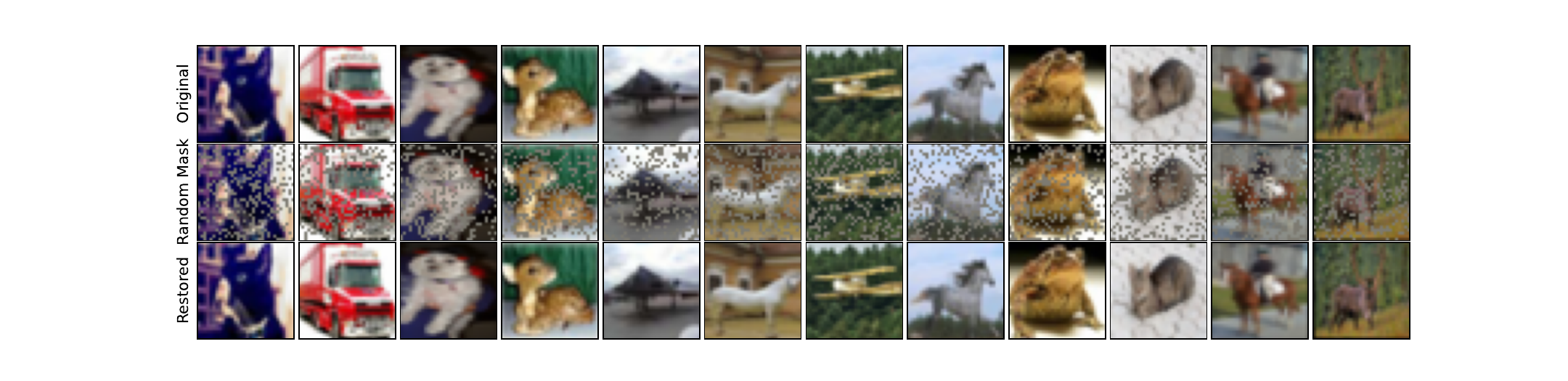}    
\vspace{-20px}
\caption{Random masking with 25\% pixels masked.}
\end{subfigure}
\begin{subfigure}[b]{\columnwidth}
\includegraphics[width=\linewidth]{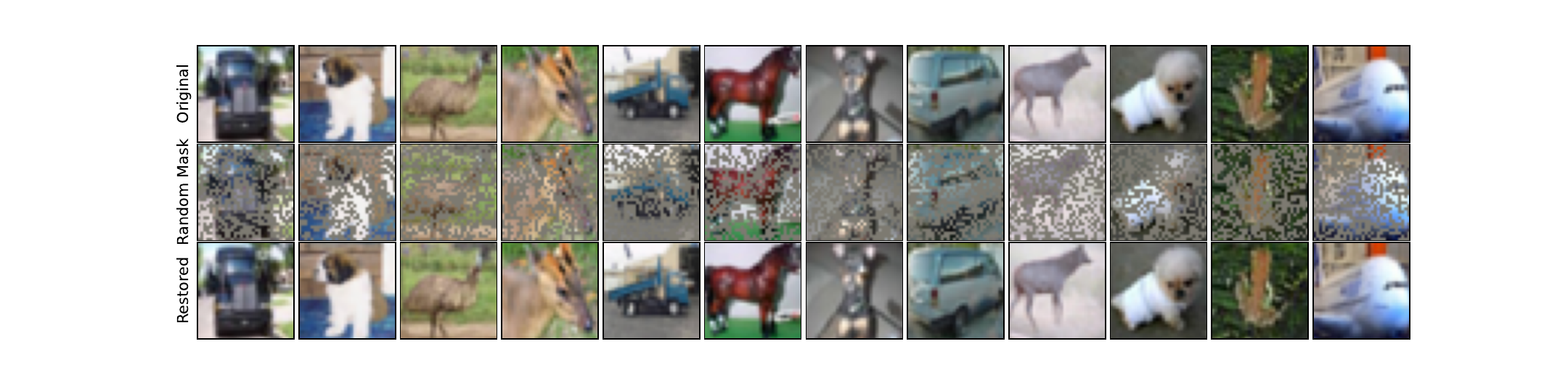}    
\vspace{-20px}
\caption{Random masking with 50\% pixels masked.}
\end{subfigure}
\begin{subfigure}[b]{\columnwidth}
\includegraphics[width=\linewidth]{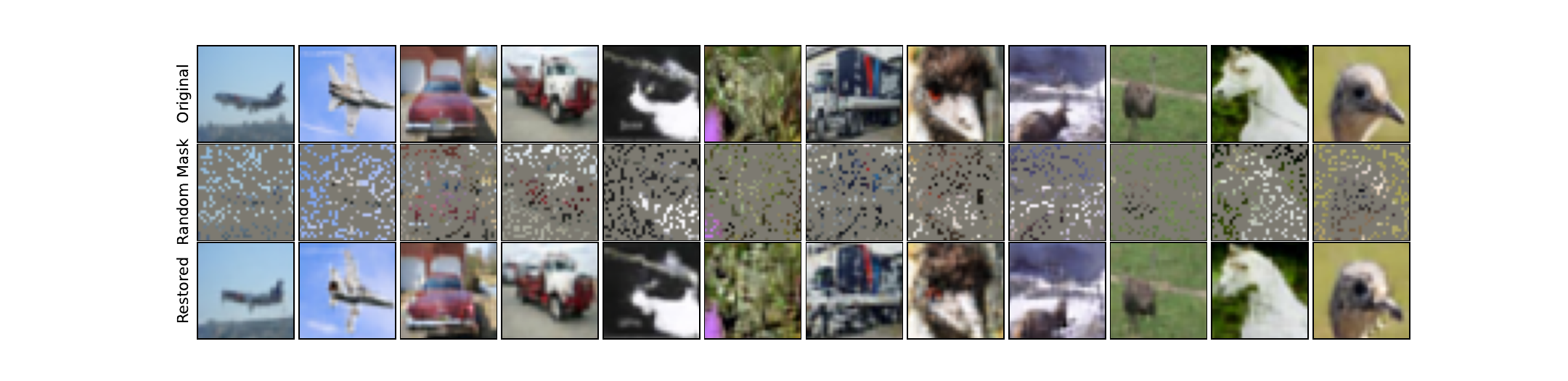}    
\vspace{-20px}
\caption{Random masking with 75\% pixels masked.}
\end{subfigure}
\vspace{-10px}
\caption{Restoring images with SI for varying fractions of masked pixels (levels of corruptions).}
\label{fig:randommaskvary}
\end{figure}

For generative modeling, we  train a new diffusion model on the samples restored with SI. Fig \ref{fig:gen_mask25} shows samples from the model trained on the restored samples of the random masking experiment with 50\% corruption and negligible noise. As reported in the main text, FID of this model is 6.74.

\begin{figure}[ht]
\centering
\includegraphics[width=1.\linewidth]{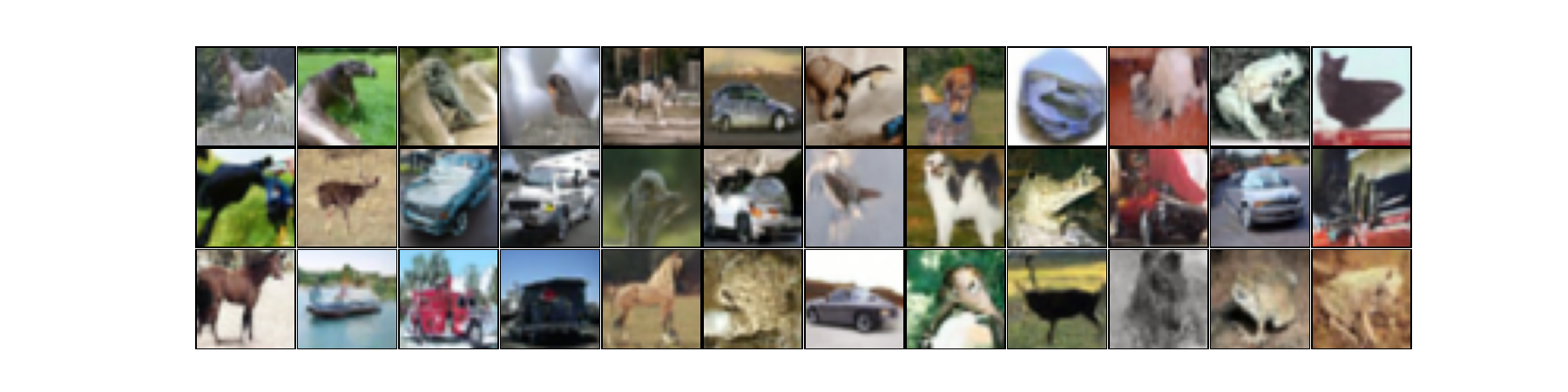}
\vspace{-20px}
\caption{Samples from the diffusion model trained on the restored samples of random masking experiment with 50\% corruption.}
\label{fig:gen_mask25}
\end{figure}

\subsubsection{Motion Blur}

In Fig. \ref{fig:motionblurvary}, we show additional results for the motion blur experiment with increasing size of the motion blur kernel from 5 to 9 pixels. 

\begin{figure}[ht]
\centering
\begin{subfigure}[b]{\columnwidth}
\includegraphics[width=1.\linewidth]{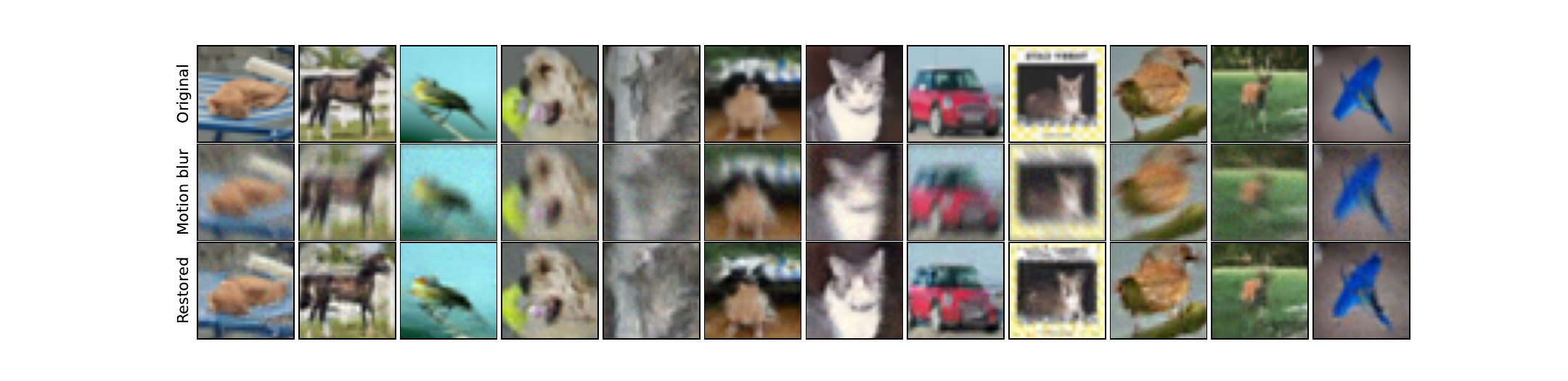}
\vspace{-20px}
\caption{Motion blur with blur kernel of 5 pixels.}
\end{subfigure}
\begin{subfigure}[b]{\columnwidth}
\includegraphics[width=1.\linewidth]{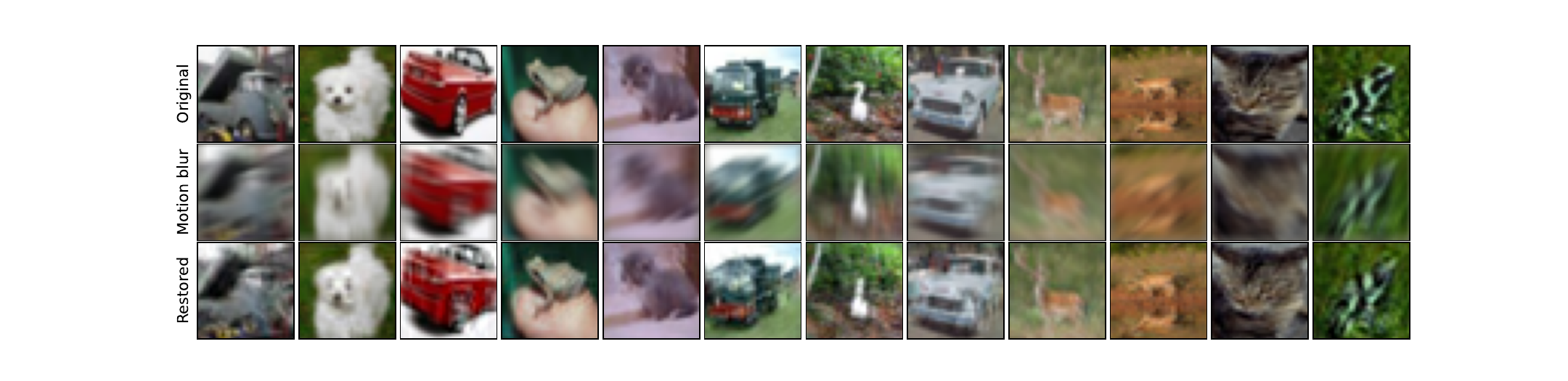}
\vspace{-20px}
\caption{Motion blur with blur kernel of 7 pixels.}
\end{subfigure}
\begin{subfigure}[b]{\columnwidth}
\includegraphics[width=1.\linewidth]{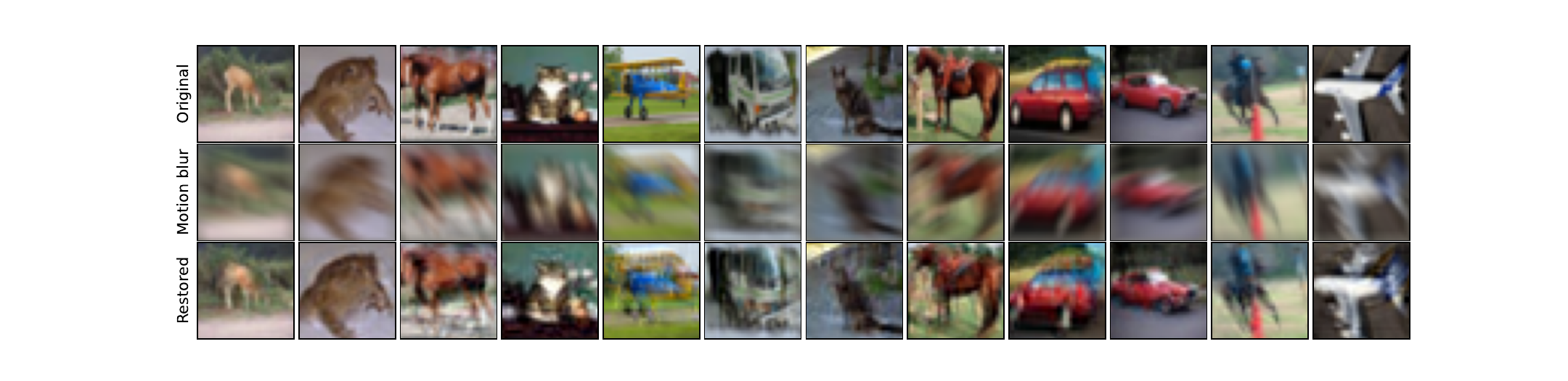}
\vspace{-20px}
\caption{Motion blur with blur kernel of 9 pixels.}
\end{subfigure}
\vspace{-10px}
\caption{Restoring images with SI for varying size of motion blur kernel (levels of corruptions).}
\label{fig:motionblurvary}
\end{figure}

\subsubsection{JPEG Compression}
\label{app:jpeg}
In Fig. \ref{fig:jpegall100}, we show restorations for JPEG corruption for additional images that have been compressed with randomly chosen ratios.
The SI is able to restore samples across a broad range of corruptions. 
\begin{figure}[ht]
\centering
\includegraphics[width=1.\linewidth]{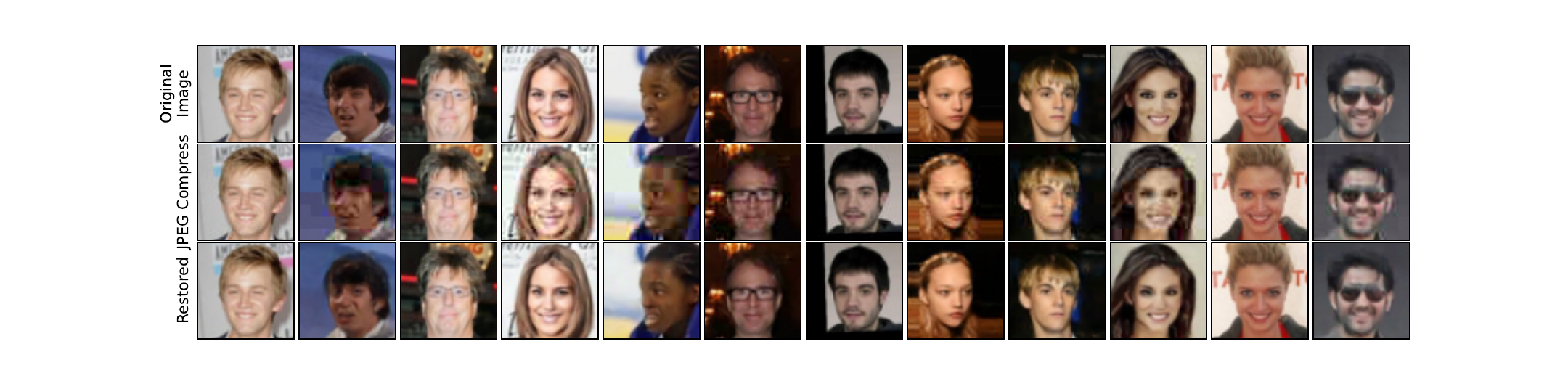}
\vspace{-20px}
\caption{Additional images for JPEG restoration for the model trained on samples with $q~\sim~\mathcal{U}[0.1, 1.]$.}
\label{fig:jpegall100}
\end{figure}

In addition, we consider another setting where we have training samples that are corrupted with $q\sim \mathcal{U}[0.1, 0.5]$, i.e., we never see high quality samples. The results for the trained SI in this setting are shown in Fig. \ref{fig:jpeg_50} and \ref{fig:jpegall50}.
The restoration for low-quality samples is poorer than when SI was trained on some samples with compression ratio of more than 0.5. However, note that the SI remains stable in the extrapolation range, i.e., when restoring sample of $q>0.5$, the interpolant does indeed improve the restored image even though it has never seen samples in this regime. 

\begin{figure}[ht]
\centering
\includegraphics[width=0.8\linewidth]{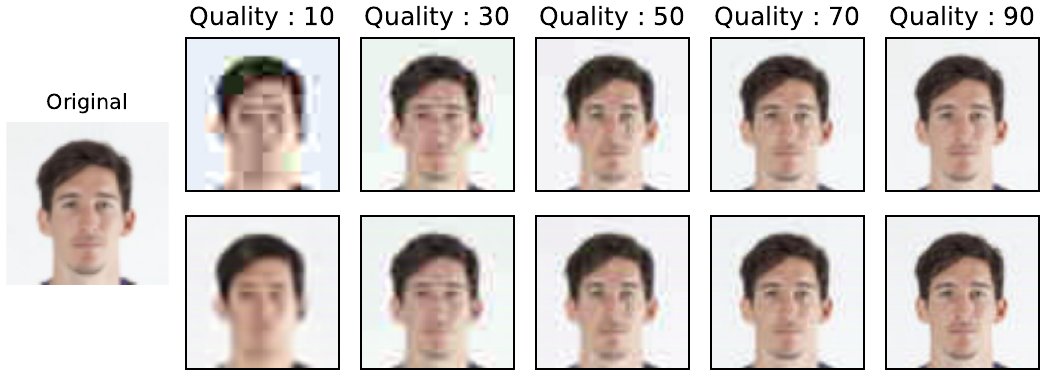}
\caption{
$\mathcal{F}$: JPEG compression + noise ($\sigma_n=0.01$): results for different compression levels (Top: Corrupted; Bottom: Restored). Model is trained only on samples with $q\sim \mathcal{U}[0.1, 0.5]$. Results for higher qualities are in extrapolation regime.}
\label{fig:jpeg_50}
\end{figure}

\begin{figure}[ht]
\centering
\includegraphics[width=1.\linewidth]{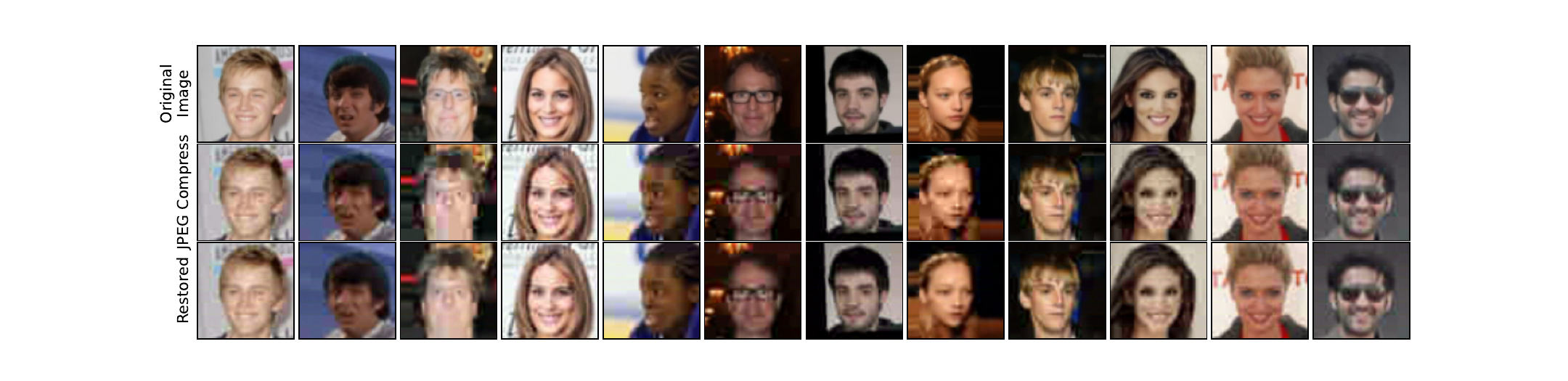}
\vspace{-20px}
\caption{Additional images for JPEG restoration for the model trained on samples with $q~\sim~\mathcal{U}[0.1, 0.5]$ only.}
\label{fig:jpegall50}
\end{figure}

\subsection{Non-Gaussian Noise: Gaussian Blurring with Poisson noise}
\label{app:blurpoisson}
\rsp{In the main text, we considered only Gaussian additive noise to different forward models.}
In this section, we present additional results for when the forward map is blurring with a Gaussian kernel followed by adding Poisson noise. We add noise with two different levels, $\lambda_n=0.1$ and $0.5$. The restored images here  demonstrate that our approach also works in the non-Gaussian noise setting. 
\begin{figure}[ht]
\centering
\begin{subfigure}[b]{\columnwidth}
\includegraphics[width=1.\linewidth]{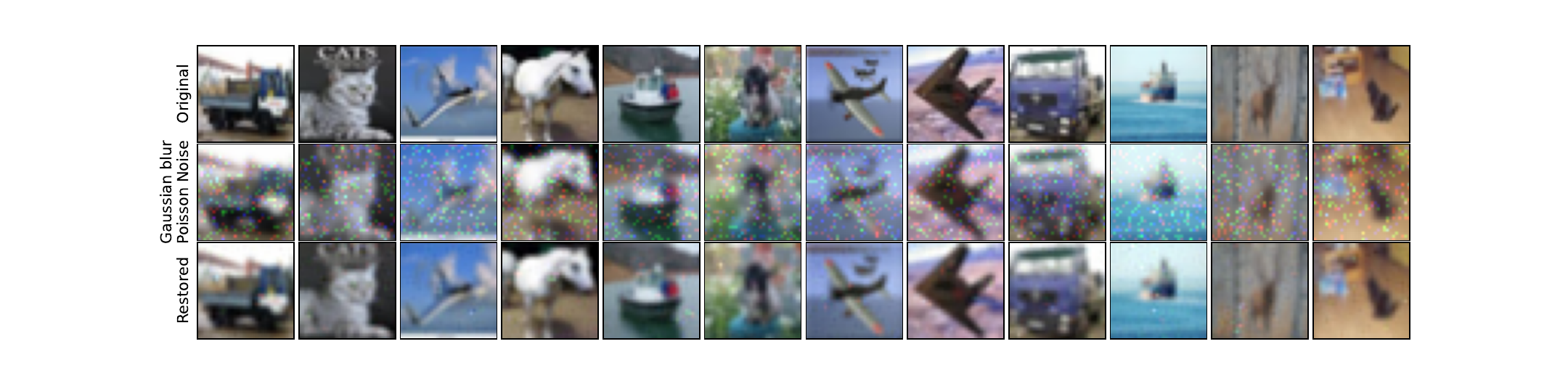}
\vspace{-20px}
\caption{Gaussian blur $(\sigma_R=1)$ with Poisson noise $\lambda_n=0.1$.}
\end{subfigure}
\begin{subfigure}[b]{\columnwidth}
\includegraphics[width=1.\linewidth]{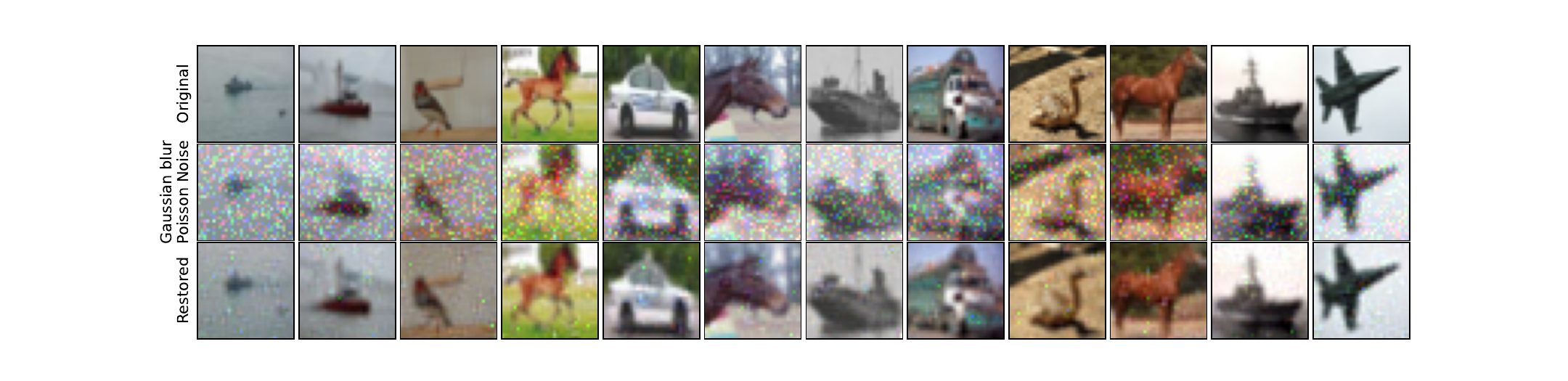}
\vspace{-20px}
\caption{Gaussian blur $(\sigma_R=1)$ with Poisson noise $\lambda_n=0.25$.}
\end{subfigure}
\vspace{-10px}
\caption{Restoring images with SI for Gaussian blurring with Poisson noise for different noise levels.}
\label{fig:blur_poisson_vary}
\end{figure}

\end{document}